\theoremstyle{plain}
\newtheorem{theorem}{Theorem}[section]
\newtheorem{lemma}[theorem]{Lemma}
\newtheorem{corollary}[theorem]{Corollary}
\theoremstyle{definition}
\newtheorem{definition}[theorem]{Definition}
\theoremstyle{remark}
\newtheorem{remark}[theorem]{Remark}
\pgfplotsset{compat=1.16}
\patchcmd{\algorithmic}{\addtolength{\ALC@tlm}{\leftmargin} }{\addtolength{\ALC@tlm}{\leftmargin}}{}{}
\newcommand\algorithmicprocedure{\textbf{procedure}}
\newcommand{\algorithmicendprocedure}{\algorithmicend\ \algorithmicprocedure}
\newcommand\PROCEDURE[3][default]{%
  \ALC@it
  \algorithmicprocedure\ \textsc{#2}(#3)%
  \ALC@com{#1}%
  \begin{ALC@prc}%
}
\newcommand\ENDPROCEDURE{%
  \end{ALC@prc}%
  \ifthenelse{\boolean{ALC@noend}}{}{%
    \ALC@it\algorithmicendprocedure
  }%
}
\newenvironment{ALC@prc}{\begin{ALC@g}}{\end{ALC@g}}
\g@addto@macro\normalsize{%
  \setlength\abovedisplayskip{4pt}
  \setlength\belowdisplayskip{4pt}
  \setlength\abovedisplayshortskip{4pt}
  \setlength\belowdisplayshortskip{4pt}
}
\newcommand{\E}{\mathbb{E}}
\newcommand{\1}[1]{\mathbbm{1}{\left\{#1\right\}}}
\renewcommand{\P}{\mathbb{P}}
\renewcommand{\O}{O}
\newcommand{\hatbm}[1]{\hat{\bm{#1}}}
\newcommand{\ceil}[1]{\left\lceil#1\right\rceil}
\newcommand{\floor}[1]{\left\lfloor#1\right\rfloor}
\newcommand{\round}[1]{\left\lceil#1\right\rfloor}
\newcommand{\abs}[1]{\left\lvert#1\right\rvert}
\DeclareMathOperator*{\ue}{\normalfont\text{UE}}
\DeclareMathOperator*{\ie}{\normalfont\text{IE}}
\renewcommand{\ge}{\geqslant}
\renewcommand{\geq}{\geqslant}
\renewcommand{\le}{\leqslant}
\DeclareMathOperator*{\argmin}{arg\,min}
\DeclareMathOperator*{\KL}{KL}
\DeclareMathOperator*{\kl}{kl}
\DeclareMathOperator*{\Oracle}{Oracle}
\DeclareMathOperator*{\ERT}{\mathbb{E}[\normalfont \text{Reg}(T)]}
\newsavebox\myboxA
\newsavebox\myboxB
\newlength\mylenA
\newcommand*\xoverline[2][0.75]{%
    \sbox{\myboxA}{$\m@th#2$}%
    \setbox\myboxB\null
    \ht\myboxB=\ht\myboxA%
    \dp\myboxB=\dp\myboxA%
    \wd\myboxB=#1\wd\myboxA
    \sbox\myboxB{$\m@th\overline{\copy\myboxB}$}
    \setlength\mylenA{\the\wd\myboxA}
    \addtolength\mylenA{-\the\wd\myboxB}%
    \ifdim\wd\myboxB<\wd\myboxA%
       \rlap{\hskip 0.5\mylenA\usebox\myboxB}{\usebox\myboxA}%
    \else
        \hskip -0.5\mylenA\rlap{\usebox\myboxA}{\hskip 0.5\mylenA\usebox\myboxB}%
    \fi}
\newcommand\l@procedure{\@dottedtocline{1}{1.5em}{2.3em}}\makeatother
\icmltitlerunning{Multiple-Play Stochastic Bandits with Shareable Finite-Capacity Arms}
\begin{document}

\twocolumn[
	\icmltitle{Multiple-Play Stochastic Bandits with Shareable Finite-Capacity Arms}



	\icmlsetsymbol{equal}{*}

	\begin{icmlauthorlist}
		\icmlauthor{Xuchuang Wang}{cuhk}
		\icmlauthor{Hong Xie}{cqu}
		\icmlauthor{John C.S. Lui}{cuhk}
	\end{icmlauthorlist}

	\icmlaffiliation{cuhk}{Department of Computer Science \& Engineering,
		The Chinese University of Hong Kong}
	\icmlaffiliation{cqu}{College of Computer Science, Chongqing University, China}

	\icmlcorrespondingauthor{Hong Xie}{xiehong2018@foxmail.com}

	\icmlkeywords{Multi-armed Bandits, Machine Learning, ICML}

	\vskip 0.3in
]



\printAffiliationsAndNotice{}  

\begin{abstract}
	We generalize the multiple-play multi-armed bandits (MP-MAB) problem with a \emph{shareable arms} setting,
	in which several plays can share the same arm.
	Furthermore, each shareable arm has a finite reward
	capacity and a ``per-load'' reward distribution, both of which are unknown to the learner.
	The reward from a shareable arm is load-dependent, which is the ``per-load'' reward
	multiplying either the number of plays pulling the arm,
	or its reward capacity when the number of plays exceeds the capacity limit.
	When the ``per-load'' reward follows a Gaussian distribution,
	we prove a sample complexity lower bound of learning the capacity
	from load-dependent rewards
	and	also a regret lower bound of this new MP-MAB problem.
	We devise a capacity estimator whose sample complexity upper bound matches the lower bound
	in terms of reward means and capacities.
	We also propose an online learning algorithm to address the problem and
	prove its regret upper bound.
	This regret upper bound's first term is the same as regret lower bound's,
	and its second and third terms also evidently correspond to lower bound's.
	Extensive experiments validate our algorithm's performance and
	also its gain in 5G \& 4G base station selection.

\end{abstract}


\section{Introduction}




Multi-armed bandits (MAB)~\citep{lai1985asymptotically,lattimore2020bandit} is a classic sequential decision making problem.
In the canonical MAB problem,
a learner sequentially pulls one arm from $K \in \mathbb{N}_+$ arms per time slot
and the pulled arm generates a stochastic reward whose mean is unknown to the learner.
To maximize the accumulative reward, the learner needs to either optimistically choose the arm with high uncertainty
in reward (exploration) or myopically select the one with high empirical mean reward (exploitation).
Multiple-play multi-armed bandits (MP-MAB)~\citep{anantharam1987asymptotically} generalizes the canonical MAB in that
the learner can select $N \in \{2,\ldots, K-1\}$ different arms out of $K$ arms in each time slot.

To model many real world applications,
one often needs to extend the simple MP-MAB
{where each arm can be assigned at most one play in each time slot.}
In this work, we consider arms with a \emph{shareable} nature:
an arm can be \emph{shared by several plays} in each time slot.
For example, consider a cognitive radio network~\citep{cai2018online} consisting of $K$ channels (arms) and $N$ secondary users (plays).
These so-called secondary users collaborate with each other and follow the rules set by the operator (learner).
The secondary users can transmit data via channels that are not occupied by primary users.
Each channel is available with a certain probability which is unknown to the operator.
The operator needs to repeatedly allocate $N$ secondary users to these $K$ channels, observe the availability of these  selected channels, and maximize the total amount of information transmission.
Since some of these channels may have high quality (bandwidth) that
can support the traffic demand of more than one secondary user, therefore, the operator can assign several secondary users to share a high quality channel,
especially when the channel also has a high availability rate.
Another application of our generalized MP-MAB is mobile edge computing, where each edge server (arm) may have multiple computing units (e.g., CPU cores), and thus can be shared by multiple users (plays).
A third application is in online advertisement placement, where one profitable advertisement (arm) may appear (be shared) at several different positions (plays) on a website.
In above examples, the learner can assign several plays to share a good arm.
Otherwise, the learner would not be able to utilize these arms' reward capacities and fail to maximize the total reward.




In this paper, we introduce a new bandit model in formalizing the \emph{shareable arms} setting
such that ``several plays can share the same arm''.
In our model, each arm $k$ is associated with a {``per-load'' reward} random variable $X_k$ and a finite reward capacity $m_k\in\mathbb{N}_+$,
both of which are \emph{unknown} to the learner.
An arm's reward is \emph{load-dependent:}
when $a_k$ plays are assigned to share the arm $k$, the reward is $\min\{a_k,m_k\}X_k$.
That is, if the number of plays $a_k$ is less than capacity $m_k$, the reward is linearly scaled as $a_kX_k$;
otherwise, it would be $m_kX_k$.
Rewards of different arms are independent.
In each time slot, the learner assigns these $N$ plays to $K$ arms according to an allocation (\emph{action}) in which each arm can be shared by several plays, and observes rewards from each selected arms separately (\emph{semi-bandit} feedback).
Both the reward $X_k$ and the capacity $m_k$ are not directly observable from the scaled feedback.
We call this problem as \emph{multiple-play multi-armed bandits with shareable arms} (\texttt{MP-MAB-SA}).
\texttt{MP-MAB-SA} uses the metric
\emph{regret}, i.e., the accumulative loss when comparing with an oracle which assigns its plays according to the optimal allocation,
and we aim to minimize the regret.

To illustrate the paper's results, we bring forward some notations here
and their formal definitions are deferred to Section~\ref{sec:model}.
Assume arm's ``per-load'' reward means are in a descending order.
Then, the optimal \(N\)-play allocation (action) is
assigning \(m_1\) plays to arm \(1\),
and \(m_2\) plays to arm \(2\), and so on, until there is no play left, that is,
\(
(m_1, m_2,\dots, m_{L-1}, \bar{m}_L, 0, \dots, 0),
\)
where \(L(\le N)\) is the least favored arm in the optimal action,
the number of plays pulling arm \(L\) is \(\bar{m}_L \coloneqq N-\sum_{k=1}^{L-1} m_k\)
--- the remaining plays after exploiting top \(L-1\) arms,
and \(\bar{m}_L\le m_L\).


We first examine the difficulty of learning capacity \(m_k\) from load-dependent rewards.
This task is different from common estimation tasks because the reward
samples --- depending on the number of plays on the arm --- are heterogeneous,
i.e., from different distributions.
We show that given the ``per-load'' reward is Gaussian, i.e., \(X_k\sim\mathcal{N}(\mu_k,1/2)\),
the task's sample complexity lower bound is \(\Omega(m_k^2/\mu_k^2\log (1/\delta))\):
to accurately learn an arm's capacity \(m_k\) with confidence \(1-\delta\),
one needs at least this number of explorations;
no matter how these explorations are conducted. (Section~\ref{subsec:sample-complexity-lower-bound})

We then study \texttt{MP-MAB-SA}'s regret lower bound.
Under consistent policies
and Gaussian ``per-load'' rewards,
the regret lower bound is \(
\Omega( (\sum_{k=L+1}^K {\Delta_{L,k}}/{\kl(\mu_k, \mu_L)} +
\sum_{k=1}^{L-1} {m_k^2}/{\mu_k^2} + { m_L^2}/{(m_L-\bar{m}_L+1)^2 \mu_L^2 } ) \log T),
\)
where \(\kl\) represents KL-divergence between two Gaussian distributions with the same variance,
and $\Delta_{i,j}\coloneqq \mu_i - \mu_j$ is the ``per-load'' reward mean difference between arm \(i\) and \(j\).
This lower bound clearly decomposes the cost in addressing \texttt{MP-MAB-SA}:
the first term is for distinguishing suboptimal arms, the second term is
for estimating top \(L-1\) optimal arms' reward capacities,
and the third term is for validating that arm \(L\)'s capacity \(m_L\) is no less than \(\bar{m}_L\).
(Section~\ref{subsec:regret-lower-bound})

We devise a capacity estimator based on uniform confidence intervals (UCI).
When the ``per-load'' rewards are either \([0,1]\) supported or Gaussian,
the estimator's sample complexity for accurately estimating the capacity
with a probability of at least $1-\delta$ is $\O((m_k^2/\mu_k^2)\log(1/\delta))$,
which matches the sample complexity lower bound in terms of reward mean \(\mu_k\) and capacity \(m_k\).
(Section~\ref{sec:learning_sharing_capacity})

We design the Orchestrative Exploration algorithm (\texttt{OrchExplore})
to address the \texttt{MP-MAB-SA} problem.
Its two procedures are carefully designed to reduce the regret,
implement our capacity estimator, and also address
the exploration-exploitation trade-off.
One procedure utilizes a parsimonious exploration idea:
in each time slot, at most one play is assigned to explore
while other plays are exploiting.
This idea could be traced back to \citet{anantharam1987asymptotically}, and
was recently made in~\citet{combes2015learning} for learning-to-rank algorithms
and also utilized in~\citet{wang2020optimal} for distributed bandits.
(Section~\ref{sec:opt_algorithm})

We prove that our \texttt{OrchExplore} algorithm achieves the regret
$\O((\sum_{k=L+1}^K \Delta_{L,k}/\kl(\mu_k, \mu_L) + \sum_{k=1}^{L-1} m_k^2/\mu_k^2 + {m_L^2}/{(m_L - \bar{m}_L+ 1)^2 \mu_L^2} )\log T)$,
where
\(\kl\) represents the KL-divergence between
two Bernoulli distributions in the \([0,1]\) supported case
or two Gaussian distributions with the same variances in the Gaussian case.
Its first term neatly matches the regret lower bound's first term
and its second and third terms also corresponds to the lower bound's.
(Section~\ref{sec:orchexplore_analysis})

We also conduct numeral simulations to validate the superior performance of \texttt{OrchExplore}
compared with other MAB algorithms (Section~\ref{sec:simulation})
and apply our algorithms to a 5G \& 4G base station selection problem (Appendix~\ref{app:real_world_simulation}).

\section{Related Works}
Since the seminal work by~\citet{lai1985asymptotically}, multi-armed bandits has been well studied in literature, especially in statistics and reinforcement learning
(cf. \citep{bubeck2012regret,slivkins2019introduction,lattimore2020bandit}).
MAB was then generalized to MP-MAB~\citep{anantharam1987asymptotically,gai2012combinatorial,chen2013combinatorial,kveton2015combinatorial,komiyama_optimal_2015}.
\citet{anantharam1987asymptotically} first studied MP-MAB and provided its asymptotically optimal regret analysis; \citet{gai2012combinatorial} considered a UCB-style algorithm for network applications; \citet{chen2013combinatorial} showed that CUCB can achieve a better regret bound than the one showed by~\citet{gai2012combinatorial}; \citet{komiyama_optimal_2015} proved that Thompson sampling achieved the optimal regret.
Our paper further generalizes stochastic MP-MAB so that it allows several plays to share the same arm.

There are many extensions of MP-MAB. The combinatorial bandits is the most popular one~\citep{cesa2012combinatorial,chen2013combinatorial,chen2016combinatorial,kveton2014matroid,gai2012combinatorial}) where
combinatorial action space and objective functions with some mild assumptions
were considered.
Another direction is to specialize MP-MAB to some applications
such as online website advertising, e.g., the cascade bandits~\citep{combes2015learning,kveton2015combinatorial,wen2017online}, multiple-play bandits with position-based click model~\citep{lagree2016multiple,komiyama2017position}, etc.
Recently, a new line of works considered the decentralized MP-MAB (multi-\emph{player} MAB)~\citep{anandkumar2011distributed,rosenski2016multi,bistritz2018distributed,wang2020optimal,magesh2021decentralized}). In this setting, players either cannot communicate with each other or their communication is highly restrictive, which adds difficulty in designing algorithms.
A decentralized version of \texttt{MP-MAB-SA} was also studied by the authors~\cite{wang2022multi}.
\section{Model Formulation}\label{sec:model}



Consider $K \in \mathbb{N}_+$ arms
indexed by $[K] \coloneqq \{1,2,\dots, K\}$.
Each arm $k\in [K]$ is characterized by $(m_k, X_k)$,
where $m_k \in \{1, \dots, N\}$ and $X_k$ is a random variable
with support in \([0,1]\), or it follows a Gaussian distribution
with the same variance \(\sigma^2\le 1/2\) for all arms.
Here, the integer $m_k$ models the finite \textit{reward capacity} of arm $k$
(mapping to real world applications is presented in Appendix~\ref{appsub:motive_m}).
The $X_k$ models the \textit{``per-load'' stochastic reward} of arm $k$,
whose mean is denoted as
$
    \mu_k
    \coloneqq
    \mathbb{E} [ X_k ].
$
We assume that the reward mean $\mu_k$ are distinct
and without loss of generality, they are descending ordered as
$
    \mu_1 > \mu_2 >\dots >\mu_K.
$
This ordering is unknown to the learner.


Consider $T \in \mathbb{N}_+$ time slots.
At each time slot
$t \le T$,
the learner assigns $N \in \mathbb{N}_+$ plays to $K$ arms \((N<K)\).
Let $a_{k,t} \in \{0, 1, \ldots,N\}$ denote the number of
plays assigned to arm $k$ in time slot $t$.
All $N$ plays are assigned in each time slot,
i.e., $\sum^K_{k=1} a_{k,t} = N$.
Denote the action in time slot $t$ as
$
    \bm{a}_t \coloneqq (a_{1,t}, a_{2,t},\ldots,a_{K,t}).
$
The action space $\mathscr{A}$ is
\begin{equation}
    \label{eq:action_space}
    \mathscr{A}
    \coloneqq
    \Big\{
    (a_1,a_2,\ldots,a_K) \in \mathbb{N}^K :
    \sum\nolimits_{k \in [K] } a_k = N
    \Big\}\!.
\end{equation}
At the end of time slot $t$,
the learner receives a reward $R_k(a_{k,t})$
from assigning $a_{k,t}$ plays to arm $k$,
which is \emph{independent} across arms and time slots.
To capture the reward capacity's nature of applications like edge computing and
cognitive radio network (details are in Appendix~\ref{appsub:motive_R}), we consider the following \emph{load-dependent} reward $R_k(a_{k,t})$:
\begin{equation}\label{eq:arm_level_feedback}
    R_k(a_{k,t})\coloneqq \min\{a_{k,t},m_k\}\cdot X_k.
\end{equation}
Eq.(\ref{eq:arm_level_feedback}) captures the threshold property
of the reward capacity:
if $a_{k,t} < m_k$, the load-dependent reward random variable is $a_{k,t} X_k$,
and if $a_{k,t} \ge m_k$, it is $m_k X_k$.
As a counterpart to ``per-load'' reward mean \(\mu_k\), we name \(m_k\mu_k\) as the \emph{``full-load'' reward mean}.
The multiplier $\min\{a_{k,t},m_k\}$ represents how many capacities of arm $k$ are utilized
by $a_{k,t}$ plays, and has no restriction on how these capacities are distributed among plays.
For any action $\bm{a}_t \in \mathscr{A}$,
the expected total reward to the learner is
\begin{equation*}
    \begin{split}
        f(\bm{a}_t)
        {\coloneqq}  \mathbb{E}  \Big[\!\sum\nolimits_{k\in [K]}\!R_k(a_{k,t}) \!\Big]
        {=} \sum\nolimits_{k\in [K]}\!\min\{a_{k,t}, m_k\}\mu_k.
    \end{split}
\end{equation*}
The learner only observes rewards from arms with at least one play.
She neither knows the capacity $m_k$,
nor whether the number of assigned plays $a_{k,t}$
is greater than $m_k$ or not.

The optimal action for maximizing the expected reward $f(\bm{a}_t)$ is
to assign $m_1$ plays to arm $1$, $m_2$ plays to arm $2$,
and so on, until there is no play left.
Let $\bm{a}^*$ denote this optimal action and it can be expressed as
\begin{equation}
    \label{eq:optimal_action}
    \bm{a}^*
    \coloneqq
    \Big(
    \!m_1, \ldots, m_{L-1}, N - \sum\nolimits_{k=1}^{L-1}m_k, 0, \ldots , 0
    \!\Big),
\end{equation}
where $L$ denotes the smallest number of top arms covering $N$ plays and it can be expressed as
\begin{equation}\label{eq:critical_top_arms}
    L \coloneqq\min\Big\{n:\sum\nolimits_{k=1}^n m_k \ge N\Big\}.
\end{equation}
These \(L\) arms are called \emph{optimal arms}, while the rest are called \emph{suboptimal arms},
and arm \(L\) is called \emph{least favored optimal arm}.
We denote \(\bar{m}_L \coloneqq N - \sum\nolimits_{k=1}^{L-1}m_k\) as the number of plays pulling arm \(L\)
in the optimal action.
The optimal action $\bm{a}^*$ is unknown to the learner.
We define regret as the learner's total loss when comparing with $\bm{a}^\ast$,
\begin{equation*}
    \text{Reg}(T)
    \coloneqq
    \sum\nolimits_{t=1}^T
    \left(f(\bm{a}^*)-f(\bm{a}_t)\right),
\end{equation*}
Our objective is designing algorithms to minimize the expected regret $\ERT$.
\section{Fundamental Limits of \texttt{MP-MAB-SA} }\label{sec:lower_bound}

In this section, we consider the learning limits of the \texttt{MP-MAB-SA} problem when the ``per-load''
rewards are Gaussian.
We first focus on the capacity learning task
and rigorously prove its sample complexity lower bound.
Then, relying on this new sample complexity result, we prove a nontrivial lower bound on the regret of \texttt{MP-MAB-SA}.

Except that the sample complexity and regret lower bounds in this section are only for the Gaussian rewards,
all other theoretical results in the paper apply for
both the \([0,1]\) supported random reward
and
the Gaussian reward.



\subsection{Sample Complexity Lower Bound}\label{subsec:sample-complexity-lower-bound}

The challenges of learning capacity \(m_k\) lie in the load-dependent reward feedback (Eq.(\ref{eq:arm_level_feedback})) and heterogeneous explorations.
As the feedback depends on the random variable \(X_k\) multiplying the uncertain factor \(\min\{a_{k,t}, m_k\}\), one cannot easily discern whether the number of plays \(a_{k,t}\) is greater than the capacity \(m_k\) or not, let alone the capacity \(m_k\).
Furthermore, the shareable arm setting allows any number of plays to pull an arm
--- heterogeneous explorations, which further complicates the learning task.
We show that the task can be reduced to hypotheses testing,
which is a key step in deriving the lower bound.




\begin{theorem}[Sample Complexity Minimax Lower Bound]\label{thm:sample_lower_bound}
    Assume arm \(k\)'s ``per-load'' reward \(X_k\) follows the Gaussian distribution \(\mathcal{N}(\mu_k, \sigma_k^2)\),
    where \(\sigma_k^2\) is the variance, and that \(\mu_k^2/m_k^2\sigma_k^2\ge 2\).
    If the exploration times\footnote{One exploration can have any number of plays pulling the same arm.} \(n\) of arm \(k\) is less than \(({\sigma_k^2 m_k^2}/{\mu_k^2})\log\left( 1/{4\delta} \right),\)
    then
    the probability of falsely estimating the capacity is no less than \(\delta\), or formally, \[
        \P\left(\hat{m}_{k} \neq m_k\vert n \le ({\sigma_k^2 m_k^2}/{\mu_k^2})\log\left( 1/{4\delta} \right)\right)
        \ge \delta,
    \]
    where \(\hat{m}_k\) is any possible estimator that one can design.

    Similar, we also have a sample complexity lower bound for identifying whether an arm's unknown capacity \(m_k\)
    is no less than the integer \(d(\ge 2)\) or not.
    Assume that \((m_k-d+1)^2\mu_k^2/(m_k^2\sigma_k^2) \ge 2\log(N/(d-1))\).
    If the exploration times \(n\) of arm \(k\) is less than \[({\sigma_k^2 m_k^2}/{(m_k-d+1)^2\mu_k^2})\log\left( 1/{4\delta} \right),\]
    then the probability of falsely identifying whether the capacity \(m_k\) is greater than \(d\) or not is no less than \(\delta\).
\end{theorem}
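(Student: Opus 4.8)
The plan is to reduce capacity identification to a binary hypothesis test between two \texttt{MP-MAB-SA} instances that are identical except on arm \(k\), and then run the standard change‑of‑measure machinery (a divergence–decomposition identity together with the Bretagnolle–Huber inequality). Since the first claim is exactly the second with \(d=m_k\) (distinguishing capacity \(m_k\) from \(m_k-1\)), I would prove the second claim and specialize.

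First I would construct the hard pair. Let instance \(\mathcal I\) give arm \(k\) capacity \(m_k\) and per‑load reward \(\mathcal N(\mu_k,\sigma_k^2)\), and let instance \(\mathcal I'\) give arm \(k\) capacity \(d-1\) (legitimate since \(d\ge 2\)) and per‑load reward \(\mathcal N\!\big(\tfrac{m_k}{d-1}\mu_k,\ \tfrac{m_k^2}{(d-1)^2}\sigma_k^2\big)\), with all other arms unchanged. The rescaling is chosen so that the two instances have the \emph{same} full‑load reward law \(\mathcal N(m_k\mu_k,m_k^2\sigma_k^2)\); hence any exploration that assigns \(a\ge m_k\) plays to arm \(k\) is completely uninformative about which instance is in force. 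Because the answers to ``is \(m_k\ge d\)?'' (and the true capacities) differ between \(\mathcal I\) and \(\mathcal I'\), any procedure must err on at least one of them whenever its decision distributions are close.

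Next I would bound, uniformly in the number \(a\) of plays assigned to arm \(k\) in a single exploration, the one‑sample divergence \(\mathrm{KL}(P_{\mathcal I,a}\Vert P_{\mathcal I',a})\) between the resulting load‑dependent reward laws. It is \(0\) for \(a\ge m_k\). For \(a\le d-1\), both laws are \(a\) times a Gaussian, so by scale‑invariance of KL the divergence equals \(\mathrm{KL}\big(\mathcal N(\mu_k,\sigma_k^2)\,\Vert\,\mathcal N(\tfrac{m_k}{d-1}\mu_k,\tfrac{m_k^2}{(d-1)^2}\sigma_k^2)\big)\) regardless of \(a\), which the Gaussian formula evaluates to \(\tfrac{(m_k-d+1)^2\mu_k^2}{2m_k^2\sigma_k^2}+\log\tfrac{m_k}{d-1}+\tfrac{(d-1)^2}{2m_k^2}-\tfrac12\). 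For \(d-1<a<m_k\) it evaluates to \(\tfrac{(m_k-a)^2\mu_k^2}{2m_k^2\sigma_k^2}+\log\tfrac{m_k}{a}+\tfrac{a^2}{2m_k^2}-\tfrac12\), and since \(m_k-a<m_k-d+1\) and \(a<m_k\) this is again at most \(\tfrac{(m_k-d+1)^2\mu_k^2}{2m_k^2\sigma_k^2}+\log\tfrac{m_k}{d-1}\). As \(d-1\le m_k\le N\) forces the logarithmic term below \(\log\tfrac{N}{d-1}\), the standing hypothesis \((m_k-d+1)^2\mu_k^2/(m_k^2\sigma_k^2)\ge 2\log(N/(d-1))\) yields \(\max_a\mathrm{KL}(P_{\mathcal I,a}\Vert P_{\mathcal I',a})\le (m_k-d+1)^2\mu_k^2/(m_k^2\sigma_k^2)\). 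When \(d=m_k\) the logarithmic correction is just \(\log\tfrac{m_k}{m_k-1}+\tfrac{(m_k-1)^2}{2m_k^2}-\tfrac12\le\tfrac1{m_k-1}\le 1\), which is dominated by \(\tfrac{\mu_k^2}{2m_k^2\sigma_k^2}\) under the milder hypothesis \(\mu_k^2/(m_k^2\sigma_k^2)\ge 2\), giving \(\max_a\mathrm{KL}\le\mu_k^2/(m_k^2\sigma_k^2)\).

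Finally I would assemble the bound. Writing \(N_a\) for the number of explorations that assign \(a\) plays to arm \(k\) (so \(\sum_a N_a=n\)), the chain rule for KL of the observation sequences gives, for any adaptive rule, \(\mathrm{KL}(\P_{\mathcal I}^{(n)}\Vert\P_{\mathcal I'}^{(n)})=\sum_{a}\E_{\mathcal I}[N_a]\,\mathrm{KL}(P_{\mathcal I,a}\Vert P_{\mathcal I',a})\le n\cdot\max_a\mathrm{KL}(P_{\mathcal I,a}\Vert P_{\mathcal I',a})\). Taking \(A\) to be the event that the procedure outputs ``\(m_k<d\)'' (resp.\ ``\(\hat m_k\ne m_k\)''), Bretagnolle–Huber gives \(\mathrm{err}_{\mathcal I}+\mathrm{err}_{\mathcal I'}\ge\P_{\mathcal I}(A)+\P_{\mathcal I'}(A^c)\ge\tfrac12\exp(-\mathrm{KL}(\P_{\mathcal I}^{(n)}\Vert\P_{\mathcal I'}^{(n)}))\), hence \(\max(\mathrm{err}_{\mathcal I},\mathrm{err}_{\mathcal I'})\ge\tfrac14\exp(-n(m_k-d+1)^2\mu_k^2/(m_k^2\sigma_k^2))\), which is at least \(\delta\) once \(n\le(\sigma_k^2 m_k^2/((m_k-d+1)^2\mu_k^2))\log(1/4\delta)\); the \(d=m_k\) case is identical with \(m_k-d+1\) replaced by \(1\). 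I expect the main obstacle to be the uniform‑in‑\(a\) divergence bound: since the learner may choose the number of plays adaptively and the two instances were only forced to coincide at full load, one must verify that \emph{no} load — especially a small one, where the variance ratio \(m_k/(d-1)\) can be as large as \(N\) — leaks more than the claimed per‑sample divergence, and this is precisely what the technical condition on \((m_k-d+1)^2\mu_k^2/(m_k^2\sigma_k^2)\) is calibrated to ensure; the Gaussian KL formula, the divergence decomposition, and Bretagnolle–Huber are then routine.
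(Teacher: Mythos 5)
Your argument is sound and delivers the right rate, but your pair of hard instances is genuinely different from the paper's, and the difference is not cosmetic. The paper keeps the per-load reward law \(\mathcal{N}(\mu_k,\sigma_k^2)\) \emph{fixed} in both hypotheses and varies only the capacity (a binary set \(\{m_k^{(0)},m_k^{(1)}\}\) adjacent to \(m_k\)); consequently the per-sample KL vanishes for all loads \(l\le m_k^{(0)}\) and is carried entirely by the over-capacity (``irregular'') explorations. You instead vary \emph{both} the capacity and the per-load law, rescaling the alternative's mean and variance so that the two instances agree at full load; your KL vanishes for \(a\ge m_k\) and is carried entirely by the small-load explorations. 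Both decompositions, fed into Le Cam / Bretagnolle--Huber, give the same \(\Omega\left((\sigma_k^2m_k^2/\mu_k^2)\log(1/4\delta)\right)\) threshold; your per-load Gaussian KL computations and the way the technical conditions absorb the logarithmic variance-mismatch terms check out, and your handling of adaptive allocations via the divergence-decomposition identity is, if anything, more careful than the paper's fixed-sequence product measure.

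Two caveats are worth flagging. First, your alternative instance does not satisfy the theorem's stated hypothesis that the per-load reward is \(\mathcal{N}(\mu_k,\sigma_k^2)\): you prove hardness of the composite problem in which the reward law is also unknown, whereas the paper's construction proves the stronger statement that the capacity is hard to identify \emph{even when \(\mu_k\) and \(\sigma_k\) are known exactly}. Second, the paper's choice is load-bearing downstream: Remark~\ref{rmk:lower-bound-key-remark} strengthens the bound so that only the count \(n'\) of over-capacity explorations appears, and Step 2 of the proof of Theorem~\ref{thm:regret_lower_bound} charges regret precisely to those over-capacity explorations. Your construction makes the \emph{under}-capacity explorations the informative ones, and those need not incur any regret on an optimal arm, so it could not be substituted into that argument. (Minor: specializing your second claim to \(d=m_k\) requires \(m_k\ge 2\); for \(m_k=1\) you would instead pair with capacity \(2\) and a downscaled reward, which still yields the claimed bound.)
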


\begin{proof}[Proof of Theorem~\ref{thm:sample_lower_bound}]
    We provide the proof of the first sample complexity result in three steps.
    The second statement's proof is similar to the first's (see Appendix~\ref{appsub:sample-complexity-lower-bound-second-part}).

        {\bf Step 1: reduce the task to hypothesis testing. }
    The original task is, given a number of observations,
    to find the capacity \(m_k\) among its \(N\) potential integer values \(\{1,2,\dots, N\}\).
    We reduce the original task to find the capacity \(m_k\) from
    a binary subset \(\{m_k^{(0)}, m_k^{(1)}\}\) of \(\{1,2,\dots, N\}\) which contains \(m_k\).
    This reduced task is simpler than the original task and
    its sample complexity no greater than the original one's.

    Denote \(n\) as the exploration times of arm \(k\) and
    \(h_n\coloneqq \{a_{k,1}, a_{k,2},\dots, a_{k,n}\}\) as the sequence of
    the number of plays pulling arm \(k\)
    in these $n$ explorations.
    Define two load-dependent reward random variables as Eq.(\ref{eq:arm_level_feedback}):
    \(Z^{(0)}_k (a) \coloneqq \min\{a,m_k^{(0)}\}\cdot X_k\) and
    \(Z^{(1)}_k (a) \coloneqq \min\{a,m_k^{(1)}\}\cdot X_k.\)
    If \(a>m_k^{(0)}\), then $Z^{(0)}_k (a)$ follows a probability distribution
    \(\mathcal{N}( m_k^{(0)} \mu_k, ( m_k^{(0)}\sigma_k )^2)\),
    while if \(a<m_k^{(0)}\),  $Z^{(0)}_k (a)$ follows \(\mathcal{N}(a \mu_k, a^2\sigma_k^2)\).
    The \(Z^{(1)}_k(a)\) is similar.
    Denote \(\mathbb{P}_0^a\) and \(\mathbb{P}_1^a\) as
    probability measures induced by $ Z^{(0)}_k (a)$ and $Z^{(1)}_k (a)$ respectively.
    Denote \(\P_i^{\otimes h_n}\) as the product measure of \(\P_i^{a_{k,1}}, \ldots, \P_i^{a_{k,n}}\),
    where \(i=0,1\).
    Formally, this reduced task becomes:
    given \(n\) samples from
    an arbitrary exploration sequence \(h_n=(a_{k,1},\dots, a_{k,n})\),
    to distinguish the hypotheses between
    \begin{align*}
         &
        H_0: (R_k(a_{k,1}), \dots, R_k(a_{k,n}))\sim \P_0^{\otimes h_n},
        \\
         &
        H_1: (R_k(a_{k,1}), \dots, R_k(a_{k,n}))\sim \P_1^{\otimes h_n}.
    \end{align*}

    {\bf Step 2: apply the Le Cam's method. }
    We apply a version of Le Cam's method~\citep[Theorem 2.2]{tsybakov_introduction_2008} to this hypothesis testing problem as follows:
    \[\begin{split}
            \inf_{\hat{m}_k}\max \left(\P_0^{\otimes h_n}(\hat{m}_k=m_k^{(1)}), \P_1^{\otimes h_n}(\hat{m}_k=m_k^{(0)})\right)\\
            \ge \frac 1 4 \exp\left(-\KL(\P_0^{\otimes h_n}, \P_1^{\otimes h_n})\right),
        \end{split}
    \]
    where \(\inf_{\hat{m}_k}\) is taken over all estimators \(\hat{m}_k\),
    and \(\KL\) is the standard KL-divergence.

    {\bf Step 3: calculate the KL divergence. }
    The measure \(\P_0^{\otimes h_n}\) is a product of \(n\) independent probability measures,
    each of which depends on one entry of sequence \(h_n\).
    We denote \(n_l\) as the number of times that arm \(k\) pulled by \(l\in\{1,\dots,N\}\) plays among the sequence \(h_n\), i.e.,
    \(n_l \coloneqq \sum_{t=1}^n \1{a_{k,t}=l}\).
    Assume \(m_k^{(0)} < m_k^{(1)}\) since one can rotate their order.
    Then, we can decompose the KL divergence as follows:
    \begin{equation}\label{eq:kl-decompose}
        \begin{split}
            &\KL(\P_0^{\otimes h_n}, \P_1^{\otimes h_n})
            = \sum\nolimits_{l=1}^{N} n_l\KL(\P_0^l,\P_1^l)\\
            =& \sum\nolimits_{l=1}^{m_k^{(0)}}\!\! n_l\!\KL(\P_0^l,\P_1^l) {+} \sum\nolimits_{l=m_k^{(0)}+1}^{m_k^{(1)}} \!\!n_l\!\KL(\P_0^l,\P_1^l)\\
            &+ \sum\nolimits_{l=m_k^{(1)}+1}^{N} n_l\KL(\P_0^l,\P_1^l).
        \end{split}
    \end{equation}
    When the number of plays \(l\) is between \(\{1,\dots, m_k^{(0)}\}\), both probability measures
    \(\P_0^l\) and \(\P_1^l\) are induced by the
    same random variable \(l\cdot X_k\).
    So their KL divergence is equal to \(0\), i.e., \(\KL(\P_0^l,\P_1^l)_{\{0<l\le m_k^{(0)}\}} = 0\).
    When \(l\) is between \(\{m_k^{(0)}+1,\dots,m_k^{(1)}\}\), \(\P_0^l\) and \(\P_1^l\)
    are induced by
    \(m_k^{(0)}\cdot X_k\) and \(l\cdot X_k\) respectively.
    When \(l\) is between \(\{m_k^{(1)}+1,\dots,N\}\), \(\P_0^l\) and \(\P_1^l\) are induced by
    \(m_k^{(0)}\cdot X_k\) and \(m_k^{(1)}\cdot X_k\) respectively.
    In Appendix~\ref{appsub:detail_kl_upper_bound}, we show for \(X_k\sim\mathcal{N}(\mu_k,\sigma_k^2)\) and the binary set as \((m_k^{(0)}, m_k^{(1)})=(m_k-1,m_k)\) or \((m_k, m_k+1)\),  these KL-divergence terms obey the following inequality \(
    \KL(\P_0^l,\P_1^l)_{\{\!m_k^{(0)}<l\le m_k^{(1)}\!\}}
    {\le} \KL(\P_0^l,\P_1^l)_{\{\!m_k^{(1)}<l\le N\!\}} {\le} \frac{\mu_k^2}{m_k^2 \sigma_k^2},
    \)
    where the last inequality needs the \(\mu_k^2/m_k^2\sigma_k^2 \ge 2\) condition.
    Substituting these three terms of Eq.(\ref{eq:kl-decompose})'s RHS,
    we have
    \begin{equation}\label{eq:not-m-explore}
        \KL(\P_0^{\otimes h_n}, \P_1^{\otimes h_n}) \le \sum\nolimits_{l=m_k^{(0)}+1}^{N}\!\! n_l\frac{\mu_k^2}{m_k^2 \sigma_k^2} \le
        \frac{n\mu_k^2}{m_k^2 \sigma_k^2}.
    \end{equation}

    Then, we substitute Eq.(\ref{eq:not-m-explore}) into Step 2's result and obtain
    \(
    \inf_{\hat{m}_k}\max (\P_0^{\otimes h_n}(\hat{m}_k=m_k^{(1)}), \P_1^{\otimes h_n}(\hat{m}_k=m_k^{(0)}))
    \ge \frac 1 4 \exp(-\frac{n\mu_k^2}{m_k^2 \sigma_k^2}).
    \)
    Letting the inequality's RHS greater than
    the failure probability \(\delta\) leads to \[
        n\le ({\sigma_k^2 m_k^2}/{\mu_k^2})\log\left( {1}/{4\delta} \right).
    \]
    It means that if the number of times of explorations \(n\) is no greater than \(({\sigma_k^2 m_k^2}/{\mu_k^2})\log\left( {1}/{4\delta} \right)\), then the probability of falsely estimating the capacity --- either \(\P_0^{\otimes h_n}(\hat{m}_k=m_k^{(1)})\) or \(\P_1^{\otimes h_n}(\hat{m}_k=m_k^{(0)})\) --- would be no less than \(\delta\).
\end{proof}

Theorem~\ref{thm:sample_lower_bound} states that to correctly estimate an arm's capacity with \(1-\delta\) confidence, one needs at least \(\Omega((m_k^2/\mu_k^2)\log(1/\delta))\) times of explorations.
In Section~\ref{sec:learning_sharing_capacity}, we devise an estimator whose sample complexity upper bound matches the lower bound
in terms of reward mean \(\mu_k\) and capacity \(m_k\),
which implies that this lower bound is tight.
Theorem~\ref{thm:sample_lower_bound}'s second result can
depict the difficult of validating whether arm \(L\)'s capacity \(m_L\)
is no less than \(\bar{m}_L\).


\begin{remark}\label{rmk:lower-bound-key-remark}
    When the binary set's elements are chosen as \(m_k^{(0)}=m_k, m_k^{(1)} > m_k\),
    the number of explorations \(n\) in Eq.(\ref{eq:not-m-explore})'s RHS can be strengthened to \(n'\coloneqq \sum_{l>m_k}n_l\) (see Eq.(\ref{eq:not-m-explore})'s middle term).
    It means that --- with well-selected hypotheses ---
    the upper bound of \(\KL(\P_0^{\otimes h_n}, \P_1^{\otimes h_n})\) may only depend
    on the number of explorations whose number of plays is greater than \(m_k\).
    So, Theorem~\ref{thm:sample_lower_bound}'s first result can be enhanced to \[
        \P\left(\hat{m}_{k} \neq m_k\vert n' \le ({\sigma_k^2 m_k^2}/{\mu_k^2})\log\left( 1/{4\delta} \right)\right)
        \ge \delta.
    \]
    Similar improvement can also be made in the second result via choosing the binary set as \(\{m_k, d\}\).
    Note that all of these \(n'\) ``irregular'' explorations contribute costs to regret.
    This is a critical observation for the regret lower bound's proof.
\end{remark}

\subsection{Regret Lower Bound}\label{subsec:regret-lower-bound}
Next, we provide an asymptotical regret lower bound for the \texttt{MP-MAB-SA} problem.
Its full proof is in Appendix~\ref{app:regret_lower_bound}.


\begin{theorem}[Regret Lower Bound]~\label{thm:regret_lower_bound}
    For any consistent
    algorithm (please refer to Definition~\ref{def:consistent})
    to address a \(K\)-armed \emph{\texttt{MP-MAB-SA}} problem
    whose ``per-load'' rewards follow Gaussian distributions with the same variance \(\sigma^2\le 1/2\),
    and
    whose least favored arm \(L\) is shared by more than one play in its optimal action \(\bm{a}^*\), i.e., \(\bar{m}_L = N-\sum_{k=1}^{L-1}m_k>1\),
    and assume that \(\mu_k^2/m_k^2\sigma^2\ge 2\) for all arm \(k(<L)\)
    and \((m_k-\bar{m}_L + 1)^2\mu_L^2/m_L^2\sigma^2 \ge 2\log(N/(\bar{m}_L - 1))\),
    then its regret is lower bounded as follows:
    \begin{equation*}
        \begin{split}
            &\liminf_{T\to \infty}\frac{\ERT}{\log T}
            \ge  \sum_{k=L+1}^K \frac{\Delta_{L,k}}{\kl(\mu_k, \mu_L)} \\
            &\qquad\qquad +
            \sum_{k=1}^{L-1}\frac{\Delta_{k,L}\sigma^2m_k^2}{\mu_k^2} + \frac{\Delta_{L,L+1}\sigma^2m_L^2}{(m_L - \bar{m}_L + 1)^2\mu_L^2},
        \end{split}
    \end{equation*}
    where $\kl$ is KL-divergence between two Gaussian distributions with the same variance.
\end{theorem}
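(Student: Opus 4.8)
The plan is to follow the change-of-measure / transportation-inequality template that is standard for lower bounds in structured stochastic bandits, and to feed into it the capacity sample-complexity bound of \cref{thm:sample_lower_bound}. Fix the instance $\nu$ in the statement and a consistent algorithm; for $\bm a\in\mathscr{A}$ let $N_{\bm a}(T)=\sum_{t\le T}\1{\bm a_t=\bm a}$, and let $\mathrm{KL}_{\bm a}(\nu,\nu')$ be the KL divergence between the one-round semi-bandit observation laws under $\bm a$. For any event $E$ measurable w.r.t.\ the first $T$ rounds, $\sum_{\bm a\in\mathscr{A}}\E_\nu[N_{\bm a}(T)]\,\mathrm{KL}_{\bm a}(\nu,\nu')\ge \kl\!\left(\P_\nu(E),\P_{\nu'}(E)\right)$. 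Taking $E=\{N_{\bm a^*}(T)\ge T/2\}$ and invoking consistency on \emph{both} $\nu$ and $\nu'$ (so that $\P_\nu(E)\to 1$ and $\P_{\nu'}(E)=o(T^{a-1})$ for every $a>0$, provided $\bm a^*$ is suboptimal on $\nu'$) forces the right-hand side to be $(1-o(1))\log T$. I will construct three disjoint families of confusing alternatives $\nu'$, one per term; since the families perturb observations of different arms, their information budgets add, so the lower bound is the sum of the three contributions, and dividing by $\log T$ and letting $T\to\infty$ gives the stated $\liminf$.

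For the first term I take, for each suboptimal arm $k>L$, the alternative that raises $\mu_k$ to $\mu_L+\varepsilon$ and leaves all capacities and all other means fixed, so that arm $k$ becomes optimal and $\bm a^*$ is suboptimal on $\nu'$. The key simplification is that the observation of arm $k$ under load $a$ is $\min\{a,m_k\}X_k$, which scales mean \emph{and} standard deviation by the same factor $\min\{a,m_k\}$; hence $\mathrm{KL}_{\bm a}(\nu,\nu')=\kl(\mu_k,\mu_L+\varepsilon)$ whenever $a_k\ge 1$ and $0$ otherwise, with $\kl$ the common-variance Gaussian KL, independent of the load. The inequality then gives $\E_\nu[\#\{t\le T:\,a_{k,t}\ge 1\}]\ge(1-o(1))\log T/\kl(\mu_k,\mu_L+\varepsilon)$, and $\varepsilon\downarrow 0$ replaces $\mu_L+\varepsilon$ by $\mu_L$. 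Finally one verifies the per-round regret bound $f(\bm a^*)-f(\bm a_t)\ge\sum_{k>L}\Delta_{L,k}\,\1{a_{k,t}\ge 1}$: any play occupying a suboptimal arm would, if reallocated optimally, sit on a slot of arm $L$ worth $\mu_L$, and $\bar{m}_L>1$ guarantees enough such slots even when several suboptimal arms are active. Taking expectations and summing over $k>L$ yields $\sum_{k=L+1}^{K}\Delta_{L,k}/\kl(\mu_k,\mu_L)$.

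The second and third terms use capacity-perturbed alternatives, where \cref{thm:sample_lower_bound} (and, for the sharp rate, the refinement of \cref{rmk:lower-bound-key-remark}) does the heavy lifting. For each $k<L$, I perturb $m_k$ while keeping $\mu_k$ fixed, so that every observation produced under $\bm a^*$ is unchanged but the optimal allocation moves; by \cref{rmk:lower-bound-key-remark} the only discriminating observations are those taken with more than $m_k$ plays on arm $k$, and the transportation inequality (with the per-observation KL bounded by $\mu_k^2/(m_k^2\sigma^2)$, as in the Gaussian computation feeding \cref{thm:sample_lower_bound}) forces at least $(1-o(1))(\sigma^2m_k^2/\mu_k^2)\log T$ such ``irregular'' rounds. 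Each irregular round puts a surplus play on a saturated arm and so has strictly positive per-round regret; a careful accounting of that regret — which must not be double-charged against the first-term family — yields the factor $\Delta_{k,L}$, hence $\sum_{k=1}^{L-1}\Delta_{k,L}\sigma^2 m_k^2/\mu_k^2$. The third term is the analogue for arm $L$: the confusing alternative drives $m_L$ below $\bar{m}_L$ (to $\bar{m}_L-1$), forcing the optimal action to hand a play to arm $L+1$; the second statement of \cref{thm:sample_lower_bound} with $d=\bar{m}_L$ — whose hypothesis is precisely the standing assumption $(m_L-\bar{m}_L+1)^2\mu_L^2/(m_L^2\sigma^2)\ge 2\log(N/(\bar{m}_L-1))$ — forces at least $(1-o(1))\big(\sigma^2m_L^2/((m_L-\bar{m}_L+1)^2\mu_L^2)\big)\log T$ irregular rounds on arm $L$, each costing at least $\Delta_{L,L+1}$, which is the last term.

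The main obstacle is not the KL estimates (these are the routine Gaussian computations already isolated for \cref{thm:sample_lower_bound}) but making consistency genuinely bite for the two capacity terms and pinning down the per-round regret: one must exhibit capacity-perturbed alternatives that (a) stay in the admissible class and truly change $\bm a^*$, so the transportation inequality is not vacuous; (b) confine every discriminating observation to rounds whose regret is bounded below by the advertised gap ($\Delta_{k,L}$, resp.\ $\Delta_{L,L+1}$); and (c) prevent a single round from being charged simultaneously to the suboptimal-arm family and to a capacity family. Establishing this ``orthogonality'' of the three exploration types, and controlling the $o(\log T)$ slack from consistency uniformly over all families, is where the bulk of the work in the appendix proof lies.
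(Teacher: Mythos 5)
Your proposal is correct and follows essentially the same route as the paper's proof: the same three-term decomposition into (i) a classical suboptimal-arm exploration term and (ii)--(iii) capacity-learning terms driven by Theorem~\ref{thm:sample_lower_bound} and the irregular-exploration refinement of Remark~\ref{rmk:lower-bound-key-remark}, together with the same ``orthogonality'' claim separating the two kinds of cost. The only difference is presentational: the paper gets the first term by citing Anantharam et al.'s MP-MAB bound and extracts the $\log T$ rate for the capacity terms by optimizing the confidence level in the trade-off $\Delta_{k,L}\,n(\delta)+\delta\,\Delta_{k,L}T$, whereas you run both parts through the explicit transportation inequality with consistency on the perturbed instance --- the same change of measure in different clothing, and your flagged difficulties (per-round gap accounting and no double-charging) are exactly the points the paper's appendix also treats only informally.
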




The regret lower bound's first term and last two terms
are \textit{orthogonal}.
Because
the first term is due to distinguishing suboptimal arms,
while the second term is from learning top \(L-1\) optimal arms' capacities
and the third term corresponds to identifying that arm \(L\)'s capacity \(m_L\) is no less than \(\bar{m}_L\).
The last two terms are quantified by Theorem~\ref{thm:sample_lower_bound}'s sample complexity lower bound
(see Remark~\ref{rmk:lower-bound-key-remark} also).
Although the last two terms hold only for the Gaussian rewards,
the first term also holds for any \([0,1]\) supported stochastic rewards,
where \(\kl\) becomes the KL-divergence between two Bernoulli distributions.
\section{Learning Reward Capacity}\label{sec:learning_sharing_capacity}

In this section,
we derive reward capacities' uniform confidence intervals (UCI), develop a capacity estimator,
and analyse the estimator's sample complexity.
Proofs of this section are deferred to Appendix~\ref{app:capacity_proof}.

Our estimation is built on two kinds of explorations:
(1) \textbf{individual exploration (IE)}, i.e., when an arm is played by a number of plays \(a_{k,t}\) below its capacity \(m_k\),
and
(2) \textbf{united exploration (UE)}, i.e., when the number of plays exceeds its capacity.
When $a_{k,t}< m_k$, the observed reward divided by $a_{k,t}$
is a sample of ``per-load'' reward  $X_k$ and
can be used to estimate its mean $\mu_k$.
When $a_{k,t}\ge m_k$, the observation is from the ``full-load'' reward $m_k X_k$ and
can estimate its mean $m_k\mu_k$.
Note that one cannot distinguish both cases from the reward observations.
To separate them, one coarse approach
is exploring with extreme number of plays, i.e., assign \(1(<m_k)\) play for IEs
or \(N(\ge m_k)\) plays for UEs.
Later, our algorithm (at Section~\ref{sec:opt_algorithm}) employs the capacity's confidence bounds to better differentiate them.

Denote $\tau_{k,t}$ as the number of IEs for arm \(k\) up to time $t$,
$S_{k,t}^{\text{IE}}$ as the associated total ``per-load'' rewards,
and $\hat{\mu}_{k,t}$ as the ``per-load'' reward's sample mean:
\(\tau_{k,t} \coloneqq   \sum_{s=1}^t
\1{a_{k,s} {<} m_k},
S_{k,t}^{\text{IE}} \coloneqq \sum_{s=1}^t \frac{R_{k,s}}{a_{k,s}}\1{a_{k,s} {<} m_k}\),
and \(\hat{\mu}_{k,t} \coloneqq {S_{k,t}^{\text{IE}}}/{\tau_{k,t}}\).
Similarly, we define $ \iota_{k,t}$, $S_{k,t}^{\text{UE}} $, and ``full-load'' reward's sample mean $\hat{\nu}_{k,t}$ for UEs: \(\iota_{k,t} \coloneqq
\sum_{s=1}^t  \1{a_{k,s} {\ge} m_k}, S_{k,t}^{\text{UE}} \coloneqq  \sum_{s=1}^t  R_{k,s} \1{a_{k,s} {\ge} m_k},\) and \(\hat{\nu}_{k,t} \coloneqq  {S_{k,t}^{\text{UE}}}/{\iota_{k,t}}.\)

\begin{lemma}[Uniform Confidence Interval (UCI) for Reward Capacity $m_k$]\label{lma:ufc_m}
    Denote the function
    $
        \phi(x,\delta) \coloneqq \sqrt{\left(1+\frac{1}{x}\right)\frac{\log(2\sqrt{x+1}/\delta)}{2x}}.
    $
    For any arm \(k\), conditioned on the assumption\footnote{
        This assumption on \(\delta\) is also required in Lemma~\ref{lma:learning_m_criterion}
        and Theorem~\ref{cor:m_sample_complexity}, where this assumption is omitted.
    } that $\phi(\tau_{k,t},\delta) + \phi(\iota_{k,t},\delta) < \hat{\mu}_{k,t}$, the event
    \begin{equation*}
        \begin{split}
            \{\forall t \in \mathbb{N}^+,
            m_k \in
            [{\hat{\nu}_{k,t}}/({\hat{\mu}_{k,t} + \phi(\tau_{k,t},\delta) + \phi(\iota_{k,t},\delta)}),\\
            {\hat{\nu}_{k,t}}/({\hat{\mu}_{k,t} - \phi(\tau_{k,t},\delta) - \phi(\iota_{k,t},\delta)})] \}
        \end{split}
    \end{equation*}
    holds with a probability of at least $1-\delta$.
\end{lemma}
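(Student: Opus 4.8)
The plan is to assemble the confidence interval for $m_k$ from two \emph{time-uniform} concentration statements — one tying $\hat\mu_{k,t}$ to $\mu_k$, and one tying $\hat\nu_{k,t}$ to the ``full-load'' mean $m_k\mu_k$ — and then to invert them into an interval by elementary algebra, using the hypothesis $\phi(\tau_{k,t},\delta)+\phi(\iota_{k,t},\delta)<\hat\mu_{k,t}$ only to keep a denominator positive.

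The first step is to recognize that $\hat\mu_{k,t}$ and $\hat\nu_{k,t}/m_k$ are both empirical averages of i.i.d. copies of $X_k$. On an IE round ($1\le a_{k,s}<m_k$) the recorded value $R_{k,s}/a_{k,s}=X_{k,s}$ is one fresh sample of $X_k$, so $\hat\mu_{k,t}$ averages $\tau_{k,t}$ such samples; on a UE round ($a_{k,s}\ge m_k$) the recorded reward is $R_{k,s}=m_kX_{k,s}$, so $R_{k,s}/m_k=X_{k,s}$ and $\hat\nu_{k,t}/m_k$ averages $\iota_{k,t}$ such samples. Because the ``per-load'' rewards are independent across rounds, the subsequence of samples harvested at the (data-dependent) IE times is i.i.d.\ $\sim X_k$, and likewise for the UE times — an adaptive allocation rule only reorders, it does not distort, these streams. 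In both cases $X_k$ is $[0,1]$-supported or Gaussian, hence sub-Gaussian with a fixed proxy.

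Next I would apply a standard time-uniform Hoeffding/sub-Gaussian inequality — the one obtained by peeling over geometrically spaced sample counts — whose confidence radius after $n$ samples is exactly $\phi(n,\delta)$, at confidence level $\delta/2$. Applied to the IE stream it gives $\P\!\left(\exists t:\ |\hat\mu_{k,t}-\mu_k|>\phi(\tau_{k,t},\delta)\right)\le\delta/2$; the key reduction is that this event is contained in $\{\exists n\ge1:\ |\bar{X}_n^{\mathrm{IE}}-\mu_k|>\phi(n,\delta)\}$, which is what lets the random count $\tau_{k,t}$ be substituted legally. Applied to the UE stream it gives $\P\!\left(\exists t:\ |\hat\nu_{k,t}/m_k-\mu_k|>\phi(\iota_{k,t},\delta)\right)\le\delta/2$, i.e.\ $|\hat\nu_{k,t}-m_k\mu_k|>m_k\phi(\iota_{k,t},\delta)$ with the same probability bound — this is exactly why a single function $\phi$ governs both intervals. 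A union bound produces an event $G$ with $\P(G)\ge1-\delta$ on which, for every $t$ with $\tau_{k,t},\iota_{k,t}\ge1$,
\[
|\hat\mu_{k,t}-\mu_k|\le\phi(\tau_{k,t},\delta)\quad\text{and}\quad|\hat\nu_{k,t}-m_k\mu_k|\le m_k\,\phi(\iota_{k,t},\delta).
\]

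Finally, on $G$ and at any $t$ satisfying the positivity hypothesis, I would combine the four one-sided bounds. Writing $p=\phi(\tau_{k,t},\delta)$ and $q=\phi(\iota_{k,t},\delta)$: from $m_k\mu_k\le\hat\nu_{k,t}+m_kq$ we get $m_k(\mu_k-q)\le\hat\nu_{k,t}$, and since $\mu_k\ge\hat\mu_{k,t}-p$ forces $\mu_k-q\ge\hat\mu_{k,t}-p-q>0$, division gives $m_k\le\hat\nu_{k,t}/(\hat\mu_{k,t}-p-q)$; symmetrically, $m_k\mu_k\ge\hat\nu_{k,t}-m_kq$ yields $m_k(\mu_k+q)\ge\hat\nu_{k,t}$, and $\mu_k\le\hat\mu_{k,t}+p$ gives $m_k\ge\hat\nu_{k,t}/(\hat\mu_{k,t}+p+q)$, where one also uses $\hat\nu_{k,t}\ge m_k(\hat\mu_{k,t}-p-q)>0$ so that enlarging the denominator preserves the direction of the inequality. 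These are the two endpoints in the statement, so the displayed event holds throughout $G$, which has probability at least $1-\delta$. The one genuinely delicate ingredient is the time-uniform concentration bound itself: carrying out the ``union over $t$ $\Rightarrow$ union over sample counts $n$'' reduction cleanly so that the random indices may be substituted, and checking that the constants packed into $\phi$ really close both peeling bounds at $\delta/2$ simultaneously for the $[0,1]$-supported and the Gaussian regimes; the rest is bookkeeping and one-line algebra.
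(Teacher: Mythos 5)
Your proposal is correct and follows essentially the same route as the paper: the paper also splits the confidence budget as $\delta/2$ between a time-uniform sub-Gaussian concentration bound for the IE stream (giving $|\hat{\mu}_{k,t}-\mu_k|\le\phi(\tau_{k,t},\delta)$ for all $t$) and one for the UE stream (giving $|\hat{\nu}_{k,t}-m_k\mu_k|\le m_k\phi(\iota_{k,t},\delta)$), takes a union bound, and then inverts the second interval and substitutes the endpoints of the first — the paper simply imports the peeling-based uniform inequality as a citable lemma (Bourel et al., Lemma 5, with $\sigma\gets 1/2$) rather than rederiving it. Your explicit care with the sign of the denominator $\hat{\mu}_{k,t}-p-q$ when dividing is a detail the paper's write-up passes over, but the arguments are the same.
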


Lemma~\ref{lma:ufc_m} states a sequence of confidence intervals that is uniformly valid over an unbounded time horizon with fixed confidence \(1-\delta\).
Although one can also apply Hoeffding's inequality to construct such a uniform interval,
our approach provides a \emph{``shaper concentration''}
in some instances (see Appendix~\ref{sec:hfd_ci}).

Notice that reward capacity $m_k$ is an integer.
If the ceiling of lower confidence bound is equal to the floor of upper confidence bound,
i.e., only one integer inside the interval,
then this integer is the estimated capacity.
We denote them as the final confidence bounds of \(m_k\) as follows:
\begin{align}
    {m}_{k,t}^{l} & \coloneqq
    \max\!\left\{\!\ceil{{\hat{\nu}_{k,t}}/{(\hat{\mu}_{k,t} + \phi(\tau_{t},\delta) + \phi(\iota_{t},\delta))}}\!,
    1 \right\}\!,    \label{eq:m_lower_bound} \\
    {m}_{k,t}^{u} & \coloneqq
    \min\!\left\{\!\floor{{\hat{\nu}_{k,t}}/{(\hat{\mu}_{k,t} - \phi(\tau_{t},\delta) - \phi(\iota_{t},\delta))}}\!,
    N\!\right\}\!.\label{eq:m_upper_bound}
\end{align}

\begin{lemma}[Reward Capacity Estimator]\label{lma:learning_m_criterion}
    For any arm $k$ and time slot \(t\), if the capacity \(m_k\)'s upper and lower confidence bounds
    are equal, i.e., \(m_{k,t}^l=m_{k,t}^u\),
    then
    the probability of correctly estimating $m_k$ is at least $1-\delta$, i.e.,
    \[
        \mathbb{P}({\hat{m}_{k,t}} = m_k\vert m_{k,t}^l = m_{k,t}^u) \ge 1 - \delta,
    \]
    where the estimator \(\hat{m}_{k,t}\) is defined as \({m}_{k,t}^{l}\).
\end{lemma}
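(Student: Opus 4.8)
The plan is to obtain the statement as an essentially immediate consequence of the uniform confidence interval of Lemma~\ref{lma:ufc_m}, the only additional ingredient being an integer-rounding argument. First I would introduce the event $\mathcal{E}$ supplied by Lemma~\ref{lma:ufc_m}: writing $\ell_t \coloneqq \hat{\nu}_{k,t}/(\hat{\mu}_{k,t} + \phi(\tau_{k,t},\delta) + \phi(\iota_{k,t},\delta))$ and $u_t \coloneqq \hat{\nu}_{k,t}/(\hat{\mu}_{k,t} - \phi(\tau_{k,t},\delta) - \phi(\iota_{k,t},\delta))$, the event $\mathcal{E} \coloneqq \{\forall t \in \mathbb{N}^+,\ m_k \in [\ell_t, u_t]\}$ satisfies $\mathbb{P}(\mathcal{E}) \ge 1 - \delta$. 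The denominator defining $u_t$ is positive under the standing condition on $\delta$ recorded in the footnote of Lemma~\ref{lma:ufc_m}; and in any case, if $\hat{\mu}_{k,t} - \phi(\tau_{k,t},\delta) - \phi(\iota_{k,t},\delta) \le 0$ then $m_{k,t}^u \le 0 < 1 \le m_{k,t}^l$, so the hypothesis $m_{k,t}^l = m_{k,t}^u$ of the lemma already excludes that degenerate situation.

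The second step is the rounding. Since $m_k$ is a positive integer with $m_k \le N$ (because $m_k \in \{1,\dots,N\}$), the real-valued containment $m_k \in [\ell_t, u_t]$ upgrades to an integer containment: since $m_k \in \mathbb{Z}$ and $m_k \ge \ell_t$, we have $m_k \ge \ceil{\ell_t}$, hence $m_{k,t}^l = \max\{\ceil{\ell_t}, 1\} \le m_k$ by the definition of ${m}_{k,t}^{l}$; symmetrically $m_k \le \floor{u_t}$ and therefore $m_k \le \min\{\floor{u_t}, N\} = m_{k,t}^u$. Consequently, on $\mathcal{E}$, for every $t$ we have $m_k \in \{m_{k,t}^l, m_{k,t}^l + 1, \dots, m_{k,t}^u\}$.

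Finally I would conclude. On $\mathcal{E}$, at any time $t$ with $m_{k,t}^l = m_{k,t}^u$ this integer set reduces to the singleton $\{m_{k,t}^l\}$, so $m_k = m_{k,t}^l = \hat{m}_{k,t}$. Hence $\{m_{k,t}^l = m_{k,t}^u\} \cap \{\hat{m}_{k,t} \neq m_k\} \subseteq \mathcal{E}^c$, an event of probability at most $\delta$; reading this through the conditional-probability notation of the statement yields $\mathbb{P}(\hat{m}_{k,t} = m_k \mid m_{k,t}^l = m_{k,t}^u) \ge 1 - \delta$. I do not anticipate a real obstacle, since the argument is short; the one point deserving care is that the time at which the two confidence bounds happen to coincide is determined by the observed data and is therefore random, which is exactly why I invoke the \emph{uniform-in-$t$} guarantee of Lemma~\ref{lma:ufc_m} rather than a fixed-$t$ concentration bound, so that no union bound over $t$ is needed.
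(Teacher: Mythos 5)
Your proposal is correct and follows essentially the same route the paper takes: the paper treats this lemma as an immediate consequence of the uniform confidence interval in Lemma~\ref{lma:ufc_m} together with the integrality of $m_k$ (it gives no separate appendix proof, only the remark in the main text that ``only one integer inside the interval'' forces the estimate), which is precisely your event-$\mathcal{E}$ plus ceiling/floor argument. Your added care about the degenerate denominator and about why the uniform-in-$t$ guarantee is needed for a data-dependent stopping time is a welcome tightening of the same argument, not a different approach.
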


Lemma~\ref{lma:learning_m_criterion} identifies conditions that the capacity estimate is correct with a high confidence and defines our estimator.
From the criterion \(m_{k,t}^l = m_{k,t}^u\), we derive a sample complexity result for our capacity estimator.



\begin{theorem}[Estimator's Sample Complexity Upper Bound]\label{cor:m_sample_complexity}
    For any arm $k$, time slot \(t\), and \(0<\delta \le 2\exp(-49m_k^2 / \mu_k^2)\),
    if the number of IEs \(\tau_{k,t}\) and UEs \(\iota_{k,t}\) are both no less than
    \((49m_k^2/\mu_k^2)\log (2/\delta)\),
    then the estimator in Lemma~\ref{lma:learning_m_criterion} is correct with confidence \(1-\delta\),
    i.e.,
    \[
        \mathbb{P}
        \left( {\hat{m}_{k,t}} = m_k \vert \tau_{k,t}, \iota_{k,t} \geq (49m_k^2/\mu_k^2)\log (2/\delta) \right)
        \ge 1 - \delta.
    \]
    Similar, we also have a sample complexity upper bound for identifying
    whether an arm's capacity \(m_k(\ge d)\)\footnote{Note that \(m_k\ge d\) is unknown a priori.}
    is no less than
    an integer \(d(\ge 2)\) or not.
    For \(0<\delta \le 2\exp(-49m_k^2/(m_k-d+1)^2\mu_k^2)\),
    if the number of IEs \(\tau_{k,t}\) and UEs \(\iota_{k,t}\) are both no less than
    \[(49m_k^2/(m_k-d+1)^2\mu_k^2)\log (2/\delta),\]
    then from the criterion that capacity \(m_k\)'s lower confidence bounds \(m_{k,t}^l\) is no less than \(d\),
    i.e., \(m_{k,t}^l \ge d\),
    one can correctly identify that capacity \(m_k\) is no less than
    an integer \(d\) with confidence \(1-\delta\).
\end{theorem}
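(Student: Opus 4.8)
The plan is to reduce the claim to an estimate on the \emph{length} of the capacity confidence interval of Lemma~\ref{lma:ufc_m}. By Lemma~\ref{lma:learning_m_criterion} it suffices to show that, under the stated sample--size hypothesis, the lower and upper capacity bounds of Eqs.~\eqref{eq:m_lower_bound} and~\eqref{eq:m_upper_bound} coincide with probability at least $1-\delta$. Write $\Phi_t\coloneqq\phi(\tau_{k,t},\delta)+\phi(\iota_{k,t},\delta)$ and let $\mathcal{E}$ be the good event of Lemma~\ref{lma:ufc_m}, on which, for every $t$, the raw interval $I_t\coloneqq[\hat{\nu}_{k,t}/(\hat{\mu}_{k,t}+\Phi_t),\,\hat{\nu}_{k,t}/(\hat{\mu}_{k,t}-\Phi_t)]$ contains $m_k$; since that interval is derived from the concentration facts $|\hat{\mu}_{k,t}-\mu_k|\le\phi(\tau_{k,t},\delta)$ and $|\hat{\nu}_{k,t}/m_k-\mu_k|\le\phi(\iota_{k,t},\delta)$, I take $\mathcal{E}$ to carry these too, and I use that the uniformity over all $t$ in Lemma~\ref{lma:ufc_m} is what permits conditioning on the possibly data--dependent event $\{\tau_{k,t},\iota_{k,t}\ge x_0\}$. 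As $m_k\in\{1,\dots,N\}$, the clippings in Eqs.~\eqref{eq:m_lower_bound}--\eqref{eq:m_upper_bound} are inert, so on $\mathcal{E}$ we have $m_{k,t}^{l}\le m_k\le m_{k,t}^{u}$, and moreover $m_{k,t}^{l}=m_{k,t}^{u}=m_k$ as soon as $I_t$ has length $<1$ (a length--$<1$ interval containing the integer $m_k$ contains no other integer, so the ceiling of its left endpoint and the floor of its right endpoint are both $m_k$). Hence it is enough to prove $\mathrm{length}(I_t)<1$ on $\mathcal{E}$.

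The next step is to bound $\mathrm{length}(I_t)$ on $\mathcal{E}$. From the concentration facts, $\hat{\nu}_{k,t}\le m_k(\mu_k+\Phi_t)$ and $\hat{\mu}_{k,t}\ge\mu_k-\Phi_t$, so
\[
  \mathrm{length}(I_t)=\frac{\hat{\nu}_{k,t}}{\hat{\mu}_{k,t}-\Phi_t}-\frac{\hat{\nu}_{k,t}}{\hat{\mu}_{k,t}+\Phi_t}=\frac{2\,\hat{\nu}_{k,t}\,\Phi_t}{\hat{\mu}_{k,t}^{2}-\Phi_t^{2}}\le\frac{C\,m_k\,\Phi_t}{\mu_k},
\]
for a moderate absolute constant $C$, \emph{provided} $\Phi_t$ is a small enough constant fraction of $\mu_k$ so that $\hat{\mu}_{k,t}^{2}-\Phi_t^{2}\gtrsim\mu_k^{2}$ and $\hat{\nu}_{k,t}\lesssim m_k\mu_k$ (this smallness of $\Phi_t$ also delivers the side hypothesis $\Phi_t<\hat{\mu}_{k,t}$ needed to invoke Lemma~\ref{lma:ufc_m}). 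Since $\Phi_t$ depends only on $\tau_{k,t},\iota_{k,t},\delta$, the sample--size hypothesis gives $\Phi_t\le 2\phi(x_0,\delta)$ with $x_0\coloneqq(49 m_k^2/\mu_k^2)\log(2/\delta)$, so the whole argument reduces to showing $2C\,m_k\,\phi(x_0,\delta)/\mu_k<1$, i.e.\ $\phi(x_0,\delta)^2<\mu_k^2/(4C^2 m_k^2)$.

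To bound $\phi(x_0,\delta)$, I would use $1+1/x\le 2$ and $\log(2\sqrt{x+1}/\delta)=\log(2/\delta)+\tfrac12\log(x+1)$, giving $\phi(x,\delta)^2\le\bigl(\log(2/\delta)+\tfrac12\log(x+1)\bigr)/x$; the right side is decreasing in $x$, so evaluating at $x_0$,
\[
  \phi(x_0,\delta)^2\le\frac{\mu_k^2}{49\,m_k^2}\left(1+\frac{\log(x_0+1)}{2\log(2/\delta)}\right).
\]
This is where the a priori hypothesis $\delta\le 2\exp(-49 m_k^2/\mu_k^2)$ is used: it says $\log(2/\delta)\ge 49 m_k^2/\mu_k^2\ge 49$, hence $x_0\le(\log(2/\delta))^2$, so the bracketed factor is at most a genuine absolute constant (close to $1$ once $\log(2/\delta)\ge 49$); the numerical constant $49$ is then exactly what one calibrates so that this bound falls below $\mu_k^2/(4C^2 m_k^2)$. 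On $\mathcal{E}$ this makes $\mathrm{length}(I_t)<1$, hence $m_{k,t}^l=m_{k,t}^u=m_k$, so $\mathbb{P}(\hat{m}_{k,t}=m_k\mid\tau_{k,t},\iota_{k,t}\ge x_0)\ge\mathbb{P}(\mathcal{E})\ge 1-\delta$. The second statement runs on the same template: certifying $m_k\ge d$ is the same as $m_{k,t}^l\ge d$, i.e.\ the left endpoint of $I_t$ exceeding $d-1$; on $\mathcal{E}$ that endpoint is at least $m_k-2 m_k\Phi_t/\mu_k$, so it suffices that $\Phi_t<(m_k-d+1)\mu_k/(2 m_k)$ --- the previous requirement with the ``resolution gap'' $\mu_k$ replaced by $(m_k-d+1)\mu_k/m_k$ --- which the sample size $(49 m_k^2/(m_k-d+1)^2\mu_k^2)\log(2/\delta)$, together with $\delta\le 2\exp(-49 m_k^2/(m_k-d+1)^2\mu_k^2)$, secures by the identical computation.

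I expect the main obstacle to be the $\sqrt{x+1}$ sitting inside the logarithm in the definition of $\phi$. The sample size one is allowed to use is itself only of order $\log(1/\delta)$ (up to problem--dependent factors), so $\phi(x_0,\delta)^2$ a priori carries an extra $\log x_0\approx\log\log(1/\delta)$ term in its numerator that has to be shown negligible against $\log(1/\delta)$; this is precisely what the a priori smallness assumption on $\delta$ buys --- it forces $x_0=O(\log^2(1/\delta))$, so $\log x_0=O(\log\log(1/\delta))$ --- and it is what pins down the explicit constant $49$. The remaining pieces --- the interval--length estimate, the inertness of the clipping, and the ``length $<1$ implies a unique enclosed integer'' observation --- are routine.
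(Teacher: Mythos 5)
Your proposal is correct and follows essentially the same route as the paper's own proof: reduce via Lemma~\ref{lma:learning_m_criterion} to showing that the capacity confidence interval of Lemma~\ref{lma:ufc_m} has width below one, bound that width using the concentration of $\hat{\mu}_{k,t}$ and $\hat{\nu}_{k,t}$ around $\mu_k$ and $m_k\mu_k$, and then solve $\phi(x,\delta)\lesssim \mu_k/m_k$ for $x$, invoking the smallness condition on $\delta$ precisely to absorb the $\log\sqrt{x+1}$ term inside $\phi$. The only difference is constant bookkeeping: the paper factors the width inequality exactly into $\bigl((2m_k+2)\phi(\tau_{k,t},\delta)+(2m_k+1)\phi(\iota_{k,t},\delta)-\mu_k\bigr)(\mu_k+\phi(\iota_{k,t},\delta))\le 0$, which pins the threshold $\mu_k/(7m_k)$ and hence the constant $49$, whereas you carry a generic constant $C$ and calibrate it afterwards.
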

Theorem~\ref{cor:m_sample_complexity} shows that
our estimator requires at most $\O((m_k^2/\mu_k^2)\log(1/\delta))$ number of IEs and UEs for arm $k$ to have
a correct capacity estimate with confidence $1-\delta$.
Comparing this to the sample complexity lower bound for Gaussian rewards
in Theorem~\ref{thm:sample_lower_bound} shows that
our sample complexity upper bound in Theorem~\ref{cor:m_sample_complexity} is tight in terms of reward mean \(\mu_k\) and reward capacity \(m_k\)
and our estimator in Lemma~\ref{lma:learning_m_criterion} is near optimal for Gaussian rewards.
The second result in Theorem~\ref{cor:m_sample_complexity} is prepared for validating arm \(L\)'s capacity \(m_L\) is
no less than \(\bar{m}_L\) in regret analysis.
\section{The Orchestrative Exploration Algorithm}\label{sec:opt_algorithm}

The capacity estimator designed in Section~\ref{sec:learning_sharing_capacity}
needs two kinds of observations:
``per-load'' reward samples from IEs
and
``full-load'' reward samples from UEs.
To acquire these observations with lower regret cost,
we devise
\textbf{parsimonious individual exploration (PIE)} and \textbf{parsimonious united exploration (PUE)}.
PIE and PUE also address the exploration-exploitation
trade-off.
We first present the details of PIE and PUE in the next two subsections,
then use them as procedures in the \texttt{OrchExplore} algorithm.



\textbf{Notations.}
We use bold notations to represent \(K\)-dim vectors,
e.g.,
\(\bm{\mu} = (\mu_1, \mu_2, \dots,\mu_K)\) represents
all \(K\) arms' average ``per-load'' rewards.
We use ``\(~\hat{~}~\)'' above a symbol to represent an estimate.
For example, \(\hat{\mu}_{k,t}\) is the empirical mean estimate of arm \(k\)'s reward in time slot \(t\).
Especially,
instead of using the number of times
of IEs and UEs
\(\tau_{k,t}\) and \(\iota_{k,t}\) (\(m_k\) is unknown),
\texttt{OrchExplore} uses
the number of \emph{effective} times of IEs and UEs:
\(\hat{\tau}_{k,t} \coloneqq   \sum_{s=1}^t
\1{a_{k,s} < m_{k,s}^l}\)
and
\(\hat{\iota}_{k,t} \coloneqq
\sum_{s=1}^t  \1{a_{k,s} \ge m_{k,s}^u}\)
(\(m_{k,t}^l\) and \(m_{k,t}^u\) are known),
where \emph{effective} means
that these IEs and UEs
are conducted with awareness
by \texttt{OrchExplore}.
\(\hat{\tau}_{k,t}\)
and \(\hat{\iota}_{k,t}\) are underestimates of \(\tau_{k,t}\)
and \(\iota_{k,t}\).
In \texttt{OrchExplore},
the ``per-load'' reward mean estimate \(\hat{\mu}_t\)
and ``full-load'' reward mean estimate \(\hat{\nu}_t\)
are also based on these effective explorations' observations.
The function \(\Oracle\) is a mapping from
an \texttt{MP-MAB-SA} problem's ``per-load'' reward means \(\bm{\mu}\) and reward capacities \(\bm{m}\)
to its optimal action.
That is, first assign the best arm with the number of plays that is equal to its capacity,
then the second best arm, and so on, until there is no play left
(e.g., the optimal action \(\bm{a}^*\) in Eq.(\ref{eq:optimal_action})).

\subsection{Parsimonious Individual Exploration (PIE)}

To reduce IEs' costs,
PIE utilizes two core ideas:
(1) when exploring/exploiting empirical optimal arms, it assigns as many plays as possible;
(2) when exploring empirical suboptimal arms, it only assigns a single play.
The deliberate exploration in (2)
should also be rare
since pulling empirical suboptimal arms can be expensive.
Next, we show how both ideas are realized.

\textbf{Explore empirical optimal arms. }
We need to
identify empirical optimal arms and
decide the appropriate number of plays pulling these arms.
The largest number of plays pulling an arm should be equal to its capacity's lower confidence bound \(m_{k,t}^l\)
so as to effectively acquire the arm's ``per-load'' reward observations.
To achieve that, we input arms' reward capacities' lower bounds \(\bm{m}_t^l\) and
empirical reward means \(\hatbm{\mu}_t\)
to the \(\Oracle\) function.
Its output \(\bm{a}_t^{\ie}\) would assign the empirical best arm
with the number of plays that is equal to its capacity lower confidence bound,
and then the empirical second best arm,
and so on, until no play left.
Denote \(\mathcal{S}_t\) as the set of empirical optimal arms chosen in \(\bm{a}_t^{\ie}\),
i.e., \(\mathcal{S}_t\coloneqq \{k:{a}_{k,t}^{\ie}>0\}\) and
\(L_t\coloneqq \argmin_k\{\hat{\mu}_k:k\in\mathcal{S}_t\}\) as the empirical least favored
optimal arm in \(\mathcal{S}_t\).

\textbf{Explore empirical suboptimal arms. }
We use arm's KL-UCB index~\cite{cappe_kullback-leibler_2013}
to indicate empirical suboptimal arms that need more explorations
--- a subset of empirical suboptimal arms whose KL-UCB indexes \(u_{k,t}\) are no less than
the least favored arm \(L_t\)'s
empirical mean \(\hat{\mu}_{L_t,t}\),
denoted
as \(\mathcal{E}_t \coloneqq \{k\not\in \mathcal{S}_t: u_{k,t} \ge \hat{\mu}_{L_t, t}\}\).
The KL-UCB index \(u_{k,t}\) of arm \(k\) at time slot \(t\) is defined as
\(u_{k,t}\coloneqq \sup\{q\ge 0:\hat{\tau}_{k,t}\kl(\hat{\mu}_{k,t}, q)\le \log(t) + 4 \log \log (t)\}.\)
To make the deliberate explorations as rare events,
PIE implements the following rule:
with a probability of \(1/2\),
the algorithm uniformly select an arm from \(\mathcal{E}_t\) (if not empty) and assign one play,
which otherwise would have pulled the arm \(L_t\), so to explore this arm;
otherwise, this round of PIE will not explore empirical suboptimal arms.

After obtaining \(\bm{a}_t^{\ie}\) from \(\Oracle(\hatbm{\mu}_t, \bm{m}_t^l)\) and
--- with a \(1/2\) probability --- rearranging one play of \(\bm{a}_t^{\ie}\)
to explore an empirical suboptimal arm,
PIE pulls arms and observe their rewards.
With new reward observations, PIE updates the empirical mean \(\hatbm{\mu}_t\),
arms' KL-UCB indexes \(\bm{u}_t\),
the number of effective times of IE \(\hat{\bm{\tau}}_t\), and the time slot index \(t\).






\subsection{Parsimonious United Exploration (PUE)}

One also needs to be parsimonious in unitedly exploration
because UE requires that the number of plays pulling an arm is no less than the arm's reward capacity
and some of these plays may be redundant in acquiring rewards.
PUE's two core ideas are:
(1) prioritize the UE of arms with high empirical reward means and whose capacities have not been accurately learnt;
(2) not simply assign all \(N\) plays to an arm but only
the number of plays equal to the arm's capacities' upper confidence bound \(m_{k,t}^u\).

To realize the first idea, we denote \(\mathcal{Y}_t\) as a subset of arms deserving UE.
It should be a subset of empirical optimal arms in \(\mathcal{S}_t\)
because one does not need suboptimal arms' capacities
to achieve the optimal action.
Furthermore,
\(\mathcal{Y}_t\) should exclude the empirical least favored optimal arm \(L_t\)
because, instead of estimating the arm's exact reward capacity,
it is enough to have that
the number of plays pulling this arm is no greater than its capacity's
lower confidence bound \(m_{L_t, t}^l\).
So, no need to further improve its capacity estimate.
In addition, arms whose capacities have been accurately learnt,
i.e., \(m_{k,t}^l=m_{k,t}^u\),
should also be excluded.
To sum up, the arm set \(\mathcal{Y}_t\) is defined as \(\mathcal{Y}_t\coloneqq \{k\in\mathcal{S}_t\setminus \{L_t\}:m_{k,t}^l \neq m_{k,t}^u\}\).
To prioritize the exploration of arms in \(\mathcal{Y}_t\),
we increase these arms' empirical means by a large positive value\footnote{
    When the ``per-load'' reward
    is \([0,1]\) supported,
    then \(\max_k \hat{\mu}_{k,t} < 1\) holds
    and one can set \(M=1\).
    For Gaussian reward case,
    the \(M\) can be chosen as
    the reward mean's
    upper bound plus three times
    the standard deviation,
    e.g., when reward means are
    \([0,1]\) bounded
    and variance \(\sigma^2\le 1/2\) as assumed,
    one can set \(M=5\).
} \(M\)
and denote the \textit{prioritized} mean vector as \(\hatbm{\mu}_t'\).


To implement the second idea,
we input the prioritized mean vector \(\hatbm{\mu}_t'\) and the reward capacities'
upper confidence bounds \(\bm{m}_{t}^u\) into the \(\Oracle\) function.
Its output action \(\bm{a}_t^{\ue}\) guarantees at least one valid UE for an
empirical optimal arm in \(\mathcal{Y}_t\).
Note that if the number of plays allocated in this valid UE
--- equal to the arm's capacity upper bound \(m_{k,t}^u\) --- is not too large,
the action \(\bm{a}_t^{\ue}\) may be able to
unitedly explore more than one arm at the same time.

Lastly, PUE plays arms according to \(\bm{a}_t^{\ue}\) and observe these arms' rewards,
then updates the ``full-load'' reward mean estimate \(\hatbm{\nu}_t\), the number of effective times of UE \(\hat{\bm{\iota}}_t\),
and time index \(t\).



\subsection{The Detail of \texttt{OrchExplore} Algorithm}


\begin{algorithm}[tb]
    \caption{Orchestrative Exploration \texttt{OrchExplore}}
    \label{alg:oe_algorithm}
    \textbf{Initial:} \(t \gets 1\), \(L_t\gets N\),
    \(\mathcal{Y}_t\gets \emptyset\),
    \(\mathcal{S}_t \gets \{1,\dots,N\}\),\\
    \(\hat{\bm{\mu}}_t, \bm{u}_t \gets\bm{0}\),
    \(\hat{\bm{\tau}}_t,\hat{\bm{\iota}}_t,\bm{m}_t^l \gets \bm{1}\),
    \({\bm{m}}_t^u \gets N\cdot\bm{1}\).

    \begin{algorithmic}[1]
        \WHILE{$t \le T$}

        \IF{$t$ is odd or $\mathcal{Y}_t = \emptyset$} \label{alg:oe:pie_condition}

        \STATE{\(\triangleright\) \textit{Parsimonious Individual Exploration}}
        \STATE{$\bm{a}_t^{\ie}\gets \Oracle(\hat{\bm{\mu}}_t, \bm{m}^{l}_t).$} \label{alg:oe:ie_oracle}
        \STATE{\(\mathcal{S}_t \gets \{k: a_{k,t}^{\ie} > 0\}\).}
        \STATE{$L_t\gets \argmin_k\{\hat{\mu}_{k}:k\in\mathcal{S}_t\}.$}
        \STATE{$\mathcal{E}_t\gets \{k\not\in \mathcal{S}_t: u_{k,t} \ge \hat{\mu}_{L_t,t}\}.$}
        \IF{$\mathcal{E}_t \neq \emptyset$} \label{alg:oe:parsimonious}
        \STATE{w.p. $1/2$, pick $l\in\mathcal{E}_t$ uniformly and $a_{L_t,t}^{\ie} \gets a_{L_t, t}^{\ie} -  1, a_{l,t}^{\ie}\gets 1$.}\label{alg:oe:par_exp}
        \ENDIF
        \STATE{Play $\bm{a}_t^{\ie}$ and observe rewards.}
        \STATE{Update $\hat{\bm{\mu}}_t, \bm{u}_t, \hat{\bm{\tau}}_t,t.$}

        \ELSE

        \STATE{\(\triangleright\) \textit{Parsimonious United Exploration}}
        \STATE{$\hat{\bm{\mu}}'_t\gets \hat{\bm{\mu}}_t$.}
        \STATE{$\hat{\mu}'_{k,t} \gets \hat{\mu}_{k,t}+M$ for all $k\in\mathcal{Y}_t$.}
        \STATE{$\bm{a}_t^{\ue}\gets \Oracle(\hat{\bm{\mu}}'_t, \bm{m}^{u}_t).$} \label{alg:oe:ue_oracle}
        \STATE{Play $\bm{a}_t^{\ue}$ and observe rewards.}
        \STATE{Update $\hat{\bm{\nu}}_t, \hat{\bm{\iota}}_t,t$.}

        \ENDIF
        \STATE{Update $\bm{m}^{l}_t,\bm{m}^{u}_t$ by Eq.(\ref{eq:m_lower_bound})-(\ref{eq:m_upper_bound}).}
        \STATE{$\mathcal{Y}_t \gets \{k\in \mathcal{S}_t\setminus \{L_t\}: {m}_{k,t}^{l} \neq {m}_{k,t}^{u}\}.$}

        \ENDWHILE
    \end{algorithmic}
\end{algorithm}

\texttt{OrchExplore}
is presented at Algorithm~\ref{alg:oe_algorithm}.
At the beginning, \texttt{OrchExplore} runs PUE and PIE in turn --- PIE in odd time slots and PUE in even time slots.
After each round of PIE or PUE, the algorithm updates capacities' lower and upper confidence bounds via Eq.(\ref{eq:m_lower_bound})-(\ref{eq:m_upper_bound}) (let \(\delta\gets 2/T\))
and the PUE set \(\mathcal{Y}_t\) according to the latest capacity bounds.

When the PUE set \(\mathcal{Y}_t=\emptyset\), i.e., all empirical optimal arms' capacities are learnt (\(m_{k,t}^l=m_{k,t}^u\)),
\texttt{OrchExplore} only runs PIE (cf., line~\ref{alg:oe:pie_condition}).
When both the PUE set \(\mathcal{Y}_t\) and the PIE set \(\mathcal{E}_t\) are empty (line~\ref{alg:oe:parsimonious}),
PIE acts as \textit{exploitation}:
it allocates plays to empirical optimal arms according to these arms' reward capacities
(except the least favored arm which is only assigned the remaining plays).

\section{Regret Analysis of \texttt{OrchExplore}}
\label{sec:orchexplore_analysis}

In this section, we show the \texttt{OrchExplore} algorithm enjoys a tight logarithmic regret upper bound.

\begin{theorem}[Regret Upper Bound of \texttt{OrchExplore}]\label{thm:oe_regret_upper_bound}
    When the time horizon \(T\ge \max_{1\le k\le L} \exp(49m_k^2 / \mu_k^2)\) and \(0 <\varepsilon < \min_{k=1}^{K-1}\frac{\mu_k - \mu_{k+1}}{2}\),
    Algorithm~\ref{alg:oe_algorithm}'s expected regret is upper bounded as follows,
    \begin{equation}\label{eq:oe_regret_upper_bound}
        \begin{split}
            &\ERT
            \le  \sum_{k=L+1}^K \frac{\Delta_{L,k}(\log T + 4 \log(\log T))}{\kl(\mu_k+\varepsilon,\mu_L-\varepsilon)} \\
            &\quad+ \sum_{k=1}^{L-1}\frac{49w_km_k^2\log (T)}{\mu_k^2} + \frac{49 w_L m_L^2\log (T)}{(m_L - \bar{m}_L+ 1)^2 \mu_L^2} \\
            &\quad+ 13K^2 N^2(4+\varepsilon^{-2}),
        \end{split}
    \end{equation}
    where
    \(\kl\) represents the KL-divergence between
    two Bernoulli distributions in the \([0,1]\) supported reward case
    or
    two Gaussian distributions with same variances
    in the Gaussian reward case,
    \(w_k\coloneqq f(\bm{a}^*) - m_k\mu_k +\mu_1\) is the highest cost of one round of UE for arm \(k\) and one round of deliberate exploration in PIE,
    and \(\bar{m}_L = N-\sum_{k=1}^{L-1}m_k\) is the number of plays pulling arm \(L\) in the optimal action.
\end{theorem}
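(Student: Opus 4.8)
The plan is to decompose the total regret into three sources matching the three leading terms and one lower-order ``accounting'' term, and to bound each source separately. Write $\text{Reg}(T) = \sum_t (f(\bm a^*) - f(\bm a_t))$ and split the time slots according to which procedure runs and what kind of deviation causes the instantaneous loss. The cleanest decomposition: (i) slots where a deliberate exploration in PIE pulls an empirical-suboptimal arm $k$; (ii) slots running PUE (every even slot with $\mathcal Y_t \neq \emptyset$); (iii) slots where PIE is run but the empirical-optimal set $\mathcal S_t$ or the play counts $\bm a_t^{\ie}$ differ from the true optimal action $\bm a^*$ for reasons other than (i)--(ii), e.g. the ordering of $\hat\mu$ is wrong or a capacity confidence bound $m_{k,t}^l$ is still below $m_k$; (iv) the ``clean event'' complement, where some confidence interval fails. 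The first would be to set up the good event $\mathcal G$ on which all KL-UCB indices and all capacity UCIs (Lemma~\ref{lma:ufc_m}) hold simultaneously with $\delta = 2/T$; a union bound over arms and the self-normalized form of the intervals gives $\P(\mathcal G^c) = O(1/T)$, so the regret contribution from $\mathcal G^c$ is $O(1)$ and folds into the constant $13K^2N^2(4+\varepsilon^{-2})$.

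For source (i), I would bound the number of deliberate explorations of each suboptimal arm $k$ by the standard KL-UCB argument: arm $k$ enters $\mathcal E_t$ only if $u_{k,t} \ge \hat\mu_{L_t,t}$, and on $\mathcal G$ with $\hat\mu_{L_t,t} \ge \mu_L - \varepsilon$ and $\hat\mu_{k,t}\le \mu_k+\varepsilon$, this forces $\hat\tau_{k,t}\,\kl(\mu_k+\varepsilon,\mu_L-\varepsilon)\le \log T + 4\log\log T$ up to lower-order corrections. Hence the effective IE count of $k$ is at most $(\log T + 4\log\log T)/\kl(\mu_k+\varepsilon,\mu_L-\varepsilon)$ plus a constant; each such exploration costs at most $\Delta_{L,k}$ (one play is diverted from $L_t$ to $k$, and $\mu_{L_t}\approx\mu_L$), which yields exactly the first sum in \eqref{eq:oe_regret_upper_bound}. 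The $\varepsilon$-slack and the $\frac12$ probability of actually exploring only affect constants. For source (ii), PUE runs only while $\mathcal Y_t\neq\emptyset$, i.e. some empirical-optimal arm $k$ (other than $L_t$) still has $m_{k,t}^l\neq m_{k,t}^u$; by Theorem~\ref{cor:m_sample_complexity} with $\delta=2/T$ this can persist for at most $(49 m_k^2/\mu_k^2)\log T$ effective UEs of arm $k$ for $k\le L-1$, and for arm $L$ one only needs to certify $m_L\ge\bar m_L$, giving at most $(49 m_L^2/((m_L-\bar m_L+1)^2\mu_L^2))\log T$ rounds via the second half of Theorem~\ref{cor:m_sample_complexity} (this is why the $T\ge\exp(49 m_k^2/\mu_k^2)$ hypothesis is needed — it makes $\delta=2/T$ small enough to invoke that theorem). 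Each PUE round costs at most $w_k = f(\bm a^*) - m_k\mu_k + \mu_1$ by construction of $\bm a_t^{\ue}$ (the $\Oracle$ on prioritized means puts $m_{k,t}^u\le$ something bounded plays on the explored arm and fills the rest near-optimally), producing the second and third sums.

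Source (iii) is where I expect the main obstacle. One must argue that once all capacity estimates of the relevant top arms are correct and $\hat{\bm\mu}$ has the right ordering among arms with substantially different means, $\Oracle(\hat{\bm\mu}_t,\bm m_t^l)$ returns exactly $\bm a^*$, so PIE in exploitation mode incurs zero instantaneous regret. The subtlety is the interplay between the capacity confidence bounds being only one-sided-useful ($m^l$ for PIE, $m^u$ for PUE), the possibility that $m_{k,t}^l<m_k$ for a top arm (so PIE under-loads it and leaks some plays down to a lower arm), and the fact that $L_t$ can differ from $L$ transiently. I would handle this by showing: (a) on $\mathcal G$, $m_{k,t}^l\le m_k\le m_{k,t}^u$ always, so under-loading arm $k$ by PIE sends the extra plays to the next-best arm, and the per-slot cost of this is absorbed into the UE-count bound because each such ``imperfect-PIE'' slot corresponds to an arm with $m_{k,t}^l\neq m_{k,t}^u$, i.e. to an arm still in $\mathcal Y_t$ at a nearby even slot — so by the pairing of odd/even slots the number of such slots is at most the number of PUE rounds plus a constant; and (b) the events $\{L_t\neq L\}$ or ``$\hat\mu$ mis-ranks two arms with $|\mu_i-\mu_j|>2\varepsilon$'' occur only finitely often in expectation because the relevant arms are pulled $\Omega(\log T)$ times and standard concentration (on $\mathcal G$, using the $\varepsilon<\min(\mu_k-\mu_{k+1})/2$ assumption) makes the mis-ordering probability summable, contributing $O(\varepsilon^{-2})$ per pair, hence the $13K^2N^2(4+\varepsilon^{-2})$ constant. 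Assembling (i)--(iv) and noting each per-slot loss is at most $f(\bm a^*)\le N\mu_1$ to cap the finitely-many bad slots gives \eqref{eq:oe_regret_upper_bound}; the bulk of the writing effort will be the careful bookkeeping in (iii) that every ``PIE did not play $\bm a^*$'' slot is chargeable either to a suboptimal-arm exploration, to a PUE round, or to a summable concentration failure.
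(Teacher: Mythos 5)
Your plan is correct and follows essentially the same route as the paper's proof: the same three-way decomposition into KL-UCB-controlled deliberate explorations of suboptimal arms, PUE rounds charged against the sample-complexity bound of Theorem~\ref{cor:m_sample_complexity} with \(\delta=2/T\) (which is exactly where the \(T\ge\exp(49m_k^2/\mu_k^2)\) hypothesis enters), and a finite-in-expectation set of mis-ordering/confidence-failure slots yielding the \(13K^2N^2(4+\varepsilon^{-2})\) constant. The only cosmetic difference is that the paper bounds the expected cardinality of the bad time-slot sets directly (via the events \(\mathcal{A},\mathcal{B},\mathcal{C},\mathcal{D}\) and the counting lemma borrowed from \citet{wang2020optimal}) rather than conditioning on a single global good event, and it charges each imperfect PIE slot to its paired PUE round exactly as you propose in your step (iii)(a).
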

\begin{proof}[Proof sketch of Theorem~\ref{thm:oe_regret_upper_bound}]
    The detailed proof is in Appendix~\ref{app:orchexplore_analysis}.
    \textbf{Step 1: show that the pulls of suboptimal arms are mainly caused by the deliberate explorations in PIE (line~\ref{alg:oe:par_exp}).}
    That is, except PIE's deliberate explorations, the cost of pulling suboptimal arms are finite, which is bounded by the last term in the RHS of Eq.(\ref{eq:oe_regret_upper_bound}).

    \textbf{Step 2: upper bound the cost of the suboptimal arms' deliberate explorations in PIE (line~\ref{alg:oe:par_exp}).} This cost, due to the advantage of KL-UCB index, corresponds to Eq.(\ref{eq:oe_regret_upper_bound})'s first term
    and a part of its second and third terms.

    \textbf{Step 3:  upper bound the cost of united explorations for optimal arms in PUE.}
    After covering the cost of exploring suboptimal arms in {Step 1} and {Step 2},
    we only need to consider the cost of exploring optimal arms in PUE.
    The total cost of these UEs is measured by the capacity estimator's sample complexity upper bound
    in Theorem~\ref{cor:m_sample_complexity}.
    This corresponds to Eq.(\ref{eq:oe_regret_upper_bound})'s second and third terms.
\end{proof}

From Theorem~\ref{thm:oe_regret_upper_bound}, letting \(T\to \infty\) and \(\varepsilon\to 0 \), one immediately obtains the following corollary.
\begin{corollary}
    The \emph{\texttt{OrchExplore}} algorithm's regret is asymptotically upper bounded as follows:
    \begin{equation}\label{eq:oe_regret_upper_bound_asym}
        \begin{split}
            &\limsup_{T\to\infty}\frac{\ERT}{\log T} \le \sum_{k=L+1}^K \frac{\Delta_{L,k}}{\kl(\mu_k,\mu_L)} \\
            &\qquad\quad + \sum_{k=1}^{L-1} \frac{49w_km_k^2}{\mu_k^2} + \frac{49 w_L m_L^2}{(m_L - \bar{m}_L+ 1)^2 \mu_L^2}.
        \end{split}
    \end{equation}
\end{corollary}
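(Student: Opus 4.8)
The plan is to pass to the limit directly in the finite-horizon bound of Theorem~\ref{thm:oe_regret_upper_bound}. First I would fix any \(\varepsilon>0\) with \(\varepsilon<\min_{k=1}^{K-1}(\mu_k-\mu_{k+1})/2\). For every \(T\ge\max_{1\le k\le L}\exp(49m_k^2/\mu_k^2)\) the right-hand side of Eq.(\ref{eq:oe_regret_upper_bound}) is a deterministic function of \(T\) and \(\varepsilon\), so I would divide that inequality by \(\log T\), use \((\log T+4\log(\log T))/\log T\to 1\) and \(13K^2N^2(4+\varepsilon^{-2})/\log T\to 0\) while the explicit \(\log T\) factors in the second and third terms cancel, and conclude that the limit of the right-hand side as \(T\to\infty\) exists. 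Hence, for every such \(\varepsilon\),
\[
\limsup_{T\to\infty}\frac{\ERT}{\log T}
\le \sum_{k=L+1}^K \frac{\Delta_{L,k}}{\kl(\mu_k+\varepsilon,\mu_L-\varepsilon)}
+ \sum_{k=1}^{L-1}\frac{49w_km_k^2}{\mu_k^2}
+ \frac{49w_Lm_L^2}{(m_L-\bar{m}_L+1)^2\mu_L^2}.
\]

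Next I would remove the \(\varepsilon\)-dependence. The last two terms on the right do not involve \(\varepsilon\), so only the first sum matters. Since \(\mu_1>\cdots>\mu_K\), each index \(k>L\) satisfies \(\mu_k<\mu_L\), whence \(\kl(\mu_k,\mu_L)>0\); moreover \(\kl(\cdot,\cdot)\) --- the KL-divergence of two Bernoulli laws, or of two Gaussians with common variance --- is jointly continuous in its arguments on the relevant domain, so \(\kl(\mu_k+\varepsilon,\mu_L-\varepsilon)\to\kl(\mu_k,\mu_L)\) and thus \(\Delta_{L,k}/\kl(\mu_k+\varepsilon,\mu_L-\varepsilon)\to\Delta_{L,k}/\kl(\mu_k,\mu_L)\) as \(\varepsilon\downarrow 0\). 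Because the displayed inequality holds for all admissible \(\varepsilon\) while its left-hand side is independent of \(\varepsilon\), I can take the infimum over \(\varepsilon\) (equivalently, let \(\varepsilon\downarrow 0\)) on the right, which yields exactly Eq.(\ref{eq:oe_regret_upper_bound_asym}).

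There is essentially no difficult step here: the corollary is a routine limiting argument. The only points that warrant a line of justification are that the \(\log(\log T)/\log T\) and constant/\(\log T\) contributions vanish, that \(\kl\) is continuous with \(\kl(\mu_k,\mu_L)>0\) for \(k>L\) so that the \(\varepsilon\downarrow 0\) limit of the first sum is well behaved, and that the restriction \(T\ge\max_{1\le k\le L}\exp(49m_k^2/\mu_k^2)\) from Theorem~\ref{thm:oe_regret_upper_bound} is immaterial once we send \(T\to\infty\).
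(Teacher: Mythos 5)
Your proposal is correct and matches the paper's approach: the paper proves this corollary simply by "letting \(T\to\infty\) and \(\varepsilon\to 0\)" in Theorem~\ref{thm:oe_regret_upper_bound}, which is exactly the two-step limiting argument you carry out (fix \(\varepsilon\), take \(\limsup_{T\to\infty}\), then let \(\varepsilon\downarrow 0\) using continuity and positivity of \(\kl(\mu_k,\mu_L)\)). Your version merely makes explicit the routine justifications the paper leaves implicit.
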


Comparing the regret upper bound in Eq.(\ref{eq:oe_regret_upper_bound_asym}) to the regret lower bound in Theorem~\ref{thm:regret_lower_bound} shows that their first terms are the same (i.e., optimal)
and their second and third terms --- in the Gaussian reward case --- both match.

\vspace{-0.7\baselineskip}
\section{Evaluation}\label{sec:simulation}
\vspace{-0.35\baselineskip}
We conduct simulations to validate the performance of \texttt{OrchExplore} in Algorithm~\ref{alg:oe_algorithm} and compare it to other algorithms adapted from MAB.
Consider a \texttt{MP-MAB-SA} problem with $K=9$ arms and $N=7$ plays. The arms' ``per-load'' reward means and capacities are as follows.

\centerline{
    \tabcolsep=0.08cm
    \begin{tabular}{|c||ccccccccc|}
        \hline
        Arm index $k$       & 1   & 2   & 3   & 4   & 5   & 6   & 7   & 8   & 9   \\
        \hline
        Reward mean $\mu_k$ & 0.9 & 0.8 & 0.7 & 0.6 & 0.5 & 0.4 & 0.3 & 0.2 & 0.1 \\
        Capacity $m_k$      & 2   & 4   & 3   & 3   & 2   & 1   & 3   & 4   & 2   \\
        \hline
    \end{tabular}
}\vspace{-0.25\baselineskip}
The ``per-load'' rewards follows Bernoulli distributions.
The optimal action $\bm{a}^*$ is $(2,4,1,0,\ldots,0)$ and its expected reward $f(\bm{a}^*) = 5.7$.
Each simulation is averaged over 200 realizations.
We set $\delta=2/T$ as default.
The Gaussian distribution case is evaluated in Appendix~\ref{appsub:guassian-simulation}.
We also apply \texttt{OrchExplore} to a 5G \& 4G base station selection application in Appendix~\ref{app:real_world_simulation}.


\begin{figure}[tb]
    \centering
    \subfloat[\texttt{OrchExplore} \textit{vs.} \texttt{MP-SE-SA} \textit{vs.} \texttt{ETC-UCB}\label{subfig:fine_grained_update}]{\includegraphics[width=0.5\columnwidth]{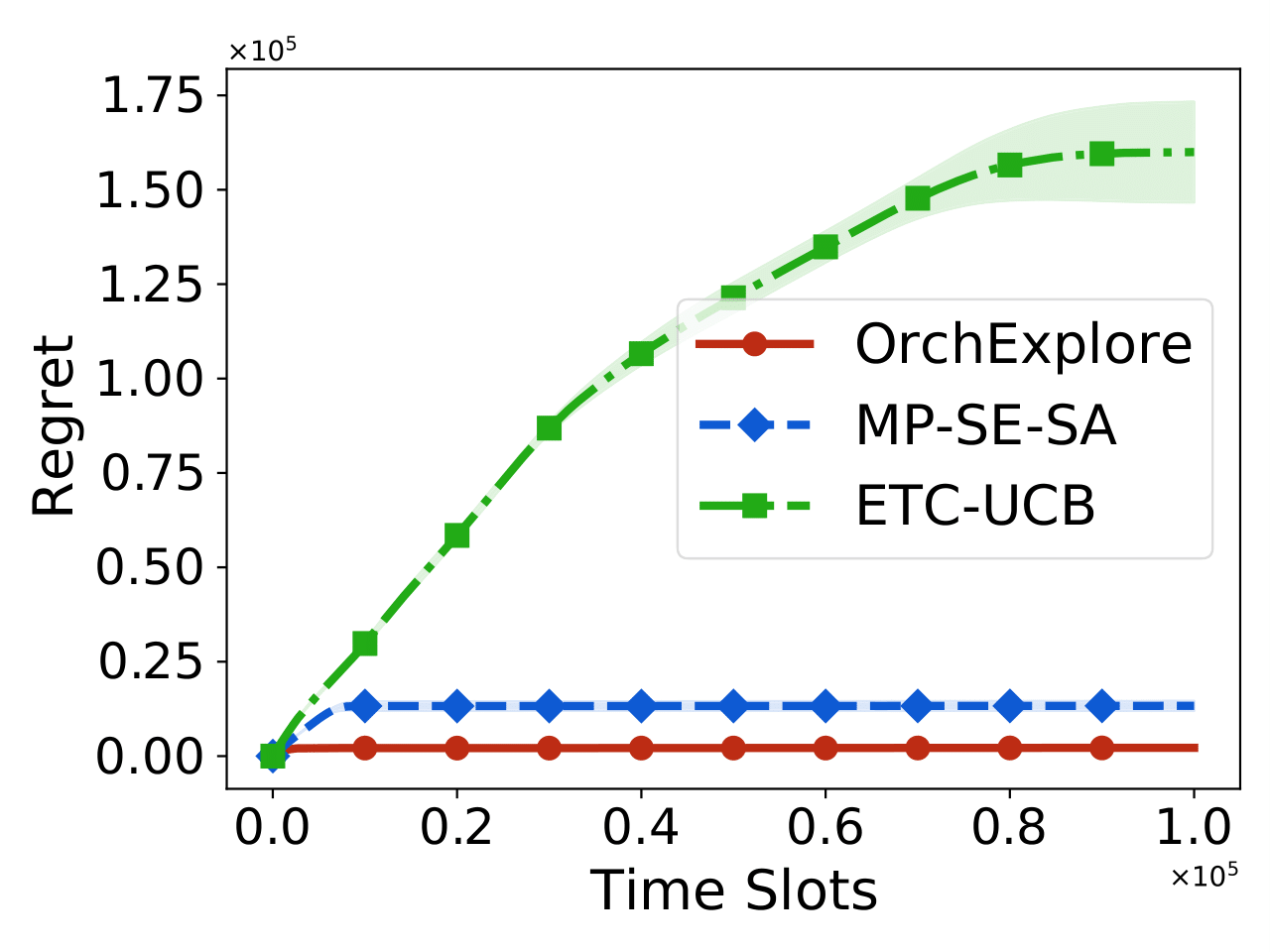}}
    \subfloat[Improvement of UCI\label{subfig:hfd_vs_uci}]{\includegraphics[width=0.5\columnwidth]{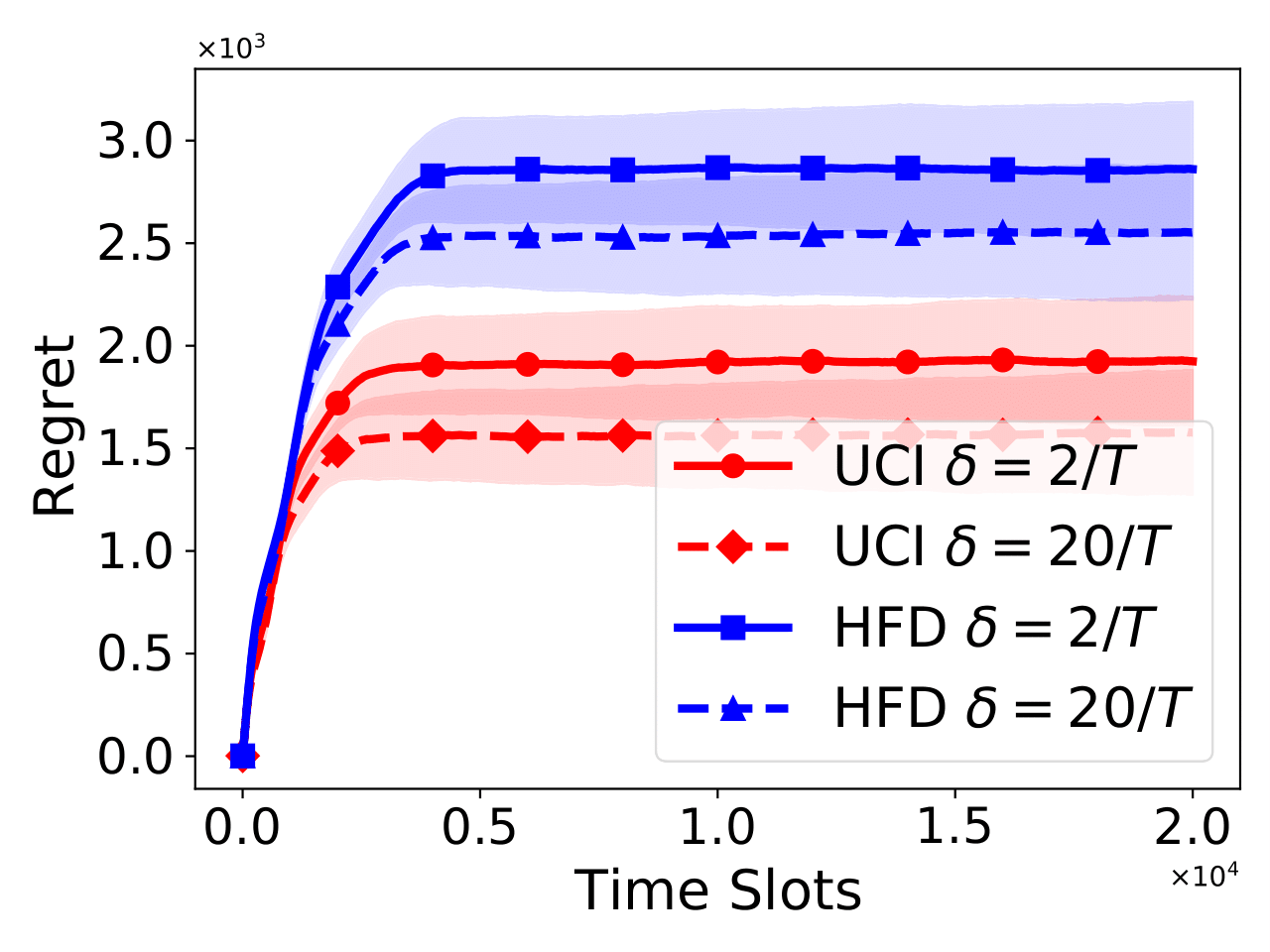}}\\
    \subfloat[Price of learning $m_k$\label{subfig:know_or_not}]{\includegraphics[width=0.5\columnwidth]{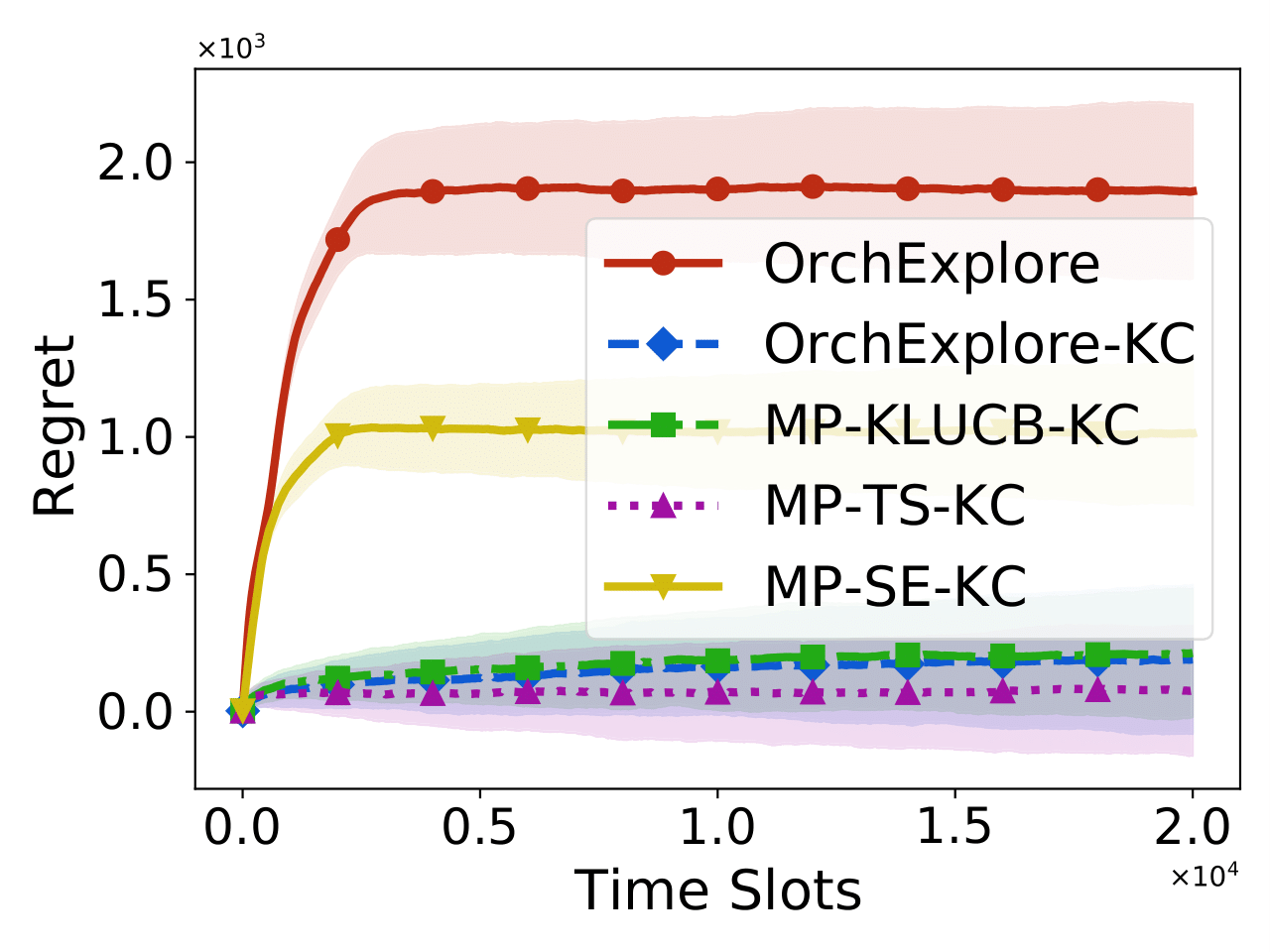}}
    \subfloat[Implicitly learn $m_k$\label{subfig:learn_or_not}]{\includegraphics[width=0.5\columnwidth]{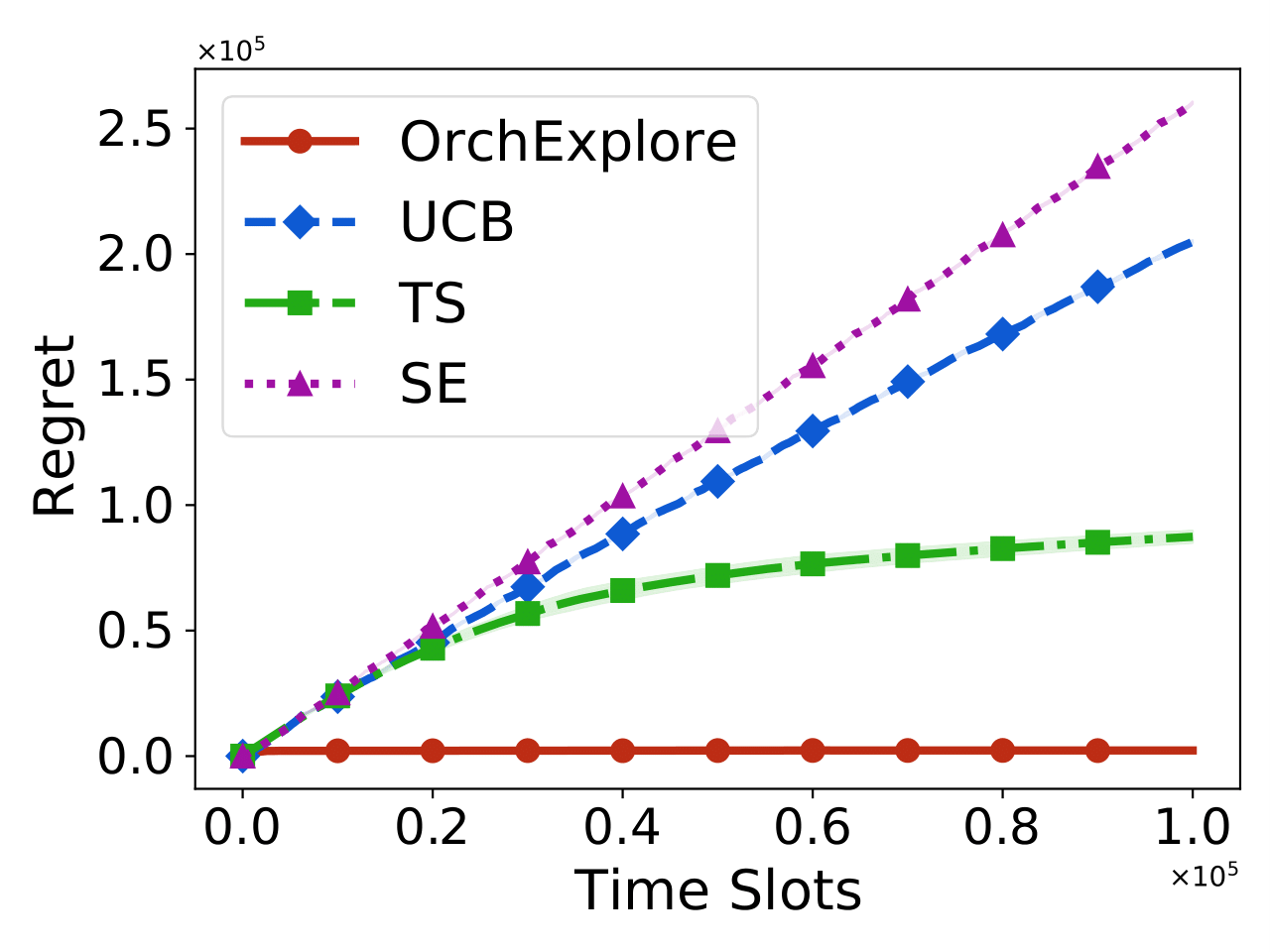}}
    \caption{Evaluation under Bernoulli Rewards}
\end{figure}

\vspace{-0.35\baselineskip}
\textbf{\texttt{OrchExplore} \textit{vs.} \texttt{MP-SE-SA} \textit{vs.} \texttt{ETC-UCB}. }
Besides \texttt{OrchExplore}, we also design other two algorithms for addressing
\texttt{MP-MAB-SA}:
the \texttt{ETC-UCB} two-phase algorithm where the ETC phase learns the reward capacity and the UCB phase
handles the reward means (Appendix~\ref{app:etc_ucb}),
and the elimination based algorithm \texttt{MP-SE-SA} which
learns the capacities and reward means in a fine-grained style (Appendix~\ref{sec:se_algorithm}).
Both algorithms enjoys lower computation complexity and
are more flexible in application, e.g., in batched learning,
while only \texttt{OrchExplore}'s regret is tight.
Figure~\ref{subfig:fine_grained_update}
shows the superiority of \texttt{OrchExplore} than
\texttt{ETC-UCB} and \texttt{MP-SE-SA}.
It validates the efficacy of parsimonious individual and united explorations.
\begin{remark}[Theoretical results comparison of \texttt{OrchExplore} to \texttt{MP-SE-SA} and \texttt{ETC-UCB}]
    \texttt{OrchExplore}
    (Theorem~\ref{thm:oe_regret_upper_bound}, \(O((\sum_{k=L+1}^K\Delta_{L,k}/\text{kl}(\mu_k,\mu_L))
    +\sum_{k=1}^Lm_k^2/\mu_k^2)\log T\))
    has a tighter regret upper bound
    than \texttt{ETC-UCB}
    (Theorem~\ref{thm:etc_ucb_bound},
    \(O((\sum_{k=L+1}^K \Delta_{1,k}m_k/\Delta_{L,k}^2 +
    \sum_{k=1}^K m_k^2/\mu_k^2)\log T)\)) and
    \texttt{MP-SE-SA}
    (Theorem~\ref{thm:main_regret},
    \(O((\sum_{k=L+1}^K \Delta_{1,k}m_k/\Delta_{L,k}^2 +
    \sum_{k=1}^N m_k^2/\mu_k^2)\log T)\))
    ---
    \texttt{OrchExplore}'s the first regret upper bound term
    matches
    the lower bound's first term while the other two's are not,
    and its second term is also smaller since \(L\le N<K\)
    in the summation range.
\end{remark}

\vspace{-0.35\baselineskip}
\textbf{Improvement of UCI over Hoeffding's inequality. }
Figure~\ref{subfig:hfd_vs_uci} illustrates that \texttt{OrchExplore} with uniform confidence interval (UCI) outperforms the others which use Hoeffding's inequality (HFD).
This confirms that the employed UCI is sharper than HFD.

\vspace{-0.35\baselineskip}
\textbf{The price of learning capacity. }
Figure~\ref{subfig:know_or_not}
compares \texttt{OrchExplore} with other four algorithms with \emph{known} capacity (KC):
\texttt{OrchExplore}-KC, KL-UCB~\cite{cappe_kullback-leibler_2013}, Thompson Sampling (TS)~\cite{komiyama_optimal_2015}, and successive elimination (SE)~\cite{perchet_multi-armed_2013},
where they select the empirical optimal action according to each arm's index and the known capacity.
Comparing the performance of \texttt{OrchExplore} to \texttt{OrchExplore}-KC's shows that the price of learning capacity is much larger than estimating reward means alone.

\vspace{-0.35\baselineskip}
\textbf{Comparison to implicitly learning capacity algorithms. }
One can regard the \texttt{MP-MAB-SA} as an MAB with the action space $\mathscr{A}$, i.e., each $N$-play allocation (action) as an independent arm.
With such transformation, there is no need to consider the shareable arms setting but to ``implicitly learn'' about arms' capacities.
We apply UCB, TS and SE to this MAB.
Figure~\ref{subfig:learn_or_not} shows that our \texttt{OrchExplore} outperforms those implicitly learning strategies.
The result is not surprising as that $\abs{\mathscr{A}}{\sim} K^N$ is very large,
and this confirms the necessity of modelling the shareable arms setting and devising \texttt{OrchExplore} to
tackle the problem.


\section{Conclusion}\label{sec:conclusion}
\vspace{-0.5\baselineskip}
We generalize the MP-MAB model to allow several plays sharing an arm.
This new model contains two groups of unknown parameters: arm's finite reward capacities
and ``per-load'' reward means,
based on which arms are associated with load-dependent stochastic rewards.
With load-dependent observations, the learning tasks of both types of parameters are \emph{coupled}:
without known one, it is difficult to learn the other.
Surprisingly,
we prove a regret lower bound (for Gaussian rewards)
which dichotomizes both learning tasks' regret costs,
and also propose an algorithm (\texttt{OrchExplore})
which achieves a tight regret upper bound
whose terms respectively match the cost due to distinguishing suboptimal reward means
and the cost due to learning reward capacities in the regret lower bound.

\section*{Acknowledgements}

We would like to thank anonymous reviewers from ICML 2022 and AISTATS 2022
for their comments that helped us improve this paper.
The work of Xuchuang Wang and John C.S. Lui was supported in part by the RGC SRFS2122-4202.
The work of Hong Xie was supported by Chongqing Talents:
Exceptional Young Talents Project (cstc2021ycjhbgzxm0195).

\bibliography{optimal_sharing_bandits.bib}
\bibliographystyle{icml2022}

\newpage
\appendix\onecolumn
\section{The Appendix Overview}

In the section, we provide a road map of the appendix: \begin{itemize}
	\item Appendix~\ref{app:model}: further motivates our reward model in Eq.(\ref{eq:arm_level_feedback}).
	\item Appendix~\ref{app:lower_bound}: provides the proof of lower bounds. 
	It includes: the KL-divergence's detail calculation in Appendix~\ref{appsub:detail_kl_upper_bound},
	a sketch of Theorem~\ref{thm:sample_lower_bound}'s second part in Appendix~\ref{appsub:sample-complexity-lower-bound-second-part},
	and the regret lower bound's full proof in Appendix~\ref{app:regret_lower_bound}.
	\item Appendix~\ref{app:capacity_proof}: provides learning reward capacity (Section~\ref{sec:learning_sharing_capacity})'s proofs including the uniform confidence interval (UCI)'s design (Appendix~\ref{appsub:uniform-confidence-interval}) and our estimator's sample complexity upper bound's proof (Appendix~\ref{appsub:proof-sample-complexity-upper-bound}). 
	\item Appendix~\ref{app:orchexplore_analysis}: proves the \texttt{OrchExplore} algorithm's regret upper bound (Theorem~\ref{thm:oe_regret_upper_bound}).
	\item Appendix~\ref{sec:se_algorithm}: devises an successive elimination based algorithm called \texttt{MP-SE-SA}.
	\item Appendix~\ref{sec:se_analysis}: provides the \texttt{MP-SE-SA} algorithm's regret upper bound analysis.
	\item Appendix~\ref{app:etc_ucb}: designs a two-phase algorithm called \texttt{ETC-UCB} whose ETC phase learns the capacities and UCB phase deals with the reward means, and provides its regret upper bound analysis.
	\item Appendix~\ref{app:addition-simulation}: provides additional empirical evaluations.
	It includes a real world application in Appendix~\ref{app:real_world_simulation}
	and a Gaussian rewards evaluation of Section~\ref{sec:simulation} in Appendix~\ref{appsub:guassian-simulation}.
	\item Appendix~\ref{sec:hfd_ci}: compare our uniform confidence interval (UCI) to Hoeffding's inequality based UCI.	
\end{itemize}

\section{Model Motivations}\label{app:model}

\subsection{Motivate the Reward Capacity {$m_k$}}\label{appsub:motive_m}

\textbf{Mobile edge computing: }
To illustrate, consider the mobile edge computing application,
where an offloading spot with $N$ tasks is covered by $K$ edge servers.
Each arm can model an edge server and each play can model a task.
$N$ plays represent assigning $N$ tasks to these servers.
The $m_k$ can model the number of computing units (e.g., cores of a CPU)
of the $k$-th edge server and $X_k$ can model the
reward (e.g., quantified by the completion time of a task) from one computing unit.

\textbf{Cognitive radio network: }
Another example is the channel selection in cognitive radio networks
where there are $K$ opportunistic channels for $N$ secondary users.
An arm can model a channel and a play can model a secondary user.
$N$ plays can model allocating $N$ secondary users to opportunistic channels
The $m_k$ can model the maximum number of connections that the
$k$-th opportunistic channel can support,
and $X_k$ can model the
utility of supporting a connection.
Note that  $X_k$ is a random variable capturing
the stochastic availability of the $k$-th opportunistic channel.

\subsection{Motivate the Reward Model in Eq.(\ref{eq:arm_level_feedback}): {$R_k(a_{k,t})\triangleq \min\{a_{k,t},m_k\}\cdot X_k$}}\label{appsub:motive_R}
\textbf{Mobile edge computing: }
For example, in edge computing systems,
the reward $R_k(a_{k,t})$ can model the total amount of time
to process $a_{k,t}$ tasks at edge server $k$.
Eq.(\ref{eq:arm_level_feedback}) captures that
each task gets one unit of computing resource
if the number of tasks $a_{k,t}$ is less than the number of computing
units $m_k$, otherwise those tasks will equally share the $m_k$ resources.

\textbf{Cognitive radio network: }
In cognitive radio networks,
the reward $R_k(a_{k,t})$ can model the total utility
of $a_{k,t}$ secondary users assigned to channel $k$.
Eq.(\ref{eq:arm_level_feedback}) captures that
each unity of the opportunistic spectrum is
allocated to one secondary user
if the number of secondary users $a_{k,t}$
is less than the number of spectrum connection $m_k$,
otherwise these secondary users will equally share the $m_k$ connections.

That \(m_k\) capacities' rewards are the same random variable \(X_k\)
models the availability of a channel: if the channel is occupied by a primary user \((X_k=0)\),
then no secondary user can access it; otherwise \((X_k=1)\),
secondary users can share the channel up to its capacity.

\section{Proofs of Lower Bounds}\label{app:lower_bound}

\subsection{Sample Complexity Lower Bound \\ Theorem~\ref{thm:sample_lower_bound}'s Step (3): Detail Derivation of the KL-divergence upper bound}
\label{appsub:detail_kl_upper_bound}

Recall that we assume \(m_k^{(0)} < m_k^{(1)}\) and \(X_k\sim\mathcal{N}(\mu_k,\sigma_k^2)\).
When \(m_k^{(0)}< l \le m_k^{(1)}\), we have
\[
	\begin{split}
		\KL(\P_0^l, \P_0^l)_{\{m_k^{(0)}< l \le m_k^{(1)}\}}
		& = \KL(m_k^{(0)}\mathcal{N}(\mu_k,\sigma_k^2), l\mathcal{N}(\mu_k,\sigma_k^2)) \\
		& = \frac 1 2\left( \log\left( \frac{l^2}{\left(m_k^{(0)}\right)^2}\right) +
		\frac{\left(m_k^{(0)}\right)^2}{l^2} - 1 \right) +
		\frac{\left(l-m_k^{(0)}\right)^2\mu_k^2}{2l^2\sigma_k^2} \\
		& \le \frac 1 2\left( \log\left( \frac{\left(m_k^{(1)}\right)^2}{\left(m_k^{(0)}\right)^2}\right) +
		\frac{\left(m_k^{(0)}\right)^2}{\left(m_k^{(1)}\right)^2} - 1 \right) +
		\frac{\left(m_k^{(1)}-m_k^{(0)}\right)^2\mu_k^2}{2\left(m_k^{(1)}\right)^2\sigma_k^2}\\
		& = \KL(m_k^{(0)}\mathcal{N}(\mu_k,\sigma_k^2), m_k^{(1)}\mathcal{N}(\mu_k,\sigma_k^2))\\
		& = \KL(\P_0^l, \P_0^l)_{\{m_k^{(1)}< l \le N\}},
	\end{split}
\]
where the inequality is due to that (1) \(F(x) = \log x + 1/x -1\) is increasing in \(x>1\) and \(x\gets (l/m_k^{(0)})^2\);
(2) the second term is also increasing in \(l\); and (3) \(m_k^{(0)}< l \le m_k^{(1)}\).
From this, we obtain \(
\KL(\P_0^l,\P_1^l)_{\{m_k^{(0)}<l\le m_k^{(1)}\}}
\le \KL(\P_0^l,\P_1^l)_{\{m_k^{(1)}<l\le N\}}.
\)

Then, given \(m_k^{(0)}=m_k-1, m_k^{(1)} = m_k\) we turn to bound \(\KL(m_k^{(0)}\mathcal{N}(\mu_k,\sigma_k^2), m_k^{(1)}\mathcal{N}(\mu_k,\sigma_k^2))\)
\[
	\begin{split}
		\KL(m_k^{(0)}\mathcal{N}(\mu_k,\sigma_k^2), m_k^{(1)}\mathcal{N}(\mu_k,\sigma_k^2))
		=& \frac 1 2\left( \log\left( \frac{\left(m_k^{(1)}\right)^2}{\left(m_k^{(0)}\right)^2}\right) +
		\frac{\left(m_k^{(0)}\right)^2}{\left(m_k^{(1)}\right)^2} - 1 \right) +
		\frac{\left(m_k^{(1)}-m_k^{(0)}\right)^2\mu_k^2}{2\left(m_k^{(1)}\right)^2\sigma_k^2}\\
		=& F\left( \left(  \frac{m_k}{m_k-1} \right)^2\right) + \frac{\mu_k^2}{2m_k^2\sigma_k^2}\\
		\le & F(4) + \frac{\mu_k^2}{2m_k^2\sigma_k^2}\\
		= & \underbrace{2\log 2 -0.75}_{\le 1} + \frac{\mu_k^2}{2m_k^2\sigma_k^2}\\
		\le & \frac{\mu_k^2}{m_k^2\sigma_k^2},
	\end{split}
\]
where the first inequality is due to that \(F(x)\) reaches its maximum in the largest \(x=\left( \frac{m_k}{m_k-1} \right)^2\), i.e., when \(m_k=2\),
and the last inequality is due to the condition that \(\mu_k^2/m_k^2\sigma_k^2\ge 2\).
In the case of binary set \(m_k^{(0)}=m_k, m_k^{(1)}=m_k+1\), a similar upper bound can also be derived.

To sum up, we obtain the KL-divergence terms' upper bounds as follows,
\[
	\KL(\P_0^l,\P_1^l)_{\{m_k^{(0)}<l\le m_k^{(1)}\}}
	\le \KL(\P_0^l,\P_1^l)_{\{m_k^{(1)}<l\le N\}} \le \frac{\mu_k^2}{m_k^2 \sigma_k^2}.
\]

\subsection{Sample Complexity Lower Bound (Theorem~\ref{thm:sample_lower_bound}): \\
	Identifying Whether a Capacity \(m_k\) is Greater Than \(d(\ge 2)\) or Not}\label{appsub:sample-complexity-lower-bound-second-part}

The second part's proof is similar to first's.
We highlight two differences in step 1 and step 3.

\textbf{Step 1: reduce the task to hypothesis testing. }
The original task is now to determine whether the capacity \(m_k\)
is in the set \(\{1,\dots, d-1\}\) or in the set \(\{d,\dots, N\}\).
This task can be reduced to find \(m_k\) from the binary set, e.g.,
\(\{d-1, m_k\}\)
in the case that \(m_k\ge d\) or \(\{m_k, d\}\) in the case that \(m_k<d\).

\textbf{Step 3: calculate the KL-divergence. }
Take the binary set \(\{d-1, m_k\}(d\le m_k)\) as an example.
In the case of binary set \(\{m_k, d\}(m_k < d)\), similar derivation also holds.
We can decompose the KL-divergence term and upper bound is as follows:
\begin{equation*}
	\begin{split}
		\KL(\P_0^{\otimes h_n}, \P_1^{\otimes h_n})
		=& \sum_{l=1}^{N} n_l\KL(\P_0^l,\P_1^l)
		= \sum_{l=1}^{d-1} n_l\KL(\P_0^l,\P_1^l) + \sum_{l=d}^{m_k} n_l\KL(\P_0^l,\P_1^l)
		+ \sum_{l=m_k+1}^{N} n_l\KL(\P_0^l,\P_1^l)\\
		\le&
		\frac{\sum_{l=d}^N n_l (m_k-d+1)^2\mu_k^2}{m_k^2 \sigma_k^2}
		\le \frac{n (m_k-d+1)^2\mu_k^2}{m_k^2 \sigma_k^2},
	\end{split}
\end{equation*}
where the first inequality is based on the inequality that \(
0 = \KL(\P_0^l,\P_1^l)_{\{0<l\le d-1\}}
\le \KL(\P_0^l,\P_1^l)_{\{d-1<l\le m_k\}}
\le \KL(\P_0^l,\P_1^l)_{\{m_k<l\le N\}} \le \frac{(m_k-d+1)^2\mu_k^2}{m_k^2 \sigma_k^2}
\) whose last inequality needs the condition that
\((m_k-d+1)^2\mu_k^2/(m_k^2\sigma_k^2) \ge 2\log(N/(d-1))\).
This is a counterpart condition to the \(\mu_k^2/m_k^2\sigma_k^2\ge 2\) condition in the first part's proof.

\subsection{Regret Lower Bound Proof}\label{app:regret_lower_bound}

We first state the definition of the consistent policies in Definition~\ref{def:consistent}.

\begin{definition}\label{def:consistent}
	A strategy \(\phi\) is \textit{consistent} if for all bandits environment, for all suboptimal action \(\bm{a}\), for all \(0 < \alpha \le 1\), it satisfies \(\E[N_{\phi,\bm{a}}(T)] = o(T^\alpha),\) where \(N_{\phi,\bm{a}}(T)\) is the number of times that the action \(\bm{a}\) is chosen in the strategy \(\phi\) up to time \(T\).
\end{definition}

\begin{proof}[Proof of Theorem~\ref{thm:regret_lower_bound}]

	This proof consists of two steps. In the first step, we bound the cost of exploring suboptimal arms.
	It is based on the classic result of MP-MAB~\citep{anantharam1987asymptotically}.
	In the second step, we utilize the sample complexity lower bound results of Theorem~\ref{thm:sample_lower_bound} and Remark~\ref{rmk:lower-bound-key-remark} to quantity the least cost of
	learning these top \(L-1\) optimal arms' reward capacities and the arm \(L\)'s capacity lower bound.

	\textit{We note that these two steps' regrets are orthogonal because the first step's regret is due to exploring suboptimal arms while the second step's regret is from learning optimal arms' reward capacities.}

	\textbf{Step 1: regret lower bound of exploring suboptimal arms.}

	We recall the uniformly good strategy definition from~\citet{anantharam1987asymptotically}.
	\begin{definition}[cf. {\cite{anantharam1987asymptotically}}]\label{def:uniformly-good}
		A strategy \(\phi\) is uniformly good on the MP-MAB problem if for all bandits environment, for all suboptimal arm \(k\), for all \(0 < \alpha \le 1\), it satisfies \(\E[N_{\phi,k}(T)] = o(T^\alpha),\) where \(N_{\phi,k}(T)\) is the number of times that the arm \(k\) is pulled in the strategy \(\phi\).
	\end{definition}

	Since Definition~\ref{def:consistent} guarantees that any suboptimal action would be
	selected only with \(o(T^\alpha)\) number of times, it implies
	that any suboptimal arm is also only pulled \(o(T^\alpha)\) times ---
	the uniformly good property in Definition~\ref{def:uniformly-good}. Then, we adapt the result of {MP-MAB} as follows:
	\begin{lemma}[{Adapted from~\citep[Theorem 3.1]{anantharam1987asymptotically}}]
		Let \(\phi\) be a uniformly good algorithm. For each suboptimal arm \(k\) and each \(\epsilon>0\),
		we have \[
			\liminf_{T\to \infty}\frac{\E[N_{\phi, k}(T)]}{\log T} \ge \frac{1}{\KL(v_k, v_L)},
		\]
		where arm \(L\) is the least favored optimal arm,
		$\KL$ represents KL-divergence,
		and $v_k$ is the reward distribution of arm $k$.
	\end{lemma}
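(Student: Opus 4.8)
This lemma is the multiple‑play, shareable‑arm analogue of the Lai--Robbins / Burnetas--Katehakis asymptotic pull lower bound, so I would prove it by a change‑of‑measure argument, essentially porting the proof of \citet{anantharam1987asymptotically} and checking that the load‑dependent feedback does not break any step. The plan is: fix a suboptimal arm $k$ (so $k>L$ and $\mu_k<\mu_L$), and build an alternative environment $\nu'$ that differs from the true environment $\nu$ \emph{only} in arm $k$'s per‑load reward law, chosen so that (i) arm $k$ becomes one of the optimal arms in $\nu'$, while (ii) the KL cost of the perturbation is only barely more than $\KL(v_k,v_L)$. Consistency of $\phi$ in $\nu'$ then forces arm $k$ to be pulled $T-o(T^\alpha)$ times in $\nu'$, whereas the cheapness of the perturbation, through the divergence decomposition, forces $\E_\nu[N_{\phi,k}(T)]$ to be at least about $\log T/\KL(v_k,v_L)$.

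\textbf{Construction and the two ``soft'' steps.} First I would define $\nu'$ by replacing $v_k$ with $v_k'$ in the same one‑parameter family (Bernoulli in the $[0,1]$ case, $\mathcal N(\cdot,\sigma^2)$ in the Gaussian case) whose mean $\mu_k'$ is slightly above $\mu_L$. Because the greedy ``fill arms by decreasing per‑load mean up to capacity'' rule produces a unique optimal action when the means are distinct, and since $m_k\ge 1$ and $\bar m_L\ge 1$, the $\nu'$‑optimal action assigns at least $\min\{\bar m_L,m_k\}\ge 1$ plays to arm $k$; hence every action $\bm a$ with $a_k=0$ is strictly suboptimal in $\nu'$. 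I would then pick $\mu_k'$ close enough to $\mu_L$ that, by continuity of $q\mapsto\KL(v_k,\mathrm{Ber}(q))$ (resp.\ $q\mapsto\KL(v_k,\mathcal N(q,\sigma^2))$), we have $\KL(v_k,v_k')\le \KL(v_k,v_L)+\epsilon$. Next, since $\phi$ is consistent in \emph{every} environment (Definition~\ref{def:consistent}), it is consistent in $\nu'$, so each $\nu'$‑suboptimal action is used $o(T^\alpha)$ times for every $\alpha>0$; summing over the finitely many actions $\bm a\in\mathscr A$ with $a_k=0$ gives $\E_{\nu'}[T-N_{\phi,k}(T)]=o(T^\alpha)$ for all $\alpha>0$.

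\textbf{The quantitative core.} For the divergence decomposition: the observation up to time $T$ is the semibandit history; $\nu$ and $\nu'$ agree on every arm $\ne k$ and on every round with $a_{k,t}=0$, and on a round with $a_{k,t}\ge 1$ the feedback is $c\,X_k$ with $c\coloneqq\min\{a_{k,t},m_k\}\ge 1$ --- a bijective rescaling of $X_k$ --- so its law under $\nu$ versus $\nu'$ has KL divergence exactly $\KL(v_k,v_k')$. By the standard chain‑rule/stopping argument for adaptively sampled data, $\KL(\P_\nu^{\mathcal F_T},\P_{\nu'}^{\mathcal F_T})=\E_\nu[N_{\phi,k}(T)]\,\KL(v_k,v_k')$; this is the only place the shareable‑arm feedback model enters. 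I would then close with the binary‑$\kl$ (Bretagnolle--Huber) transportation inequality and data processing applied to the event $A_T=\{N_{\phi,k}(T)\le (1-\epsilon)\log T/\KL(v_k,v_k')\}$, together with a first‑moment bound: using $\P_{\nu'}(A_T)=o(T^{-\beta})$ for every $\beta<1$ from the previous paragraph, this yields $\liminf_T \E_\nu[N_{\phi,k}(T)]/\log T\ge (1-\epsilon)/\KL(v_k,v_k')\ge (1-\epsilon)/(\KL(v_k,v_L)+\epsilon)$, and letting $\epsilon\downarrow 0$ (and noting $\KL(v_k,v_L)=\kl(\mu_k,\mu_L)$ for these families) finishes the proof.

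\textbf{Where the difficulty sits.} The routine computations (continuity of $\KL$ in the mean, invariance of $\KL$ under nonzero rescaling, the chain‑rule divergence decomposition, the Bretagnolle--Huber endgame) are all standard, and once the rescaling observation is in hand this is literally the argument of \citet{anantharam1987asymptotically}. The two points that genuinely need care are: (1) the combinatorial claim that a small upward perturbation of a single arm's per‑load mean makes ``never pulling that arm'' strictly suboptimal in the \emph{capacitated} multiple‑play problem --- one must verify that the greedy‑fill optimum of $\nu'$ really assigns $\ge 1$ play to arm $k$ and that no allocation avoiding arm $k$ can match it, which is exactly where distinctness of the means and the hypothesis $\bar m_L\ge 1$ are used; and (2) bridging from consistency stated over \emph{actions} to the per‑arm conclusion $\E_{\nu'}[T-N_{\phi,k}(T)]=o(T^\alpha)$, which is legitimate only because $\mathscr A$ is finite and independent of $T$ so that a sum of $o(T^\alpha)$ terms is still $o(T^\alpha)$. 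I expect (1) to be the main obstacle, though it is more a bookkeeping check than a deep issue.
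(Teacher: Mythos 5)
Your proposal is correct, but it does something the paper does not: it actually proves the lemma. In the paper this statement is imported wholesale --- it is labelled ``Adapted from Theorem 3.1 of \citet{anantharam1987asymptotically}'', and the only in-paper argument attached to it is the one-sentence bridge that consistency over \emph{actions} (Definition~\ref{def:consistent}) implies the ``uniformly good'' property over \emph{arms}, since the finitely many actions avoiding a given arm are each chosen $o(T^\alpha)$ times. Your step (2) reproduces exactly that bridge, and the rest of your argument is a self-contained change-of-measure proof in the modern style (perturbed instance with $\mu_k'$ slightly above $\mu_L$, divergence decomposition, binary-$\kl$/Bretagnolle--Huber endgame). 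The two adaptation points you flag are precisely the ones the paper leaves implicit when it cites the classical result: (i) that in the capacitated problem the perturbed instance's greedy-fill optimum assigns $\min\{m_k,\bar m_L\}\ge 1$ plays to arm $k$, so every action with $a_k=0$ is strictly suboptimal under $\nu'$ (your check is right, using distinctness of the means); and (ii) that the load-dependent observation $\min\{a_{k,t},m_k\}\,X_k$ is a fixed bijective rescaling of $X_k$ (the capacity $m_k$ is unchanged between $\nu$ and $\nu'$), so each round in which arm $k$ is observed contributes exactly $\KL(v_k,v_k')$ to the divergence decomposition, just as in the unscaled setting. What your route buys is a verification that \citet{anantharam1987asymptotically}'s conclusion genuinely transfers to the shareable-arm feedback model rather than being asserted to; what the paper's route buys is brevity. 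The one cosmetic mismatch is that the lemma's statement quantifies over $\epsilon>0$ without using it in the displayed bound --- an artifact of the adaptation --- whereas your $\epsilon$ plays its usual role and is sent to $0$ at the end, recovering the stated constant $1/\KL(v_k,v_L)$.
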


	In our \texttt{MP-MAB-SA} model, when pulling a suboptimal arm \(k\), the smallest cost is \(\mu_L-\mu_k\eqqcolon \Delta_{L,k}\) (if the arm is not shared).
	So, under any uniformly good algorithm, the total cost of pulling suboptimal arms (\(k>L\)) in \texttt{MP-MAB-SA} is asymptotically lower bounded as follows \[
		\sum_{k=L+1}^K \frac{\Delta_{L,k}}{\kl(\mu_k, \mu_L)}\log T,
	\]
	where we use \(\kl\) for Gaussian distributions with the same variance to replace the general KL-divergence \(\KL\).



	\textbf{Step 2: regret lower bound of learning optimal arms' reward capacities.}

	For any consistent strategy, it chooses the optimal action \(\bm{a}^*\) ``most of the time''.
	That is, after finishing all \(T\) rounds of arm pulling,
	the optimal action \(\bm{a}^*\) is selected with the highest frequency.
	From this evidence, one can recognize the optimal action \(\bm{a}^*=(m_1, \dots, m_{L-1}, \bar{m}_L,0,\dots, 0)\)
	from any consistent strategy's action sequence.
	Then, from the optimal action \(\bm{a}^*\),
	one can \textit{``read out''} top \(L-1\) optimal arms' capacities and the least favored optimal arm \(L\)'s
	capacity lower bound \(\bar{m}_L\).
	This is equivalent to learn these capacities (or its lower bound).
	Therefore,
	any consistent strategy actually finish the learning task
	and spends at least
	the sample complexity lower bound's number of explorations on these optimal arms.

	Recall in Remark~\ref{rmk:lower-bound-key-remark} we show:
	the number of ``irregular`` explorations ---
	where the number of plays exploring an arm is greater than the arm's capacity
	--- spent to learn an arm's capacity
	(or validate whether it is no less than an integer or not)
	should be no less the task's sample complexity lower bound.
	Each of these ``irregular'' explorations contributes a cost to regret.

	For any top \(L-1\) optimal arm \(k\), one needs to spend \(\frac{\sigma_k^2m_k^2}{\mu_k^2}\log(1/4\delta)\) number of
	explorations to accurately learn its capacity \(m_k\) with a confidence of at most \(1-\delta\).
	Each of these exploration costs at least \(\mu_k - \mu_L \eqqcolon \Delta_{k,L}\).
	If the estimation fails, it would leads to a linear cost at least \(\Delta_{k,L}\cdot T\).
	To sum up, the cost of learning arm \(k (<L)\)'s capacity is at least \[
		\begin{split}
			\Delta_{k,L}\cdot \frac{\sigma_k^2m_k^2}{\mu_k^2}\log(1/4\delta) + \delta \cdot \Delta_{k,L}T \ge
			\Delta_{k,L}\cdot \frac{\sigma_k^2m_k^2}{\mu_k^2}\left( \log \frac{\mu_k^2 T}{4\sigma_k^2m_k^2} + 1\right),
		\end{split}
	\]
	where the LHS reaches its minimum by letting \(\delta = \frac{\sigma_k^2m_k^2}{\mu_k^2 T}\).

	Similarly, for the least favored optimal arm \(L\), the least cost of identifying that the capacity is no less than \(\bar{m}_L\) is at least \[
		\begin{split}
			&\quad\Delta_{L,L+1}\cdot \frac{\sigma_L^2m_L^2}{(m_L - \bar{m}_L + 1)^2\mu_L^2}\log(1/4\delta) + \delta \cdot \Delta_{L, L+1}T \\
			&\ge
			\Delta_{L,L+1}\cdot \frac{\sigma_L^2m_L^2}{(m_L - \bar{m}_L + 1)^2\mu_L^2}
			\left( \log \frac{(m_L - \bar{m}_L + 1)^2\mu_L^2 T}{4\sigma_L^2m_L^2} + 1\right),
		\end{split}
	\]
	where the LHS's minimum is reached when \(\delta = \frac{\sigma_L^2m_L^2}{(m_L - \bar{m}_L + 1)^2\mu_L^2 T}\).

	Summing up the above costs and let \(T\to\infty\), we show the total cost in this part is asymptotically lower bounded as follows:
	\[
		\left( \sum_{k=1}^{L-1}\frac{\Delta_{k,L}\sigma_k^2 m_k^2}{\mu_k^2} + \frac{\Delta_{L,L+1}\sigma_L^2m_L^2}{(m_L - \bar{m}_L + 1)^2\mu_L^2} \right) \log T.
	\]
\end{proof}

\section{Learning Reward Capacity's Proofs}\label{app:capacity_proof}

\subsection{Uniform Confidence Interval for Reward Capacity: Proof of Lemma~\ref{lma:ufc_m}}\label{appsub:uniform-confidence-interval}
    We apply the following Lemma~\ref{lma:uci_basic} to measure $\hat{\mu}_{k,t}$ and $\hat{\nu}_{k,t}$'s uncertainty.
    \begin{lemma}[{\citep[Lemma 5]{bourel2020tightening}}]\label{lma:uci_basic}
        Let $Y_1, \ldots, Y_t$ be a sequence of $t$ i.i.d. real-valued random variables with mean $\mu$, such that $Y_t- \mu$ is $\sigma$-sub-Gaussian. Let $\mu_t=\frac{1}{t}\sum_{s=1}^t Y_s$ be the empirical mean estimate. Then, for all $\sigma\in(0,1)$, it holds
        \begin{equation*}
            \mathbb{P}\left(\exists t \in \mathbb{N}, \abs{\mu_t - \mu} \ge \sigma \sqrt{\left(1+\frac{1}{t}\right)\frac{2\log(\sqrt{t+1}/\delta)}{t}}\right) \le \delta.
        \end{equation*}
    \end{lemma}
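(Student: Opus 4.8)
The plan is to prove this time-uniform concentration bound by the \emph{method of mixtures}, building a nonnegative supermartingale from a Gaussian mixture of exponential martingales and then invoking Ville's maximal inequality. First I would center the observations by setting $S_t \coloneqq \sum_{s=1}^t (Y_s - \mu)$, so that $\mu_t - \mu = S_t/t$ and the target event concerns $\abs{S_t}$. Because each increment $Y_s - \mu$ is $\sigma$-sub-Gaussian, for every fixed $\lambda \in \R$ the process
\[
    M_t^\lambda \coloneqq \exp\!\left(\lambda S_t - \frac{\lambda^2 \sigma^2 t}{2}\right)
\]
is a nonnegative supermartingale with respect to the natural filtration, with $M_0^\lambda = 1$; indeed $\E[\exp(\lambda(Y_t-\mu))\mid \mathcal{F}_{t-1}] \le \exp(\lambda^2\sigma^2/2)$ yields $\E[M_t^\lambda \mid \mathcal{F}_{t-1}] \le M_{t-1}^\lambda$.

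Next I would mix these martingales over $\lambda$ against a centered Gaussian prior $\lambda \sim \mathcal{N}(0, \rho^{-1})$, with a scale $\rho>0$ to be tuned, defining
\[
    \bar{M}_t \coloneqq \sqrt{\frac{\rho}{2\pi}} \int_{\R} \exp\!\left(\lambda S_t - \frac{\lambda^2 \sigma^2 t}{2} - \frac{\rho \lambda^2}{2}\right) d\lambda.
\]
By Tonelli's theorem together with the per-$\lambda$ supermartingale property, $\bar{M}_t$ is again a nonnegative supermartingale with $\bar{M}_0 = 1$. Since the inner exponent is quadratic in $\lambda$, completing the square and evaluating the Gaussian integral gives the closed form
\[
    \bar{M}_t = \sqrt{\frac{\rho}{\sigma^2 t + \rho}}\, \exp\!\left(\frac{S_t^2}{2(\sigma^2 t + \rho)}\right).
\]

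Then I would apply Ville's inequality: for a nonnegative supermartingale with $\bar{M}_0 = 1$, one has $\P(\exists t: \bar{M}_t \ge 1/\delta) \le \delta$. Taking logarithms in the event $\bar{M}_t \ge 1/\delta$ and solving for $\abs{S_t}$ shows it is contained in the event
\[
    \abs{\mu_t - \mu} = \frac{\abs{S_t}}{t} \ge \frac{1}{t}\sqrt{2(\sigma^2 t + \rho)\log\!\left(\frac{\sqrt{\sigma^2 t + \rho}}{\delta\sqrt{\rho}}\right)}.
\]
The final — and load-bearing — step is to tune the prior variance so that this reproduces the stated deviation: choosing $\rho = \sigma^2$ gives $\sigma^2 t + \rho = \sigma^2(t+1)$ and $\sqrt{\rho}=\sigma$, whereupon the right-hand side collapses exactly to $\sigma\sqrt{(1+1/t)\,2\log(\sqrt{t+1}/\delta)/t}$, which is the claimed bound.

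The supermartingale verification and the Gaussian integral are routine; the genuinely essential choice is the mixing distribution, and the main obstacle is recognizing that the particular factors $(1+1/t)$ and $\sqrt{t+1}$ appearing in the stated radius are not incidental but are precisely the signature of a Gaussian mixture whose prior scale is matched to the sub-Gaussian parameter ($\rho=\sigma^2$). Any other prior, or a cruder peeling/union-bound argument over a geometric time grid, would introduce extra $\log\log t$ factors or looser constants and would not recover this exact form. A minor point to handle is the range of $t$: the bound is degenerate at $t=0$ where $S_0=0$, and Ville's inequality is applied over $t\ge 1$; I would also remark that the condition written as ``$\sigma\in(0,1)$'' is really the requirement $\delta\in(0,1)$ ensuring $1/\delta>1$, since the mixture argument places no upper restriction on $\sigma$.
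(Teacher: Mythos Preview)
Your proof is correct. The paper does not prove this lemma itself; it is quoted verbatim from \cite{bourel2020tightening} (their Lemma~5) and used as a black box in the proof of Lemma~\ref{lma:ufc_m}. The method-of-mixtures argument you give (Gaussian prior on the natural parameter, closed-form integral, Ville's maximal inequality, then tuning $\rho=\sigma^2$) is precisely the standard derivation of this form of time-uniform bound and is the approach behind the cited result, so there is nothing to contrast. Your observation that the quantifier ``for all $\sigma\in(0,1)$'' is a typo for ``for all $\delta\in(0,1)$'' is also correct; the sub-Gaussian parameter $\sigma$ need not be bounded by $1$ for the argument to go through.
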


    Note that $X_k\in [0,1]$ is $1/2$-sub-Gaussian. Let $\sigma\gets 1/2$, $t\gets \tau_{k,t}$ and $\delta\gets \delta/2$ in Lemma~\ref{lma:uci_basic} we have
    \begin{equation*}
        \mathbb{P}(\exists \tau_{k,t}\in \mathbb{N}_+, \abs{\hat{\mu}_{k,t} - \mu_k} \ge \phi(\tau_{k,t},\delta) ) \le \delta/2,
    \end{equation*}
    where \begin{equation*}
        \phi(\tau_{k,t}, \delta) =\sqrt{\left(1+\frac{1}{\tau_{k,t}}\right)\frac{\log (2\sqrt{\tau_{k,t}+1}/\delta)}{2\tau_{k,t}}}
    \end{equation*}
    as we defined in the lemma.
    Then, the complementary event's probability is lower bounded as follows
    \begin{equation}\label{eq:mu_uci}
        \mathbb{P}(\forall \tau_{k,t}\in \mathbb{N}_+, \abs{\hat{\mu}_{k,t} - \mu_k} \le \phi(\tau_{k,t},\delta) ) \ge 1- \delta/2.
    \end{equation}
    Similarly, with a $1/m_k$ scaling for $\hat{\nu}_{k,t}$, we would have
    \begin{equation}\label{eq:nu_uci}
        \mathbb{P}(\forall \iota_{k,t}\in \mathbb{N}_+, \abs{\hat{\nu}_{k,t} - m_k\mu_k} \le m_k\phi(\iota_{k,t},\delta) ) \ge 1- \delta/2.
    \end{equation}

    The confidence intervals of Eq.(\ref{eq:mu_uci}) and Eq.(\ref{eq:nu_uci}) are as follows
    \begin{equation*}
        \begin{split}
            &{\mu}_{k} \in \left[\hat{\mu}_{k,t} - \phi(\tau_{k,t},\delta), \hat{\mu}_{k,t} + \phi(\tau_{k,t},\delta)\right],\\
            &m_k\mu_k \in \left[\hat{\nu}_{k,t} - m_k\phi(\iota_{k,t},\delta), \hat{\nu}_{k,t} + m_k\phi(\iota_{k,t},\delta)\right].
        \end{split}
    \end{equation*}
    Rearranging the second interval, we have \[
        m_k \in \left[ \frac{\hat{\nu}_{k,t}}{\mu_k + \phi(\tau_{k,t},\delta)}, \frac{\hat{\nu}_{k,t}}{\mu_k - \phi(\tau_{k,t},\delta)} \right].
    \]
    Then, via substituting the interval's two endpoints' \(\mu_k\) with \(\hat{\mu}_{k,t} + \phi(\tau_{k,t},\delta)\) and \(\hat{\mu}_{k,t} - \phi(\tau_{k,t},\delta)\) respectively,
    the above interval reduce to
    \begin{equation*}
        m_k\in
        \left[\frac{\hat{\nu}_{k,t}}{\hat{\mu}_{k,t}+\phi(\tau_{k,t},\delta) + \phi(\iota_{k,t},\delta)},
            \frac{\hat{\nu}_{k,t}}{\hat{\mu}_{k,t}-\phi(\tau_{k,t},\delta)-\phi(\iota_{k,t},\delta)}\right],
    \end{equation*}

    Finally, applying the union bound to Eq.(\ref{eq:mu_uci}) and Eq.(\ref{eq:nu_uci}), we have
    \begin{equation*}
        \begin{split}
            \mathbb{P}\left(\forall \tau_{k,t}, \iota_{k,t} \in \mathbb{N}_+, m_k\in
            \left[\frac{\hat{\nu}_{k,t}}{\hat{\mu}_{k,t}+\phi(\tau_{k,t},\delta) + \phi(\iota_{k,t},\delta)},
                \frac{\hat{\nu}_{k,t}}{\hat{\mu}_{k,t}-\phi(\tau_{k,t},\delta)-\phi(\iota_{k,t},\delta)}\right]\right)
            \ge 1-\delta.
        \end{split}
    \end{equation*}


\subsection{Sample Complexity Upper Bound: Proof of Theorem~\ref{cor:m_sample_complexity}}\label{appsub:proof-sample-complexity-upper-bound}
    \textbf{The estimator's sample complexity upper bound proof. }
    From Lemma~\ref{lma:ufc_m} and Lemma~\ref{lma:learning_m_criterion} and that $m_k\in\mathbb{N}_+$, we learn $m_k$ before the interval width is less than $1$. That is,
    \begin{equation*} \begin{split}
            \frac{\hat{\nu}_{k,t}}{\hat{\mu}_{k,t}-\phi(\tau_{k,t},\delta)-\phi(\iota_{k,t},\delta)}
            -
            \frac{\hat{\nu}_{k,t}}{\hat{\mu}_{k,t}+\phi(\tau_{k,t},\delta) + \phi(\iota_{k,t},\delta)}
            \le 1.
        \end{split}
    \end{equation*}
    It reduces to
    \[ \begin{split}
            (\phi(\tau_{k,t},\delta) + \phi(\iota_{k,t},\delta))^2 +2\hat{\nu}_{k,t}(\phi(\tau_{k,t},\delta)+\phi(\iota_{k,t},\delta))
            -\hat{\mu}_{k,t}^2 \le 0.
        \end{split}
    \]
    Replace $\hat{\nu}_{k,t}$ and $\hat{\mu}_{k,t}$ with their confidence upper and lower bounds respectively, we further have
    \[
        \begin{split}
            (\phi(\tau_{k,t},\delta) + \phi(\iota_{k,t},\delta))^2 -(\mu_k - \phi(\tau_{k,t},\delta))^2
            +2(m_k(\mu_k + \phi(\iota_{k,t},\delta)))(\phi(\tau_{k,t},\delta)+\phi(\iota_{k,t},\delta))  \le 0.
        \end{split}
    \]
    Rearrange the terms, it becomes
    \[
        \begin{split}
            ((2m_k+2)\phi(\tau_{k,t},\delta) + (2m_k+1)\phi(\iota_{k,t},\delta) -\mu_k )
            \times (\mu_k + \phi(\iota_{k,t}))\le 0.
        \end{split}
    \]
    As the term \((\mu_k + \phi(\iota_{k,t}))\) in LHS is positive, we finally have
    \[
        (2m_k + 2)\phi(\tau_{k,t},\delta) + (2m_k + 1)\phi(\iota_{k,t},\delta) \le \mu_k.
    \]

    One solution is to require both $\phi(\tau_{k,t},\delta)$ and $\phi(\iota_{k,t},\delta)$ no greater than  $\frac{\mu_k}{7m_k}$.
    Solving these, we have
    \begin{equation*}
        \tau_{k,t},\iota_{k,t} \ge \frac{49m_k^2\log (2/\delta)}{\mu_k^2},
    \end{equation*}
    where \(0<\delta \le 2\exp(-49m_k^2 / \mu_k^2)\).

    \textbf{The proof of the sample complexity upper bound for identifying whether an arm's capacity \(m_k(\ge d)\)
        is greater than integer \(d(\ge 2)\) or not.}
    With the assumption \(m_k \ge d\),
    we only needs to show the lower confidence interval \(m_{k,t}^l\) is greater than \(d-1\),
    i.e., \(m_{k,t}^l > d-1\). That is,
    \begin{equation*} \begin{split}
            \frac{\hat{\nu}_{k,t}}{\hat{\mu}_{k,t}+\phi(\tau_{k,t},\delta) + \phi(\iota_{k,t},\delta)}
            > d-1.
        \end{split}
    \end{equation*}
    Replace $\hat{\nu}_{k,t}$ and $\hat{\mu}_{k,t}$ with their confidence lower and upper bounds respectively and rearrange terms as the procedure in the first part of proof, we have
    \[
        2(d-1)\phi(\tau_{k,t},\delta) + (d-1+m_k)\phi(\iota_{k,t},\delta) \le (m_k - d+1)\mu_k.
    \]
    One solution is to require both $\phi(\tau_{k,t},\delta)$ and $\phi(\iota_{k,t},\delta)$ no greater than  $\frac{(m_k-d+1)\mu_k}{7m_k}$.
    Solving these, we have
    \begin{equation*}
        \tau_{k,t},\iota_{k,t} \ge \frac{49m_k^2\log (2/\delta)}{(m_k-d+1)^2\mu_k^2},
    \end{equation*}
    where \(0<\delta \le 2\exp(-49m_k^2 /(m_k-d+1)^2 \mu_k^2)\).



\section{Proof of the \texttt{OrchExplore} Algorithm's Regret Upper Bound (Theorem~\ref{thm:oe_regret_upper_bound})}
\label{app:orchexplore_analysis}


We first state two useful lemmas as building blocks in this section's proof.
\begin{lemma}[{\cite{wang2020optimal}'s Lemma 3}]
    \label{lma:empirical_mean}
    Let $k\in [K]$, and \(c>0\). Let \(H\) be a random set of rounds such that for all \(t,\, \{t\in H\}\in \mathcal{F}_{t-1}\).
    Assume that there exists \((C_t)_{t\ge 0}\), a sequence of independent binary random variables such that for any \(t\ge 1\),
    \(C_t\) is \(\mathcal{F}_t\)-measurable and \(\P[C_t=1]\ge c\). Further assume for any \(t\in H\), \(k\) is selected (\(a_{k,t}>0\)) if \(C_t=1\). Then,
    \[
        \sum_{t\ge 1}\P\left[\{ t\in H, \abs{\hat{\mu}_{k,t} - \mu_k} \ge \varepsilon \}\right] \le 2c^{-1}(2c^{-1} + \varepsilon^{-2}).
    \]
\end{lemma}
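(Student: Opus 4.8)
\textbf{Proof proposal for Lemma~\ref{lma:empirical_mean}.} This is Lemma~3 of \citet{wang2020optimal}; I sketch the argument I would follow. The role of the coins $(C_t)$ is to certify a \emph{guaranteed} exploration rate of at least $c$ for arm $k$ inside the adaptively chosen set $H$: whenever $t\in H$ and $C_t=1$, arm $k$ is pulled, so the number of samples of $k$ accumulated along $H$ dominates a Binomial with success probability $c$. The plan is therefore to trade the adaptive, $t$-indexed deviation events for deviation events along the non-adaptive i.i.d. reward stream of arm $k$, paying one factor $1/c$ for this reindexing.

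Write $X_{k,1},X_{k,2},\dots$ for the i.i.d. reward stream of arm $k$, $\bar X_{k,n}\coloneqq\frac1n\sum_{i=1}^n X_{k,i}$, and $N_k(t)\coloneqq\sum_{s\le t}\1{a_{k,s}>0}$, so that $\hat\mu_{k,t}=\bar X_{k,N_k(t)}$. First I would use that $C_t$ is independent of $\mathcal F_{t-1}$ with $\P[C_t=1]\ge c$, while $\{t\in H\}\in\mathcal F_{t-1}$, to get (on the $\mathcal F_{t-1}$-measurable part of $A_t\coloneqq\{\abs{\hat\mu_{k,t}-\mu_k}\ge\varepsilon\}$) the inequality $\P[t\in H,A_t]\le c^{-1}\,\P[t\in H,C_t=1,A_t]$. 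Next, since every round in $\{t\in H,C_t=1\}$ forces a fresh pull of $k$, the counts $N_k(\cdot)$ over these rounds are strictly increasing, hence $\sum_{t\ge1}\1{t\in H,C_t=1}\1{A_t}\le\sum_{n\ge1}\1{\abs{\bar X_{k,n}-\mu_k}\ge\varepsilon}$ almost surely. Taking expectations and using the Hoeffding/sub-Gaussian tail $\P[\abs{\bar X_{k,n}-\mu_k}\ge\varepsilon]\le 2e^{-2n\varepsilon^2}$, the geometric sum $\sum_{n\ge1}2e^{-2n\varepsilon^2}\le\varepsilon^{-2}$ settles this (dominant) branch with an $O(c^{-1}\varepsilon^{-2})$ contribution, matching the second term of the claimed bound.

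The delicate point — and the main obstacle — is that $\hat\mu_{k,t}$ may already incorporate the sample drawn at round $t$, so $A_t$ need not be $\mathcal F_{t-1}$-measurable and the clean conditioning above fails. I would fix this by splitting each round $t\in H$ according to whether the number of samples of $k$ already secured by the coins, $M(t-1)\coloneqq\sum_{s\le t-1}\1{s\in H,C_s=1}$ (which is $\le N_k(t-1)$), exceeds a fixed threshold $n_0$: for $M(t-1)\ge n_0$ the estimate rests on $\ge n_0$ ``old'' samples and the stream argument of the previous paragraph applies after setting aside the at most one new sample; for $M(t-1)<n_0$, writing $\sigma_j$ for the $j$-th element of $H$, the variable $M(\sigma_j-1)$ stochastically dominates a $\mathrm{Bin}(j-1,c)$, so a relative Chernoff bound gives $\sum_j\P[M(\sigma_j-1)<n_0]=O((n_0+1)/c)$; balancing $n_0$ against the tail then yields an $O(c^{-2})$ term. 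Combining the two branches and tracking the constants produces $2c^{-1}(2c^{-1}+\varepsilon^{-2})$ — the extra factor $c^{-1}$ over the naive $c^{-1}(\varepsilon^{-2}+O(1))$ is precisely the cost of this burn-in step.
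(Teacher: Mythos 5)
The paper does not actually prove this statement: it is imported verbatim, by citation, from \citet{wang2020optimal} (their Lemma 3), so there is no in-paper proof to compare against. Your reconstruction is nonetheless essentially the standard argument behind that result, and its main branch is sound: use $\{t\in H\}\in\mathcal F_{t-1}$ together with the independence of the fresh coin $C_t$ from the past to pay a factor $c^{-1}$; observe that the sample counts $N_k(t)$ are pairwise distinct over $\{t\in H,\,C_t=1\}$, so the adaptive deviation events inject into the events $\{\abs{\bar X_{k,n}-\mu_k}\ge\varepsilon\}$ along the i.i.d.\ reward stream; and sum the Hoeffding/sub-Gaussian tails, $\sum_{n\ge1}2e^{-2n\varepsilon^2}\le\varepsilon^{-2}$. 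Under the convention used implicitly throughout the paper's regret analysis --- that $\hat\mu_{k,t}$ is the statistic available when the round-$t$ action is chosen, i.e.\ built from observations of rounds $1,\dots,t-1$ --- the event $A_t$ is $\mathcal F_{t-1}$-measurable, so your second paragraph alone already yields $c^{-1}(1+\varepsilon^{-2})\le 2c^{-1}(2c^{-1}+\varepsilon^{-2})$ (using $c\le 1$), and the entire third paragraph is unnecessary.

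That third paragraph is the one soft spot. If one insists on the other indexing convention (where $\hat\mu_{k,t}$ incorporates the round-$t$ observation), your ``set aside the one new sample'' step has to be made precise: with at least $n_0$ old samples, one extra bounded observation moves the empirical mean by at most $1/n_0$, so $A_t\subseteq\{\abs{\bar X_{k,N_k(t-1)}-\mu_k}\ge\varepsilon-1/n_0\}$, which \emph{is} $\mathcal F_{t-1}$-measurable; but then the balance forces $n_0\asymp\varepsilon^{-1}$ and the burn-in branch comes out as $O\!\left(c^{-1}\varepsilon^{-1}\right)$ rather than the $O(c^{-2})$ you claim (it still fits under $4c^{-2}+2c^{-1}\varepsilon^{-2}$ by AM--GM since $c\le1$, but not for the reason you give). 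None of this affects the validity of the lemma or of its use in the paper's Step 1 and Step 2 of the regret proof.
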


\begin{lemma}\label{lma:kl_ucb}
    In the \emph{\texttt{OrchExplore}} algorithm, for any arm \(k\in[K]\), we have
    \[
        \sum_{t\ge 0} \P[u_{k,t} < \mu_k] \le 30.
    \]
\end{lemma}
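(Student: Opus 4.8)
The plan is to reduce the event $\{u_{k,t} < \mu_k\}$ to a one-sided deviation event for the empirical per-load mean and then apply the standard $\kl$-UCB concentration machinery. First I would record the structural fact that, since $q \mapsto \kl(\hat{\mu}_{k,t}, q)$ is convex, vanishes at $q = \hat{\mu}_{k,t}$, and is strictly increasing for $q > \hat{\mu}_{k,t}$, the feasible set in the definition of $u_{k,t}$ is an interval containing $\hat{\mu}_{k,t}$ whose right endpoint is exactly $u_{k,t}$. Hence on $\{u_{k,t} < \mu_k\}$ we necessarily have both $\hat{\mu}_{k,t} \le u_{k,t} < \mu_k$ and $\hat{\tau}_{k,t}\,\kl(\hat{\mu}_{k,t}, \mu_k) > \log t + 4\log\log t$ (otherwise $\mu_k$ itself would be feasible, forcing $u_{k,t}\ge\mu_k$). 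The degenerate case $\hat{\tau}_{k,t} = 0$ never contributes since then $u_{k,t} = +\infty$, and the finitely many small $t$ for which $\log\log t$ is not positive contribute at most a bounded additive term.

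Next I would argue that the effective-IE observations of arm $k$ form, conditionally, an i.i.d. stream of copies of $X_k$: the indicator $\1{a_{k,s} < m_{k,s}^{l}}$ is $\mathcal{F}_{s-1}$-measurable because the confidence bounds $\bm{m}^{l}, \bm{m}^{u}$ are refreshed only at the end of each round and $\bm{a}_s$ depends only on the past, and on the event that all capacity confidence bounds are valid one has $a_{k,s} < m_{k,s}^{l} \le m_k$, so the contributed sample $R_{k,s}/a_{k,s}$ equals $X_{k,s}$. Writing $\bar{\mu}_{k,s}$ for the empirical mean of the first $s$ such samples, a union bound over the realized value $\hat{\tau}_{k,t}\in\{1,\dots,t\}$ gives $\P[u_{k,t}<\mu_k]\le\sum_{s=1}^{t}\P[\,s\,\kl(\bar{\mu}_{k,s},\mu_k)\ge \log t + 4\log\log t,\ \bar{\mu}_{k,s}<\mu_k\,]$, and each summand is at most $e^{-(\log t + 4\log\log t)} = 1/(t(\log t)^4)$ by the Chernoff bound for the empirical Kullback--Leibler divergence (which holds with the Bernoulli $\kl$ in the $[0,1]$ case and the Gaussian $\kl$ in the Gaussian case).

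The real work is that this naive union bound only yields $\P[u_{k,t}<\mu_k]\le 1/(\log t)^4$, whose sum over $t$ diverges, so I would instead invoke the peeling (geometric slicing) argument from the $\kl$-UCB literature~\citep{cappe_kullback-leibler_2013} (see also~\citet{wang2020optimal}): partition $\{1,\dots,t\}$ into $O(\log t)$ geometric blocks and apply a maximal form of the Chernoff bound on each block, which improves the per-round bound to $O\big((\log t)\,e^{-(\log t + 4\log\log t)}\big) = O\big(1/(t(\log t)^{3})\big)$; here the exponent $4$ in $4\log\log t$ is precisely what is needed so that one power of $\log t$ is absorbed by the block count and the remaining $1/(t(\log t)^3)$ is summable. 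Summing this series over $t$, together with the bounded contribution from small $t$ and from the failure of the capacity confidence bounds (whose probability is controlled by the algorithm's choice $\delta = 2/T$), yields a finite absolute constant, which a careful accounting bounds by $30$.

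I expect two main obstacles. First, extracting the \emph{explicit} constant $30$ from the peeling argument requires being careful with the slice ratio and the block-count bookkeeping rather than merely writing $O(\cdot)$; one must also bound the finite contributions from small $t$ concretely. Second, handling the off-good-event contribution cleanly, so that the effective-IE observations may be treated as genuine i.i.d. per-load rewards, without the resulting error term reintroducing a dependence on $T$ into the constant; this is where the interaction between the capacity confidence bounds of Lemma~\ref{lma:ufc_m} and the index $u_{k,t}$ must be made rigorous.
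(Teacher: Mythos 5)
Your plan is to prove the bound from first principles via the Chernoff bound for the empirical KL divergence plus a peeling argument. The paper does something much shorter: it observes that in \texttt{OrchExplore} the index $u_{k,t}$ (and the statistics $\hat{\mu}_{k,t},\hat{\tau}_{k,t}$ it is built from) is recomputed at least once every two time slots, since PIE and PUE alternate; hence $\sum_{t\ge 0}\P[u_{k,t}<\mu_k]\le 2\sum_{t'}\P[u_{k,t'}<\mu_k]$ where $t'$ ranges over update times, and the latter sum is bounded by $15$ by directly citing Lemma~6 of \citet{combes2015learning}. The constant $30$ is exactly $2\times 15$. So you are, in effect, proposing to reprove the cited lemma rather than to prove this lemma, and your outline is indeed the standard route to that result (the convexity/monotonicity reduction of $\{u_{k,t}<\mu_k\}$ to $\hat{\tau}_{k,t}\kl(\hat{\mu}_{k,t},\mu_k)>\log t+4\log\log t$, the union bound over the realized sample count, and geometric peeling to absorb one factor of $\log t$).

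The genuine gap is the constant. Your argument, as written, ends with ``a careful accounting bounds by $30$,'' but nothing in the proposal substantiates that the direct summation of the peeled bound $O\bigl(1/(t(\log t)^{3})\bigr)$ over all $t$, plus the small-$t$ contributions, lands at or below $30$; the paper's $30$ is not the output of such a computation but the doubled constant from an external lemma, and if you insist on the from-scratch route you must actually fix the peeling ratio, count the blocks, and sum the series with explicit constants --- there is no guarantee the result is $\le 30$ without the factor-of-two reduction. Relatedly, you never use the structural fact that the statistics are frozen on PUE rounds, so in your sum over all $t$ each value of $(\hat{\mu}_{k,\cdot},\hat{\tau}_{k,\cdot})$ is counted twice; recognizing this is precisely what lets the paper reduce to a sum over update times and invoke the known bound of $15$. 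Your concern about whether the effective-IE samples are genuine copies of $X_k$ (i.e.\ whether $a_{k,s}<m_{k,s}^{l}\le m_k$ holds off the confidence event) is a legitimate subtlety that the paper's two-line proof also glosses over, so I would not count it against you, but it does not rescue the missing constant computation.
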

\begin{proof}[Proof of Lemma~\ref{lma:kl_ucb}]
    In \texttt{OrchExplore}, we update the KL-UCB index \(u_{k,t}\) at least once every two time slots.
    So, we have \[
        \sum_{t\ge 0} \P[u_{k,t} < \mu_k]  \le 2\sum_{t'\ge 0} \P[u_{k,t'} < \mu_k],
    \]
    where \(t'\) represents the time slots when the KL-UCB index \(u_{k,t}\) is updated.
    Utilizing~\cite{combes2015learning}'s Lemma 6, we have \(\sum_{t'\ge 0} \P[u_{k,t'} < \mu_k]
    \le 15\). Hence, we show \(\sum_{t\ge 0} \P[u_{k,t} < \mu_k]  \le 30.
    \)
\end{proof}

\textbf{Step 1: show that the pulls of suboptimal arms are mainly caused by the deliberate explorations in PIE (i.e., line~\ref{alg:oe:par_exp}).}


Given the capacity confidence lower bound \(\bm{m}_t^l\),
recall the action \(\bm{a}_t^{\ie}\) is defined as \(\bm{a}_t^{\ie}\coloneqq \Oracle(\hatbm{\mu}_t, \bm{m}^l_t)\).
Note that in this step's proof, we use \(\bm{a}_t^{\ie}\) to denote
the original output of \(\Oracle\) without the play rearrangement caused by deliberate explorations.
We define another action \(\bm{a}^{\ie,*}_t \coloneqq \Oracle(\bm{\mu}, \bm{m}_t^l)\) which
takes the true ``per-load'' reward mean \(\bm{\mu}\) as its input.
Especially, we denote \(\mathcal{S}_t^*\coloneqq \{k:a_{k,t}^{\ie,*}>0\}\) as the set of arms pulled in \(\bm{a}^{\ie,*}_t\).
Since the input reward means are correct and the estimated capacity lower bound \(\bm{m}_t^l\) is no greater than true capacity \(\bm{m}\),
the set \(\mathcal{S}_t^*\) is a subset of top \(N\) arms \(\{1,2,\dots,N\}\).
Let $0<\varepsilon< \min_{k=1}^{K-1}\frac{\mu_k - \mu_{k+1}}{2}$. We define several time slot sets as follows,
\[
    \begin{split}
        \mathcal{A} \coloneqq & \{t\ge 1: \bm{a}_t^{\ie} \neq \bm{a}^{\ie,*}_t\}, \\
        \mathcal{B} \coloneqq & \{t\ge 1: \exists k\in [K]\text{ s.t. } {a}^{\ie}_{k,t} > 0, \abs{\hat{\mu}_{k,t} -\mu_k}\ge \varepsilon\},\\
        \mathcal{C} \coloneqq & \{t\ge 1: \exists k\in[K]\text{ s.t. } a^{\ie,*}_{k,t} > 0, u_{k,t}< \mu_k\},\\
        \mathcal{D} \coloneqq & \{t\ge 1: t \in \mathcal{A}\setminus (\mathcal{B}\cup\mathcal{C}), \exists k\in [K] \text{ s.t. }  a_{k,t}^{\ie,*}>0, {a}^{\ie}_{k,t}=0, \abs{\hat{\mu}_{k,t} - \mu_k}\ge \varepsilon\}.
    \end{split}
\]

\begin{lemma}\label{lma:kc_initial_bound_1}
    \(
    \mathcal{A}\cup\mathcal{B} \subseteq \mathcal{B}\cup\mathcal{C}\cup\mathcal{D}
    \) and thus \(
    \E[\abs{\mathcal{A}\cup\mathcal{B}}] \le  \E[\abs{\mathcal{B}}]+\E[\abs{\mathcal{C}}]+\E[\abs{\mathcal{D}}].
    \)
\end{lemma}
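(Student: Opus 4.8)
The plan is to reduce the set inclusion to the single statement $\mathcal{A}\setminus(\mathcal{B}\cup\mathcal{C})\subseteq\mathcal{D}$. Granting this, $\mathcal{A}\cup\mathcal{B}=\mathcal{B}\cup\big(\mathcal{A}\cap(\mathcal{B}\cup\mathcal{C})\big)\cup\big(\mathcal{A}\setminus(\mathcal{B}\cup\mathcal{C})\big)\subseteq\mathcal{B}\cup\mathcal{C}\cup\mathcal{D}$, and taking cardinalities with the subadditivity $\abs{\mathcal{B}\cup\mathcal{C}\cup\mathcal{D}}\le\abs{\mathcal{B}}+\abs{\mathcal{C}}+\abs{\mathcal{D}}$ and monotonicity of expectation gives the second assertion. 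Thus it suffices to fix $t\in\mathcal{A}$ with $t\notin\mathcal{B}$ and exhibit an arm $k$ with $a_{k,t}^{\ie,*}>0$, $a_{k,t}^{\ie}=0$, and $\abs{\hat{\mu}_{k,t}-\mu_k}\ge\varepsilon$; such a $k$ witnesses $t\in\mathcal{D}$ whenever $t\notin\mathcal{C}$ as well.

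The key point is that $\bm{a}_t^{\ie}=\Oracle(\hatbm{\mu}_t,\bm{m}_t^l)$ and $\bm{a}^{\ie,*}_t=\Oracle(\bm{\mu},\bm{m}_t^l)$ come from the \emph{same} greedy Oracle run with the \emph{same} capacity vector $\bm{m}_t^l$, differing only in the mean vector used to order arms. Because $\bm{\mu}$ is descending and $m_{k,t}^l\ge 1$ for all $k$, the set $\mathcal{S}_t^*\coloneqq\{k:a_{k,t}^{\ie,*}>0\}$ is the prefix $\{1,\dots,L_t^*\}$ with $L_t^*=\min\{n:\sum_{i=1}^n m_{i,t}^l\ge N\}\le N$ and $\sum_{i=1}^{L_t^*-1}m_{i,t}^l<N$. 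I would repeatedly invoke the elementary ``inversion'' fact: if arms $i<j$ have $\hat{\mu}_{i,t}\le\hat{\mu}_{j,t}$, then $\abs{\hat{\mu}_{i,t}-\mu_i}\ge\varepsilon$ or $\abs{\hat{\mu}_{j,t}-\mu_j}\ge\varepsilon$, since $\mu_i-\mu_j>2\varepsilon$ under $\varepsilon<\min_k(\mu_k-\mu_{k+1})/2$, so two $\varepsilon$-accurate estimates would give $\hat{\mu}_{i,t}>\hat{\mu}_{j,t}$.

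First I would show $\mathcal{S}_t^*\not\subseteq\mathcal{S}_t$ for $\mathcal{S}_t\coloneqq\{k:a_{k,t}^{\ie}>0\}$. Assuming $\mathcal{S}_t^*\subseteq\mathcal{S}_t$ and using $t\in\mathcal{A}$, the two actions still differ, so either $\mathcal{S}_t=\mathcal{S}_t^*$ but the empirical order sends the leftover plays to some $i<L_t^*$ instead of to $L_t^*$, or $\mathcal{S}_t$ contains an extra arm $j>L_t^*$; in the latter case a capacity count shows the arms $1,\dots,L_t^*$ (total capacity $\ge N$) cannot all precede $j$ empirically, so some $i\le L_t^*$ has $\hat{\mu}_{i,t}\le\hat{\mu}_{j,t}$. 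In either case the inversion fact exhibits an arm pulled in $\bm{a}_t^{\ie}$ (the relevant $i$ and $L_t^*$, resp.\ $i$ and $j$, all lie in $\mathcal{S}_t$) with an $\varepsilon$-inaccurate estimate, i.e.\ $t\in\mathcal{B}$, a contradiction. Hence pick $k\in\mathcal{S}_t^*\setminus\mathcal{S}_t$, so $a_{k,t}^{\ie,*}>0$, $a_{k,t}^{\ie}=0$, and $k\le L_t^*$ so $\sum_{i=1}^{k-1}m_{i,t}^l<N$. Since the greedy Oracle gives a positive number of plays to every arm it reaches while plays remain, $a_{k,t}^{\ie}=0$ means all arms strictly above $k$ in the empirical order are pulled and already hold all $N$ plays; as arms $1,\dots,k-1$ alone have capacity $<N$, some arm $j>k$ sits strictly above $k$, whence $\hat{\mu}_{j,t}\ge\hat{\mu}_{k,t}$ and $a_{j,t}^{\ie}>0$. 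The inversion fact then forces $\abs{\hat{\mu}_{k,t}-\mu_k}\ge\varepsilon$, for otherwise $\abs{\hat{\mu}_{j,t}-\mu_j}\ge\varepsilon$ and, $j$ being pulled, $t\in\mathcal{B}$. This $k$ is the required witness.

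The part I expect to be the main obstacle is making the greedy-Oracle combinatorics airtight: pinning down exactly which arms receive plays once the ordering changes, locating where the partial sums of $\bm{m}_t^l$ cross $N$, and dealing with ties in the empirical means (a fixed tie-breaking rule suffices, and the weak/strict inequalities above are arranged to be insensitive to it). Everything probabilistic is deferred to later lemmas; this step is a purely deterministic inclusion of event sets.
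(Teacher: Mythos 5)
Your proposal is correct and follows essentially the same route as the paper's proof: reduce to showing $\mathcal{A}\setminus(\mathcal{B}\cup\mathcal{C})\subseteq\mathcal{D}$, then use $t\notin\mathcal{B}$ to locate an arm in $\mathcal{S}_t^*\setminus\mathcal{S}_t$ that is empirically dominated by a lower-true-mean pulled arm, forcing its estimate to be $\varepsilon$-inaccurate. The only difference is that you spell out the greedy-Oracle case analysis (ruling out $\mathcal{S}_t^*\subseteq\mathcal{S}_t$ via capacity counts) that the paper states more tersely.
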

\begin{proof}[{Proof of Lemma~\ref{lma:kc_initial_bound_1}}]
    This proof is similar to~\citep[Lemma 5]{wang2020optimal}. Denote \(t\in \mathcal{A}\setminus (\mathcal{B}\cup \mathcal{C})\). To prove the lemma, we need to show that \(t\in\mathcal{D}\).
    Since \(t\notin \mathcal{B}\), for all \(k\in [K]\) such that \({a}^{\ie}_{k,t}>0\), we have \begin{equation}\label{eq:kc_initial_bound_1:eq1}
        \abs{\hat{\mu}_{k,t} - \mu_k} < \varepsilon.
    \end{equation}
    Then, for \(t\in \mathcal{A}\setminus\mathcal{B}\), the arm set \(\mathcal{S}^*_t\coloneqq\{k:a_{k,t}^{\ie,*} > 0\}\) is different from the empirical optimal arm set \(\mathcal{S}_{t}\coloneqq\{k:{a}^{\ie}_{k,t} > 0\}\).
    Because \(t\notin \mathcal{B}\) implies that the order of empirical reward means of arms in \(\mathcal{S}_t\)
    is the same as the order of these arms' true reward means',
    and thus \(\mathcal{S}^*_t = \mathcal{S}_{t}\) is equivalent to \(\bm{a}^{\ie,*}_t=\bm{a}_t^{\ie}\),
    which contradicts \(t\in\mathcal{A}\).
    So, there exists an arm \(j\in\mathcal{S}^*_t\setminus \mathcal{S}_t\,(a^{\ie,*}_{j,t}>0, {a}_{j,t}^{\ie}=0)\) such that
    \begin{equation}
        \label{eq:kc_initial_bound_1:eq2}
        \hat{\mu}_{j,t} < \hat{\mu}_{k,t}\text{ for some arm }k\in\mathcal{S}_t\setminus \mathcal{S}^*_t\, ({a}^{\ie}_{k,t}>0, a^{\ie,*}_{k,t}=0).
    \end{equation}
    Combining (\ref{eq:kc_initial_bound_1:eq1}) and (\ref{eq:kc_initial_bound_1:eq2}) leads to \(
    \hat{\mu}_{j,t} < \hat{\mu}_{k,t} \le \mu_k + \varepsilon \le \mu_j - \varepsilon.
    \)
    The last inequality is due to that \(j < k\) (notice that reward means are in a descending order, and \(j\in\mathcal{S}^*_t, k\not \in \mathcal{S}^*_t\))
    and \(\varepsilon < \varepsilon_0\).
    It implies \(\abs{\hat{\mu}_{j,t} - \mu_j}\ge \varepsilon\) and thus, \(t\in\mathcal{D}\).
    Therefore, \(\mathcal{A}\cup\mathcal{B}\subseteq \mathcal{B}\cup\mathcal{C}\cup\mathcal{D}\).
\end{proof}


\begin{lemma}\label{lma:kc_initial_bound_2}
    \(
    \E[\abs{\mathcal{B}}] +\E[\abs{\mathcal{C}}] + \E[\abs{\mathcal{D}}] \le 12K^2 N (4+\varepsilon^{-2})
    \)
\end{lemma}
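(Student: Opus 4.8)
The plan is to bound $\E[\abs{\mathcal{B}}]$, $\E[\abs{\mathcal{C}}]$, and $\E[\abs{\mathcal{D}}]$ one at a time, each via a union bound over arms that reduces the count to either Lemma~\ref{lma:empirical_mean} (for $\mathcal{B}$ and $\mathcal{D}$) or Lemma~\ref{lma:kl_ucb} (for $\mathcal{C}$). For the first two the recurring device is to exhibit, for a fixed arm $k$, a predictable set of rounds $H_k$ together with an independent binary sequence $(C_t)$ with $\P[C_t=1]\ge c$ such that whenever $t\in H_k$ and $C_t=1$ arm $k$ is actually pulled; Lemma~\ref{lma:empirical_mean} then caps the number of rounds of $H_k$ on which $\abs{\hat\mu_{k,t}-\mu_k}\ge\varepsilon$ by $2c^{-1}(2c^{-1}+\varepsilon^{-2})$.

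For $\mathcal{B}$ I would union over all $k\in[K]$ and take $H_k$ to be the rounds where PIE runs with $a^{\ie}_{k,t}>0$, and $C_t$ the indicator that the deliberate-exploration coin of line~\ref{alg:oe:par_exp} is ``off'': then $\bm a_t^{\ie}$ is played unchanged and arm $k$ is pulled, so $c=1/2$ and $\E[\abs{\mathcal{B}}]\le 4K(4+\varepsilon^{-2})$. For $\mathcal{C}$ I would first note that $\bm a_t^{\ie,*}=\Oracle(\bm\mu,\bm m_t^l)$ is greedy in the true, strictly decreasing means, so $\mathcal{S}^*_t=\{k:a^{\ie,*}_{k,t}>0\}$ is an index-prefix $\{1,\dots,\ell_t\}\subseteq[N]$; union over $k\in[N]$ and apply Lemma~\ref{lma:kl_ucb} to get $\E[\abs{\mathcal{C}}]\le 30N$.

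The bulk of the work is $\mathcal{D}$. Take $t\in\mathcal{D}$ with a witnessing arm $k\in\mathcal{S}^*_t\setminus\mathcal{S}_t$ satisfying $\abs{\hat\mu_{k,t}-\mu_k}\ge\varepsilon$. I would first argue $k\in\mathcal{E}_t$: from $a^{\ie,*}_{k,t}>0$ and $t\notin\mathcal{C}$ we get $u_{k,t}\ge\mu_k$, and from $L_t\in\mathcal{S}_t$ and $t\notin\mathcal{B}$ we get $\hat\mu_{L_t,t}\le\mu_{L_t}+\varepsilon$, so it is enough to show $k<L_t$ --- indeed $\varepsilon<\min_j(\mu_j-\mu_{j+1})/2$ then forces $\mu_k\ge\mu_{L_t-1}>\mu_{L_t}+\varepsilon\ge\hat\mu_{L_t,t}$, whence $u_{k,t}\ge\mu_k>\hat\mu_{L_t,t}$ and $k\in\mathcal{E}_t$. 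The inequality $k<L_t$ I would obtain exactly by the reasoning already used in the proof of Lemma~\ref{lma:kc_initial_bound_1}: if $L_t\notin\mathcal{S}^*_t$ then $L_t>\ell_t\ge k$; if $L_t\in\mathcal{S}^*_t$, then since $\mathcal{S}_t$ and $\mathcal{S}^*_t$ are built from the same bounds $\bm m_t^l$ and $t\notin\mathcal{B}$ pins the empirical order inside $\mathcal{S}_t$, one locates $k'\in\mathcal{S}_t\setminus\mathcal{S}^*_t$ with $k<k'$, and from $\hat\mu_{L_t,t}\le\hat\mu_{k',t}$ with both estimates accurate, $\mu_{L_t}\le\mu_{k'}+2\varepsilon<\mu_{k'-1}$, i.e.\ $L_t\ge k'>k$. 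Granting $k\in\mathcal{E}_t$, on a PIE round the deliberate exploration picks $k$ with conditional probability $1/(2\abs{\mathcal{E}_t})\ge 1/(2(K-1))$, so I would apply Lemma~\ref{lma:empirical_mean} with $H_k=\{t:\text{PIE round},\,k\in\mathcal{E}_t\}$ and $c=1/(2(K-1))$, then union over $k\in[N]$ (valid since the witness lies in $\mathcal{S}^*_t\subseteq[N]$), to get $\E[\abs{\mathcal{D}}]\le 4KN(4K+\varepsilon^{-2})$.

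Adding the three estimates, $\E[\abs{\mathcal{B}}]+\E[\abs{\mathcal{C}}]+\E[\abs{\mathcal{D}}]\le 4K(4+\varepsilon^{-2})+30N+4KN(4K+\varepsilon^{-2})$, and using $N<K$ the constant $12K^2N(4+\varepsilon^{-2})$ is generous enough to absorb every term. The hard part is the $\mathcal{D}$ step, specifically the claim that the witnessing bad arm sits in the deliberate-exploration pool $\mathcal{E}_t$ (equivalently $k<L_t$) so that it is pulled with probability $\Omega(1/K)$; this is the one place where the interplay of the greedy $\Oracle$, the shared capacity lower bounds $\bm m_t^l$, and the good-estimate event $\mathcal{B}^c$ must be unwound, and it is precisely the mechanism underlying Lemma~\ref{lma:kc_initial_bound_1}. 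A minor secondary nuisance is that these round-sets also include the even (PUE) rounds, which I would dispose of by noting PUE does not pull the relevant arms, or by a harmless constant factor.
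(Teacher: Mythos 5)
Your proposal is correct and follows essentially the same route as the paper: the same three-way split, Lemma~\ref{lma:empirical_mean} with $c=1/2$ for $\mathcal{B}$ and $c=\Omega(1/K)$ for $\mathcal{D}$ (the latter hinging, exactly as in the paper, on showing the witnessing arm lies in the deliberate-exploration pool $\mathcal{E}_t$ via $t\notin\mathcal{B}\cup\mathcal{C}$ and the prefix structure of $\mathcal{S}_t^*$), and Lemma~\ref{lma:kl_ucb} summed over $[N]$ for $\mathcal{C}$. The only cosmetic difference is that the paper handles the even (PUE) rounds by an explicit injection $\mathcal{B}_k^{\ue}\hookrightarrow\mathcal{B}_k^{\ie}$ (and likewise for $\mathcal{D}$), which is the ``harmless constant factor'' you defer to at the end.
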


\begin{proof}[Proof of Lemma~\ref{lma:kc_initial_bound_2}]

    \emph{To show $\E[\abs{\mathcal{B}}]\le 8K(4+\varepsilon^{-2})$. } Let \(
    \mathcal{B}_k\coloneqq \{t\ge 1: {a}^{\ie}_{k,t} > 0, \abs{\hat{\mu}_{k,t} - \mu_k}\ge \varepsilon\},
    \)
    we have $\mathcal{B} = \cup_{1\le k \le K}\mathcal{B}_k$. Then, we define \[
        \begin{split}
            \mathcal{B}_k^{\ie} &\coloneqq \{t \text{ is in PIE}: {a}^{\ie}_{k,t} > 0, \abs{\hat{\mu}_{k,t} - \mu_k}\ge \varepsilon\},\\
            \mathcal{B}_k^{\ue} &\coloneqq \mathcal{B}_k\setminus\mathcal{B}_k^{\ie}.
        \end{split}
    \]

    We upper bound the cardinality of \(\mathcal{B}_k^{\ie}\), i.e., \(\abs{\mathcal{B}_k^{\ie}}\).
    In Lemma~\ref{lma:empirical_mean}, we set
    \(H = \{t \text{ is in PIE}: {a}^{\ie}_{k,t} > 0\}, C_t=\1{\text{arm }k\text{ is pulled in time slot }t}\)
    and thus $\P(C_t = 1) \ge \frac{1}{2}$
    (because arm \(k\) may not be pulled due to the deliberate exploration with a probability of \(1/2\)).
    Then, we have $\E[\abs{\mathcal{B}_{k}^{\ie}}]\le 4(4+\varepsilon^{-2})$.

    Observe that for any \(t\in \mathcal{B}_k^{\ue}\), there is a injective time slot \(t'\in\mathcal{B}_k^{\ie}\).
    Because \(a_{k,t}^{\ie}\) is only updated in PIE, and
    each PUE round always has a PIE round in its preceding time slot.
    Hence, we have \(\abs{\mathcal{B}_k^{\ue}} \le \abs{\mathcal{B}_k^{\ie}}.\)

    So, \(\E[\abs{\mathcal{B}}]\le \sum_{k=1}^K \E[\abs{\mathcal{B}_{k}}] = \sum_{k=1}^K (\E[\abs{\mathcal{B}_{k}^{\ie}}] + \E[\abs{\mathcal{B}_{k}^{\ie}}]) \le 2\sum_{k=1}^K \E[\abs{\mathcal{B}_{k}^{\ie}}] \le 8K(4+\varepsilon^{-2}).\)

    \emph{To show $\E[\abs{\mathcal{C}}]\le 30N$. } Denote $\mathcal{C}_k \coloneqq \{t\ge 1: u_{k,t} > \mu_k\}$.
    Notice that the set \(\mathcal{S}^*_t = \{k:a_{k,t}^{\ie,*} > 0\}\) is a subset of top \(N\) arms \(\{1,2,\dots, N\}\).
    We have $\mathcal{C} \subseteq \cup_{k=1}^N \mathcal{C}_k$ and thus
    \[
        \E[\abs{\mathcal{C}}]\le \sum_{k=1}^N\E[\abs{\mathcal{C}_k}] \le 30N,
    \]
    where the second inequality holds by Lemma~\ref{lma:kl_ucb}.

    \emph{To show $\E[\abs{\mathcal{D}}]\le 8K^2 N(4+\varepsilon^{-2})$. }
    Denote $\mathcal{D}_k \coloneqq \{t\ge 1: t \in \mathcal{A}\setminus (\mathcal{B}\cup\mathcal{C}), a_{k,t}^{\ie,*} > 0, {a}^{\ie}_{k,t}=0, \abs{\hat{\mu}_{k,t} -\mu_k}\ge \varepsilon\}$.
    We have $\mathcal{D} =\cup_{k=1}^N\mathcal{D}_k$.
    Then, we define \[
        \begin{split}
            \mathcal{D}_k^{\ie} &\coloneqq \{t\text{ is in PIE}: t \in \mathcal{A}\setminus (\mathcal{B}\cup\mathcal{C}), a_{k,t}^{\ie,*} > 0, {a}^{\ie}_{k,t}=0, \abs{\hat{\mu}_{k,t} -\mu_k}\ge \varepsilon\},\\
            \mathcal{D}_k^{\ue} &\coloneqq \mathcal{D}_k\setminus \mathcal{D}_k^{\ie}.
        \end{split}
    \]

    We first bound the cardinality of \(\mathcal{D}_k^{\ie}\).
    As $t\notin\mathcal{C}$ we have $u_{k,t} > \mu_k\ge \mu_{k^*}$
    where \(k^*\coloneqq \max\{k:k\in\mathcal{S}^*_t\}\) is the largest index in \(\mathcal{S}^*_t\).
    As $t\notin \mathcal{B}$
    the empirical reward means of arms in \(\mathcal{S}_t\) have the same order as these arms' true reward means
    and $\mu_{L_t} + \varepsilon > \hat{\mu}_{L_t,t}$. As $t\in \mathcal{A}$ (thus \(\mathcal{S}^*_t\neq \mathcal{S}_t\)), we know the empirical least favored arm's index $L_t$ in \(\mathcal{S}_t\) is greater than $k^*$ in \(\mathcal{S}_t^*\), that is,
    $\mu_{k^*} \ge \mu_{L_t} + \varepsilon$.
    Together they lead to $u_{k,t} \ge \hat{\mu}_{L_t,t}$, i.e., arm $k\in\mathcal{E}_t$. As the exploration arm is selected uniformly from $\mathcal{E}_t$, we know $\P(a_{k,t} > 0) \ge \frac{1}{2K}$.
    That is, when \(t\in \mathcal{D}_{k}\),
    there is a probability of at least \(1/2K\) to explore the arm \(k\in\mathcal{E}_t\) in PIE.
    In Lemma~\ref{lma:empirical_mean}, let $H=\{t\text{ is in PIE, }t\in\mathcal{A}\setminus (\mathcal{B}\cup\mathcal{C}), a_k^* > 0, \hat{a}^{*}_{k,t} = 0\}, C_t = \1{\text{arm }k\text{ is pulled in time slot }t}$
    and \(\P(C_t=1) = \P(a_{k,t}>0)\ge \frac{1}{2K}\), we have $\E[\abs{\mathcal{D}_{k}^{\ie}}] \le 4K(4K + \varepsilon^{-2})\le 4K^2(4+\varepsilon^{-2})$.

    Observe that for any \(t\in \mathcal{D}_k^{\ue}\), there is a injective time slot \(t'\in\mathcal{D}_k^{\ie}\).
    Because \(a_{k,t}^{\ie}\) is only updated in PIE, and PUE and PIE are executed in turn.
    Hence, \(\abs{\mathcal{D}_k^{\ue}} \le \abs{\mathcal{D}_k^{\ie}}.\)

    We obtain that $\E[\abs{\mathcal{D}}]
        \le \E\left[\sum_{k=1}^N \abs{\mathcal{D}_{k}}\right]
        = \E\left[\sum_{k=1}^N (\abs{\mathcal{D}_{k}^{\ie}} + \abs{\mathcal{D}_{k}^{\ue}} )\right]
        \le  2\E\left[\sum_{k=1}^N \abs{\mathcal{D}_{k}^{\ie}}\right]
        \le 8K^2N(4+\varepsilon^{-2})$.

    Summing up the above three upper bounds concludes the proof.
\end{proof}

\textbf{Step 2: upper bound the cost of the suboptimal arms' deliberate explorations in PIE (line~\ref{alg:oe:par_exp}).}

\begin{lemma}\label{lma:kc_explore_bound}
    Denote \(\mathcal{G}_k \coloneqq \{t\le T: t\notin \mathcal{A}\cup\mathcal{B}, \bm{a}_t^{\ie} = \bm{a}^{\ie,*}_t, \mathcal{Y}_t = \emptyset, a_{k,t} > 0\}\) for a arm $k\not \in \mathcal{S}^*_t$. We have
    \[
        \E[\abs{\mathcal{G}_k}] \le \frac{\log T + 4\log \log T}{\kl(\mu_k+\varepsilon, \mu_L-\varepsilon)} + 2(2+\varepsilon^{-2}).
    \]
\end{lemma}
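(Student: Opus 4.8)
The plan is to run the classical KL-UCB pull-counting argument, adapted to the fact that on a round of $\mathcal{G}_k$ the suboptimal arm $k$ can be pulled only through the single deliberate exploration of PIE (line~\ref{alg:oe:par_exp}).

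First I would unpack the membership constraints defining $\mathcal{G}_k$. Since $t\notin\mathcal{A}$ we have $\bm{a}_t^{\ie}=\bm{a}_t^{\ie,*}$, hence $\mathcal{S}_t=\mathcal{S}_t^*$; combining $\mathcal{Y}_t=\emptyset$ with the confidence guarantee of Lemma~\ref{lma:ufc_m} forces $m_{k',t}^l=m_{k',t}^u=m_{k'}$ for every $k'\in\mathcal{S}_t\setminus\{L_t\}$, and the definition of $L$ (together with $m_{L_t,t}^l\le m_{L_t}$) then pins down $L_t=L$ and $\mathcal{S}_t^*=\{1,\dots,L\}$. In particular $k\notin\mathcal{S}_t^*$ means $k>L$, so $a_{k,t}^{\ie}=0$ and the play $a_{k,t}>0$ must be the one reallocated in line~\ref{alg:oe:par_exp}; that step fires only when $k\in\mathcal{E}_t$, i.e. $u_{k,t}\ge\hat{\mu}_{L_t,t}=\hat{\mu}_{L,t}$, and since $t\notin\mathcal{B}$ and $L\in\mathcal{S}_t$ we also get $\hat{\mu}_{L,t}>\mu_L-\varepsilon$. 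So on every round of $\mathcal{G}_k$ we have $a_{k,t}=1$ and $u_{k,t}>\mu_L-\varepsilon$.

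Next I split $\mathcal{G}_k=\mathcal{G}_k^{g}\sqcup\mathcal{G}_k^{b}$ according to whether $\hat{\mu}_{k,t}\le\mu_k+\varepsilon$. For $\mathcal{G}_k^{b}$: since on every round of $\mathcal{G}_k$ arm $k$ is pulled with exactly one play — hence a fresh valid per-load sample of $X_k$ is drawn — the number of such rounds on which $\hat{\mu}_{k,t}$ deviates from $\mu_k$ by at least $\varepsilon$ has expectation at most $2(2+\varepsilon^{-2})$, by the tail-sum estimate underlying Lemma~\ref{lma:empirical_mean} in its $c=1$ form. For $\mathcal{G}_k^{g}$: because $k>L$ and $\varepsilon<\tfrac12\min_{j}(\mu_j-\mu_{j+1})\le\tfrac12(\mu_L-\mu_k)$, we have $\hat{\mu}_{k,t}\le\mu_k+\varepsilon<\mu_L-\varepsilon<u_{k,t}$; by the definition of the KL-UCB index and monotonicity of $q\mapsto\kl(\hat{\mu}_{k,t},q)$ on $[\hat{\mu}_{k,t},u_{k,t}]$, this gives $\hat{\tau}_{k,t}\,\kl(\hat{\mu}_{k,t},\mu_L-\varepsilon)\le\log t+4\log\log t\le\log T+4\log\log T$, and then monotonicity of $a\mapsto\kl(a,\mu_L-\varepsilon)$ for $a\le\mu_L-\varepsilon$ yields $\hat{\tau}_{k,t}\le(\log T+4\log\log T)/\kl(\mu_k+\varepsilon,\mu_L-\varepsilon)$. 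Since each round of $\mathcal{G}_k$ advances arm $k$'s exploration count by one and that count is at least $1$ throughout, the values of $\hat{\tau}_{k,t}$ over $\mathcal{G}_k$ are distinct positive integers, so at most $(\log T+4\log\log T)/\kl(\mu_k+\varepsilon,\mu_L-\varepsilon)$ of them can obey the displayed inequality, i.e. $\abs{\mathcal{G}_k^{g}}\le(\log T+4\log\log T)/\kl(\mu_k+\varepsilon,\mu_L-\varepsilon)$. Adding the two pieces gives the stated bound.

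I expect the first step to be the main obstacle: converting $t\notin\mathcal{A}\cup\mathcal{B}$ together with $\mathcal{Y}_t=\emptyset$ into the clean facts $L_t=L$, $\mathcal{S}_t^*=\{1,\dots,L\}$ and $\hat{\mu}_{L,t}>\mu_L-\varepsilon$ is what permits arm $L$ to serve as the (near-exact) comparison arm, which is precisely what makes $\kl(\mu_k+\varepsilon,\mu_L-\varepsilon)$ the denominator; the bookkeeping that each $\mathcal{G}_k$-round increments $\hat{\tau}_{k,t}$ exactly once — so that the logarithmic budget on $\hat{\tau}_{k,t}$ transfers to a bound on $\abs{\mathcal{G}_k^{g}}$ — also needs care.
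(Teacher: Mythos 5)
Your proposal is correct and follows essentially the same route as the paper's proof: the same split of $\mathcal{G}_k$ into a deviation event (bounded by $2(2+\varepsilon^{-2})$ via Lemma~\ref{lma:empirical_mean} with $c=1$) and a KL-UCB counting event (bounded by $(\log T+4\log\log T)/\kl(\mu_k+\varepsilon,\mu_L-\varepsilon)$ through the chain $\hat{\mu}_{k,t}\le\mu_k+\varepsilon<\mu_L-\varepsilon<u_{k,t}$ and the monotonicity of $\kl$), together with the same bookkeeping that each $\mathcal{G}_k$-round increments $\hat{\tau}_{k,t}$. The only differences are cosmetic: you argue the counting step directly where the paper argues by contradiction via its set $\mathcal{G}_{k,2}$, and you spell out the unpacking of $t\notin\mathcal{A}\cup\mathcal{B}$ and $\mathcal{Y}_t=\emptyset$ that the paper states in the remark preceding its proof.
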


Note that \(\mathcal{Y}_t=\emptyset\) and \(\bm{a}_t^{\ie} = \bm{a}^{\ie,*}_t\)
imply that \(\bm{a}_t^{\ie}\) is equal to the optimal action \(\bm{a}^*\)
because \(\mathcal{Y}=\emptyset\) means that the empirical optimal arms' capacities are learnt.
So, both \(\mathcal{S}_t^*\) and \(\mathcal{S}_t\) are equal to the optimal arm set \(\{1,2,\dots,L\}\)
and the arm \(k \not\in \mathcal{S}_t^*\) is suboptimal.
Therefore, the \(\{a_{k,t}>0\}\) can only happen in PIE's deliberate explorations,
and the event \(\mathcal{G}_k\) corresponds to these deliberate explorations.

\begin{proof}
    Denote
    $t_0 \coloneqq \frac{\log T + 4 \log\log T}{\kl(\mu_k+\varepsilon, \mu_L -\varepsilon)}$ and
    \[
        \begin{split}
            \mathcal{G}_{k,1}\coloneqq & \{t\in\mathcal{G}_k: \abs{\hat{\mu}_{k,t} - \mu_k}\ge \varepsilon\},\\
            \mathcal{G}_{k,2}\coloneqq & \left\{t \in \mathcal{G}_k: \sum_{\kappa=1}^t \1{\kappa\in\mathcal{G}_k} \le t_0  \right\}.
        \end{split}
    \]

    \emph{To show $\mathcal{G}_k \subseteq \mathcal{G}_{k,1}\cup \mathcal{G}_{k,2}$. } Let $t \in \mathcal{G}_k\setminus (\mathcal{G}_{k,1}\cup\mathcal{G}_{k,2})$.

    As $k\in\mathcal{G}_k$ we have $u_{k,t} \ge \hat{\mu}_{L, t}$. As $t\notin \mathcal{A}\cup\mathcal{B}$ we have $\hat{\mu}_{L,t} \ge \mu_L - \varepsilon$. As arm $k$ is suboptimal, we have $\mu_L - \varepsilon \ge \mu_k + \varepsilon$. As $k\notin \mathcal{G}_{k,1}$, we have $\mu_k+\varepsilon \ge \hat{\mu}_{k,t}$. Together, these lead to $\hat{\mu}_{k,t} < \mu_L-\varepsilon < u_{k,t}$.

    From $k\notin \mathcal{G}_{k,2}$, we have $t_0\le \sum_{\kappa=1}^t \1{\kappa  \in \mathcal{G}_k}\le \hat{\tau}_{k,t}$,
    where \(\hat{\tau}_{k,t}\) is the total number of times of IEs for arm \(k\).
    \[
        \begin{split}
            t_0 \kl(\hat{\mu}_{k,t}, \mu_L -\varepsilon) \le \hat{\tau}_{k,t} \kl(\hat{\mu}_{k,t}, \mu_L - \varepsilon)
            \le \hat{\tau}_{k,t}\kl(\hat{\mu}_{k,t}, u_{k,t})
            \le  \log T + 4 \log\log T,
        \end{split}
    \]
    where the second inequality holds for $y\mapsto\kl(x,y)$ is increasing for $0<x<y<1$, and the last inequality holds for
    the KL-UCB index $u_{k,t}$'s definition.

    Substituting $t_0$ with its definition expression, we obtain $\kl(\hat{\mu}_{k,t}, \mu_L - \varepsilon) \le \kl(\mu_k + \varepsilon, \mu_L - \varepsilon)$. Note that $x\mapsto \kl(x,y)$ is decreasing for $0<x<y<1$, which further leads to $\hat{\mu}_{k,t} \ge \mu_k + \varepsilon$.
    This \emph{contradicts} the assumption that $t\notin\mathcal{G}_{k,1}$. So, $\mathcal{G}_k \subseteq \mathcal{G}_{k,1}\cup\mathcal{G}_{k,2}$.

    \emph{To bound $\E[\abs{\mathcal{G}_{k,1}}]$ and $\E[\abs{\mathcal{G}_{k,2}}]$. }
    In Lemma~\ref{lma:empirical_mean}, let $H=\{t\in \mathcal{G}_k\}, C_t = 1$, we have $\E[\abs{\mathcal{G}_{k,1}}]\le 2(2+\varepsilon^{-2})$. For $\mathcal{G}_{k,2}$, we have \(\E[\abs{\mathcal{G}_{k,2}}]\le t_0.\)
    Substituting \(\E[\abs{\mathcal{G}_{k,1}}]\) and \(\E[\abs{\mathcal{G}_{k,2}}]\) by their upper bound in the inequality \(\E[\abs{\mathcal{G}_k}] \le \E[\abs{\mathcal{G}_{k,1}}] + \E[\abs{\mathcal{G}_{k,2}}]\), we prove that:
    \[
        \E[\abs{\mathcal{G}_k}] \le \frac{\log T + 4\log \log T}{\kl(\mu_k+\varepsilon, \mu_L-\varepsilon)} + 2(2+\varepsilon^{-2}).
    \]
\end{proof}

There are also some deliberate explorations outside \(\mathcal{G}_{k,t}\)
when \(\mathcal{Y}_t\neq \emptyset\) and \(t\notin \mathcal{A}\cup\mathcal{B}\).
Each of these explorations (in PIE) has a consequent PUE round since \(\mathcal{Y}_t\) is not empty.
We count their costs in the next step, together with PUE's.

\textbf{Step 3:  upper bound the cost of united explorations for optimal arms in PUE. }

When \(t\not\in {\mathcal{A}}\cup{\mathcal{B}}\), arms are unitedly explored in the order
that is the same as their true reward means'.
This is due to the the definition of event \({\mathcal{A}}\) and \({\mathcal{B}}\).
For example, only after arm \(1\) (the best arm)'s capacity is learnt then can PUE start to explore arm
\(2\).
With the correct exploration order,
when top \(L-1\) optimal arms' capacities are learnt
and the least favor optimal arm \(L\)'s capacity lower confidence bound are
verified to be no less than \(\bar{m}_L\coloneqq N-\sum_{k=1}^{L-1}m_k\),
the PUE set \(\mathcal{Y}_t\) will become empty and no suboptimal arm will be unitedly explored.

Although, when \(t\in \mathcal{A}\cup\mathcal{B}\), some suboptimal arms may be unitedly explored,
the number of times for \(t\in\mathcal{A}\cup\mathcal{B}\) is finite (Lemma~\ref{lma:kc_initial_bound_1} and Lemma~\ref{lma:kc_initial_bound_2}).
These costs are covered in step 1.
So, in step 3, we only need to upper bound the cost of UEs for optimal arms.

To measure how many number of times of UEs are enough to learn these top \(L-1\) optimal arms' reward capacities,
we choose the confidence \(1-\delta\) of Theorem~\ref{cor:m_sample_complexity}
as \(1-2/T\) and obtain the following lemma:
\begin{lemma}\label{lma:ue_sample_size}
    For any arm \(k\) and \(T\ge \exp(49m_k^2 / \mu_k^2)\), the inequality \(\P(\hat{m}_{k,t} = m_k)\ge 1-(2/T)\) holds if \[
        \hat{\tau}_{k,t},\hat{\iota}_{k,t} \ge  \frac{49m_k^2}{\mu_k^2}\log T.
    \]
\end{lemma}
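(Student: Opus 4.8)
The plan is to derive Lemma~\ref{lma:ue_sample_size} as a direct instantiation of the estimator's sample-complexity bound (Theorem~\ref{cor:m_sample_complexity}, first part) at confidence level $\delta \coloneqq 2/T$, with a single bookkeeping point to reconcile the \emph{effective} exploration counts $\hat{\tau}_{k,t},\hat{\iota}_{k,t}$ that \texttt{OrchExplore} actually uses with the \emph{genuine} counts $\tau_{k,t},\iota_{k,t}$ in which Theorem~\ref{cor:m_sample_complexity} is stated. First I would verify the two hypotheses of Theorem~\ref{cor:m_sample_complexity} under this choice of $\delta$. The admissibility requirement $0<\delta\le 2\exp(-49m_k^2/\mu_k^2)$ is equivalent to $1/T \le \exp(-49m_k^2/\mu_k^2)$, i.e.\ to $T\ge \exp(49m_k^2/\mu_k^2)$, which is exactly the standing assumption of the lemma; and the sample-size threshold $(49m_k^2/\mu_k^2)\log(2/\delta)$ becomes $(49m_k^2/\mu_k^2)\log T$ for $\delta=2/T$, which is precisely the lower bound imposed on $\hat{\tau}_{k,t}$ and $\hat{\iota}_{k,t}$ in the statement.

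The substantive point is that Lemma~\ref{lma:ufc_m}, Lemma~\ref{lma:learning_m_criterion} and Theorem~\ref{cor:m_sample_complexity} are phrased in terms of the per-/full-load sample means over \emph{all} genuine IE/UE rounds, whereas \texttt{OrchExplore} forms $\hat{\mu}_{k,t},\hat{\nu}_{k,t}$ and the bounds $m^l_{k,t},m^u_{k,t}$ from the effective rounds only. I would argue that on the good event $\mathcal{W}$ of probability at least $1-\delta$ on which the capacity confidence interval of Lemma~\ref{lma:ufc_m} holds simultaneously for all earlier times, one has $m^l_{k,s}\le m_k\le m^u_{k,s}$ for every $s\le t$, so that each effective IE ($a_{k,s}<m^l_{k,s}$) is a genuine IE ($a_{k,s}<m_k$) and each effective UE ($a_{k,s}\ge m^u_{k,s}$) is a genuine UE; consequently $\hat{\mu}_{k,t}$ is an average of $\hat{\tau}_{k,t}$ i.i.d.\ copies of $X_k$ and $\hat{\nu}_{k,t}$ an average of $\hat{\iota}_{k,t}$ i.i.d.\ copies of $m_k X_k$. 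The sub-Gaussian concentration/union bound behind Lemma~\ref{lma:ufc_m} then applies verbatim with $\tau_{k,t}\mapsto\hat{\tau}_{k,t}$ and $\iota_{k,t}\mapsto\hat{\iota}_{k,t}$ (no new inequality is needed — it is the same argument, indexed by the effective-count filtration), and the algebraic reduction in the proof of Theorem~\ref{cor:m_sample_complexity} — namely that $\phi(\hat{\tau}_{k,t},\delta),\phi(\hat{\iota}_{k,t},\delta)\le \mu_k/(7m_k)$ forces the interval width below $1$ and hence $m^l_{k,t}=m^u_{k,t}=m_k$ — goes through unchanged once $\hat{\tau}_{k,t},\hat{\iota}_{k,t}\ge (49m_k^2/\mu_k^2)\log T$. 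Combining, $\P(\hat{m}_{k,t}=m_k)\ge \P(\mathcal{W})\ge 1-\delta = 1-2/T$.

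I expect the only real obstacle to be this effective-versus-genuine reconciliation, and specifically the need to check that the event ``all confidence intervals valid so far'' is self-consistent: valid intervals up to time $s$ guarantee the effective explorations at time $s$ are genuine, which keeps $\hat{\mu},\hat{\nu}$ unbiased, which — via the same martingale/union-bound argument that proves Lemma~\ref{lma:ufc_m} — keeps the intervals valid at time $s+1$, so that the whole $1-\delta$ probability mass is retained rather than being eroded round by round. Everything else (the two arithmetic substitutions with $\delta=2/T$, and the interval-width computation) is routine and is already carried out inside the proof of Theorem~\ref{cor:m_sample_complexity}.
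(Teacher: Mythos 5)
Your proposal is correct and follows essentially the same route as the paper, which obtains this lemma simply by instantiating Theorem~\ref{cor:m_sample_complexity} at $\delta = 2/T$ (so that $\log(2/\delta)=\log T$ and the admissibility condition $\delta\le 2\exp(-49m_k^2/\mu_k^2)$ becomes $T\ge\exp(49m_k^2/\mu_k^2)$). Your additional reconciliation of the effective counts $\hat{\tau}_{k,t},\hat{\iota}_{k,t}$ with the genuine ones via the event that $m_{k,s}^l\le m_k\le m_{k,s}^u$ for all $s\le t$ is a point the paper leaves implicit (it only remarks that the effective counts are underestimates), and your treatment of it is sound.
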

Also notice that for any arm \(1\le k\le L\) and any time \(t \le T\),
the number of times of UEs on the arm \(\hat{\iota}_{k,t}\) is always smaller than the number of IEs on this arm \(\hat{\tau}_{k,t}\).
Because PUE always choose arms from \(\mathcal{Y}_t\) to explore and arms in \(\mathcal{Y}_t\) must have been
explored once by PIE in the prior time slot.
So, we only need to make sure the number of UEs \(\hat{\iota}_{k,t}\) exceeds the requirements in Lemma~\ref{lma:ue_sample_size}.

Lemma~\ref{lma:ue_sample_size} implies when \(T \ge \max_{k\le L-1} \exp(49m_k^2 / \mu_k^2)\),
the \(\frac{49m_k^2}{\mu_k^2}\log T\) times of UEs of arm \(k\) can assure that
the \texttt{OrchExplore} algorithm
learns the correct \(m_k\) with high confidence.
So,
the total cost of PUEs in learning these top \(L-1\) optimal arms' capacities is
upper bounded by \[\sum_{k=1}^{L-1}\frac{49w_km_k^2\log (T)}{\mu_k^2} + \frac{2(L-1)}{T}\times NT,\]
where \(w_k\coloneqq f(\bm{a}^*) - m_k\mu_k + \mu_1\)  is the highest cost of one round of PUE for arm \(k\) plus
\(\mu_1\) --- the highest cost of one possible deliberate exploration in a PIE round just preceding this PUE round (see the end of step 2).

With a similar procedure and Theorem~\ref{cor:m_sample_complexity}'s second part,
we can also show that, when \(T \ge \exp(49m_{L}^2/(m_L - \bar{m}_L +1)^2\mu_L^2)\),
the cost of validating that arm \(L\)'s capacity lower confidence bound \(m_{k,t}^l\) is no less
than \(\bar{m}_L\) is upper bounded by \[
    \frac{49w_Lm_L^2\log (T)}{\left(m_L - \bar{m}_L +1\right)^2\mu_L^2} + \frac{2}{T}\times NT.
\]

\textbf{Sum up previous three step's upper bounds. }

Finally, the regret of the \texttt{OrchExplore} algorithm is upper bounded as follows:
\[
    \begin{split}
        \ERT \le&  N\E\left[\abs{{\mathcal{A}}\cup {\mathcal{B}}}\right] + 2NL + \sum_{k=1}^L\frac{49w_km_k^2\log (T)}{\mu_k^2} + \frac{49w_Lm_L^2\log (T)}{\left(m_L - \bar{m}_L +1\right)^2\mu_L^2}
        + \sum_{k>L} (\mu_L - \mu_k)\E[\abs{G_k}]  \\
        \le & 13K^2 N^2 (4+\varepsilon^{-2}) + \sum_{k=1}^L\frac{49w_km_k^2\log (T)}{\mu_k^2} + \frac{49w_Lm_L^2\log (T)}{\left(m_L - \bar{m}_L +1\right)^2\mu_L^2}
        +\!\! \!\! \sum_{k=L+1}^K\!\! \!\!  \frac{(\mu_L -\mu_k)(\log T + 4 \log(\log T))}{\kl(\mu_k+\varepsilon,\mu_L-\varepsilon)}. \\
    \end{split}
\]

This finite time regret upper bound immediately leads to the following asymptotical form:\[
    \limsup_{T\to\infty}\frac{\ERT}{\log T} \le \sum_{k=L+1}^K \frac{\Delta_{L,k}}{\KL(\nu_k,\nu_L)} + \sum_{k=1}^{L-1} \frac{49w_km_k^2}{\mu_k^2} + \frac{49w_Lm_L^2}{\left(m_L - \bar{m}_L +1\right)^2\mu_L^2}.
\]

\section{The \texttt{MP-SE-SA} Algorithm}\label{sec:se_algorithm}

In this section, we first present the high level idea of our \texttt{MP-SE-SA} algorithm.
Then, we explain the successive elimination (SE) framework
and provide detailed description of \texttt{MP-SE-SA}.

\subsection{Design Overview}

Besides the exploration-exploitation trade-off,
the main challenge of the \texttt{MP-MAB-SA} problem is its
two coupled learning tasks:
(1) learning each arm's per load reward mean, (2) learning each arm's reward capacity.

One typical approach is to deal with \textit{these coupled learning tasks as a whole}, e.g., assign plays according to the UCB indexes of the capacities and reward means.
However, we note that opportunistic estimating the capacity $m_k$ (via UCB) cannot easily balance exploitation and exploration because ${m}_{k,t}$ is \emph{not} estimated as the mean of a distribution while the reward mean $\hat{\mu}_k$ does.
An alternative is to
\textit{separate the two coupled learning tasks
    as independent ones},
for example,
one first individually and unitedly explores
all $K$ arms to estimate their capacities,
and then adapts UCB
to the \texttt{MP-MAB-SA} with known capacity setting to update per load reward mean estimates.
We name this two-phase strategy as \texttt{ETC-UCB}.
This is a simple, yet inefficient, algorithm.
Because when the number of arms $K$ is much greater than the number of plays $N$,
there would be a great cost
in learning the $K-N$ suboptimal arm's reward capacities which turns out to be unnecessary.
We present the algorithm's detail and regret upper bound analysis in Appendix~\ref{app:etc_ucb}.

%

A better approach should \emph{partially} separate (decouple) \texttt{MP-MAB-SA}'s two learning tasks,
but also utilize their relations to improve the efficiency,
which needs an approach
that is flexible enough for fine-grained level operations.
We extend successive elimination (SE)~\citep{perchet_multi-armed_2013} to achieve that.
Our algorithm design has two challenges.
First, applying SE to handle the exploration-exploitation trade-off with multiple plays
is more complicated than single play MAB.
In particular, it also needs to balance two types of explorations: individual exploration (IE) and united exploration (UE).
Second,
the number of arms $L$ that should be reserved from elimination is unknown in advance.
Specifically, it can only be determined by the reward means' rank and their capacities, both of which are unknown a priori.


\subsection{The Successive Elimination Framework}

Recall that
the optimal arm set is $[L]\coloneqq\{1,2,\ldots, L\}$
and the rest arms are {suboptimal},
where $L$ is defined in Eq.(\ref{eq:critical_top_arms})
as the number of arms pulled in the optimal action.
The main idea of our algorithm is as follows.
We initialize a candidate set $\mathcal{J}_t = [K]$.
In each exploration round,
we uniformly explore
each arm in $\mathcal{J}_t$
and then use their rewards to update estimates of reward means and capacities.
In the process, we eliminate suboptimal arms from $\mathcal{J}_t$
according to \textit{two criteria} (see below) until $\mathcal{J}_t=[L]$.
As all arms $k\in\mathcal{J}_t$ have the same rounds of IE $\tau_{k,t}$ and UE $\iota_{k,t}$,
we omit their subscript $k$ as $\tau_t$ and $\iota_t$.
Denote reward mean estimate $\hat{\mu}_{k,t}$'s descending order map as $\sigma_t(\cdot)$.

\textbf{The elimination criterion.}
The first criterion is to accurately eliminate suboptimal arms with an opportune number of explorations (i.e., avoid over explorations).
This relies on reward mean estimates $\hat{\mu}_{k,t}$ and
the following elimination condition.
For any arm $k$ in the candidate set $\mathcal{J}_t$, if its reward mean estimate $\hat{\mu}_{k,t}$ is much worse than the ${L}^{\text{th}}$ largest\footnote{The $L$ is estimated in the second criterion's Eq.(\ref{eq:update_L_with_lower_bound}).}, i.e.,
\begin{equation*}
    \hat{\mu}_{k,t} \le \hat{\mu}_{\sigma_t(L),t} - U(\tau_t, T),
\end{equation*}
we eliminate the arm $k$ from $\mathcal{J}_t$. The function $U(\tau_t, T)$ is a high confidence upper bound on the deviation of $\hat{\mu}_{k,t} - \hat{\mu}_{\sigma_t(L),t}$ from $\mu_k - \mu_{\sigma_t(L)}$, and it is expressed as
$
    U(\tau_t, T) \coloneqq 2\sqrt{ 2 \tau^{-1}_t \xoverline{\log} \left(T/\tau_t \right)},
$
where $\xoverline{\log}(x) = \max\{\log x, 1\}$.


\textbf{The over elimination avoidance criterion.}
The second criterion is to assure that the total capacity of remaining arms in the candidate set $\mathcal{J}_t$
can cover $N$ plays, i.e., avoid any over elimination.
This depends on capacity estimates ${m}_{k,t}$ and their uniform confidence interval (UCI).
Denote $\tilde{L}_t$ as the expected size of $\mathcal{J}_t$ at time $t$.
It assures that with observations up to time $t$, the total capacities of top $\tilde{L}_t$ arms in $\mathcal{J}_t$ is no less than $N$. So, we can achieve this criterion as long as the
size $\abs{\mathcal{J}_t}$ is no less than $\tilde{L}_t$.


A key element of our algorithm design is to efficiently reduce the expected size $\tilde{L}_t$.
At the beginning, we set $\tilde{L}_t=N$ since $N$ arms cover at least $N$ plays.
As the algorithm proceeds, we update capacities' lower and upper bounds via Eq.(\ref{eq:m_lower_bound}-\ref{eq:m_upper_bound})
for all arm $k$ in $\mathcal{J}_t$. We then use ${m}_{k,t}^{l}$ to update $\tilde{L}_t$,
\begin{equation}\label{eq:update_L_with_lower_bound}
    \tilde{L}_t
    =
    \min\left\{n: \sum\nolimits_{k=1}^n m_{\sigma(k),t}^{{l}} \ge N\right\}.
\end{equation}
Figure~\ref{fig:size_update} depicts the expected size $\tilde{L}_t$'s update and compares it with \texttt{ETC-UCB} (in Appendix~\ref{appsub:etc_ucb_alg}).
The improvements of the \texttt{MP-SE-SA} algorithm are two folds:
(1) it only performs united explorations on top $N$ arms after eliminating $K-N$ obviously inferior arms (see the blue shadow),
(2) it gradually reduces the expected arm size $\tilde{L}_t$ in exploration rounds, which further avoids learning exact capacities for the rest $N-L$ suboptimal arms (see the orange shadow).

\begin{figure}[htb]
    \centering
    \includegraphics[width=0.4\textwidth]{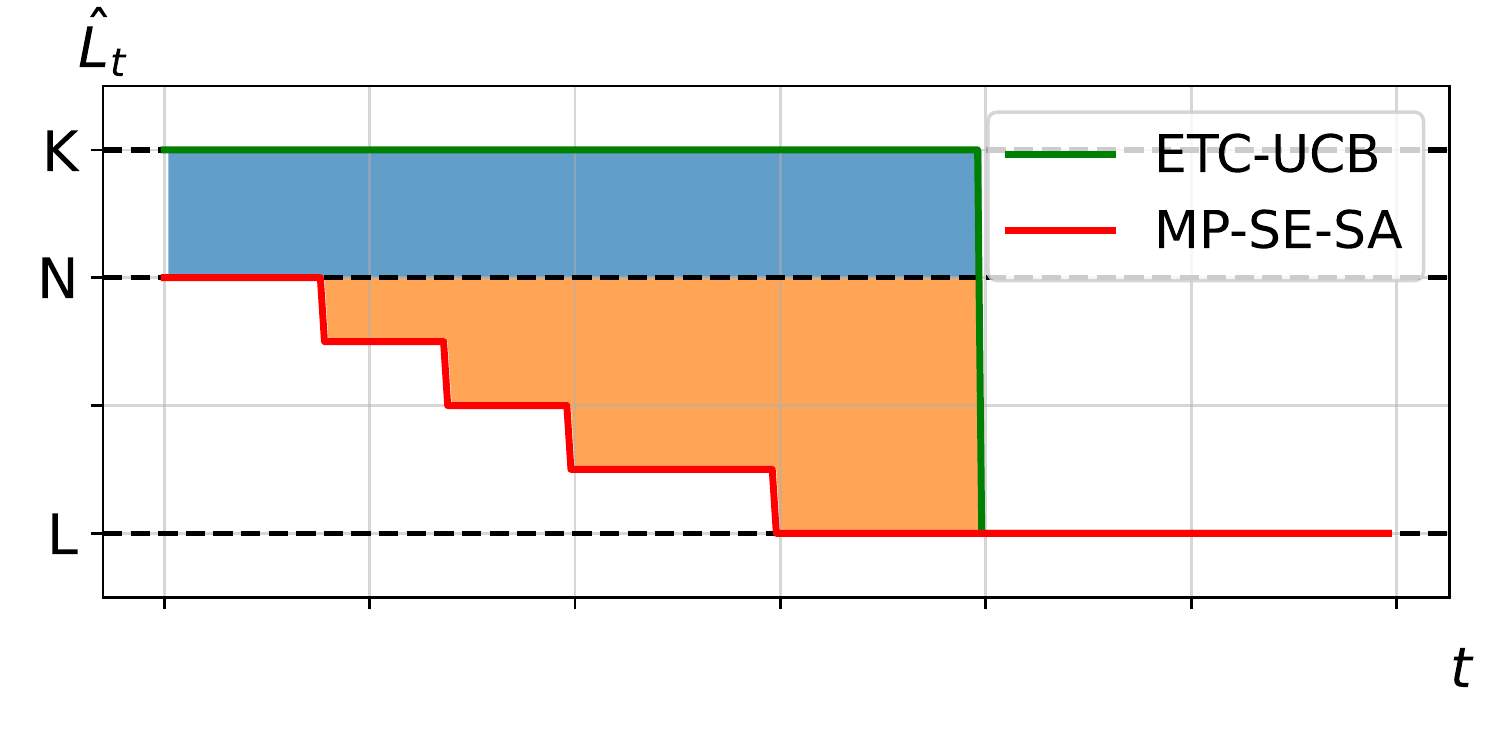}
    \caption{The update of candidate set's expected size $\tilde{L}_t$}
    \label{fig:size_update}
\end{figure}

\subsection{The \texttt{\texttt{MP-SE-SA}} Algorithm}
\setlength{\textfloatsep}{0pt}
\begin{algorithm}[htb]
    \caption{Multiple-play successive elimination with shareable arms (\texttt{MP-SE-SA})}
    \label{alg:main}
    {\bfseries Input:} $K$, $N$, $T$ and parameters $\gamma \in [1, \infty), \xi \in (0, \infty).$\\
    {\bfseries Initial:}
    \(t,\tau_t,\iota_t \gets 1\), \(\mathcal{J}_t \gets [K]\), \(\delta\gets 2\xi/T\), \(\hat{\bm{\mu}}_t\), \(\hat{\bm{\nu}}_t \gets\bm{0}\), \(\bm{m}_t^l \gets \{1,\ldots,1\}\), \(\bm{m}_t^u \gets \{N,\ldots,N\}\), \(\tilde{L}_t\gets N.\)
    \begin{algorithmic}[1]
        \WHILE{$t\le T$}
        \STATE {Update the descending ordering $\sigma_t(\cdot)$ such that $\hat{\mu}_{\sigma_t(k),t}$ is the $k$th largest in $\{\hat{\mu}_{k,t}, k \in \mathcal{J}_t\}$.}
        \STATE{Update the expected set size $\tilde{L}_t$ by Eq.(\ref{eq:update_L_with_lower_bound}). \label{alg:line:update_L}}
        \IF{$\tilde{L}_t< \abs{\mathcal{J}_t}$}\label{alg:line:L<S}
        \STATE{\textsc{Elimination}($\mathcal{J}_t, \hat{\bm{\mu}}_t, \tilde{L}_t, \sigma_t(\cdot),\gamma,T$).}\label{alg:line:eliminate}

        \STATE{\scshape Individual Exploration($\mathcal{J}_t, \hat{\bm{\mu}}_t, \tau_t, t$).}\label{alg:line:ie}
        \ELSIF{$\tilde{L}_t= \abs{\mathcal{J}_t}$}\label{alg:line:L=S}
        \STATE{\scshape Exploitation($\mathcal{J}_t,\bm{m}_t^l,\hat{\bm{\mu}}_t,\tilde{L}_t,\sigma_t(\cdot),\tau_t, t$).}\label{alg:line:explore}

        \STATE{$\mathcal{J}_t'\gets\{k\in\mathcal{J}_t:{m}_{k,t}^l\neq{m}_{k,t}^u\}.$}\label{alg:line:S_prime}
        \IF{\(\mathcal{J}_t' \neq \emptyset\)}
        \STATE{\scshape United Exploration($\mathcal{J}_t', \hat{\bm{\nu}}_t, \iota_t, t$).}
        \label{alg:line:ue}
        \STATE{Update $\hat{{m}}_{k,t}^l$ and $\hat{{m}}_{k,t}^u$ by Eq.(\ref{eq:m_lower_bound})(\ref{eq:m_upper_bound}).} \label{alg:line:update_m}
        \ENDIF
        \ENDIF
        \ENDWHILE
    \end{algorithmic}
\end{algorithm}

We present \texttt{MP-SE-SA} in Algorithm~\ref{alg:main}.
The magnitude of the current candidate arm set size $\abs{\mathcal{J}_t}$ comparing to the expected size $\tilde{L}_t$ directs the \texttt{\texttt{MP-SE-SA}} algorithm.
That $\tilde{L}_t<\abs{\mathcal{J}_t}$ (Line~\ref{alg:line:L<S})
implies the candidate arm set $\mathcal{J}_t$ containing suboptimal arms.
Then, the algorithm repeatedly employs IEs to the arms in $\mathcal{J}_t$ (Line~\ref{alg:line:ie}) so as to distinguish suboptimal ones and eliminate them (Line~\ref{alg:line:eliminate}).
After eliminating the $\abs{\mathcal{J}_t}-\tilde{L}_t$ suboptimal arms, $\tilde{L}_t = \abs{\mathcal{J}_t}$ (Line~\ref{alg:line:L=S}) and the algorithm turns to exploit the current arm set (Line~\ref{alg:line:explore}).
In the scenario, for arms whose capacity have not been exactly learnt, i.e., in set $\mathcal{S}_{t}'$ at Line~\ref{alg:line:S_prime}, the algorithm employs UEs to acquire samples for estimating the full load mean $m_k\mu_k$ (Line~\ref{alg:line:ue}) and update the $\hat{{m}}_{k,t}^l$ and $\hat{{m}}_{k,t}^u$ estimates (Line~\ref{alg:line:update_m}).
Then $\tilde{L}_t$ may decrease accordingly (Line~\ref{alg:line:update_L}) and the algorithm may go back to the $\tilde{L}_t<\abs{\mathcal{J}_t}$ case.
Finally, when $\tilde{L}_t=\abs{\mathcal{J}_t}$ and the capacities of arms in $\mathcal{J}_t$ are learnt ($\mathcal{J}_t'{=}\emptyset$),
the algorithm finds the optimal arm set, i.e., $\mathcal{J}_t{=}[L]$ and, from then on, settles down on the optimal action.


To enhance the algorithm's efficiency, we add two parameters: $\gamma \ge  1$ for scaling elimination's deviation gap as $\gamma U(\tau_t, T)$ and $\xi > 0$ for tuning UCI's confidence level $1-\delta$ as $1 - 2\xi/T$.
The smaller the $\gamma$, the more aggressive in eliminating arms, while the smaller the $\xi$, the more conservative in estimating capacities.
$\gamma$ and $\xi$ can be tuned for better performance in a specific environment but simply setting both as $1$ is also valid.
In simulation (Section~\ref{sec:simulation} and Appendix~\ref{app:addition-simulation}), we set both equal to \(1\) as default.

    {\texttt{MP-SE-SA}'s four procedures are presented in Algorithm~\ref{alg:procedures}.
        The \textsc{Elimination} procedure at Line~\ref{proc:elimination} corresponds to \textbf{the elimination criterion} in the previous subsection.
        The \textsc{Individual Exploration} procedure (Line~\ref{proc:ie}) collects samples for estimating candidate arms' per load reward mean \(\mu_k\).
        It evenly divides the current candidate arm set \(\mathcal{J}_t\) to \(\ceil{\abs{\mathcal{J}_t}/N}\) subsets so that each of them contains no more than \(N\) arms (Line~\ref{proc:ie:divide}).
        In each time slot, the procedure assigns plays to individually explore arms of one subset (Line~\ref{proc:ie:ie}).
        The \textsc{United Exploration} procedure (Line~\ref{proc:ue}) collects samples for estimating the full load reward mean \(m_k\mu_k\) of candidate arms whose capacities have not been learnt, i.e., in \(\mathcal{J}_t'\).
        It assigns all \(N\) plays to pull each arm in \(\mathcal{J}_t'\) in turn (Line~\ref{proc:ue:ue}).
        The \textsc{Exploitation} procedure (Line~\ref{proc:exploitation}) assigns plays to maximize expected reward according to the estimated per load reward \(\hat{\mu}_{k,t}\) and capacities' lower confidence bounds \({m}_{k,t}^l\).}
\begin{algorithm}[htb]
    \caption{Procedures of \texttt{MP-SE-SA}}
    \label{alg:procedures}
    \begin{algorithmic}[1]
        \PROCEDURE{Elimination}{$\mathcal{J}_t, \hat{\bm{\mu}}_t, \tilde{L}_t, \sigma_t(\cdot),\gamma,T$} \label{proc:elimination}
        \FORALL{$k\in \mathcal{J}_t$ }
        \IF {$\hat{\mu}_{k,t} \le \hat{\mu}_{\sigma(\tilde{L}_t),t} - \gamma U(\tau_t, T)$}
        \STATE $\mathcal{J}_t\gets \mathcal{J}_t\setminus\{k\}.$ \label{line:elimination}
        \ENDIF
        \ENDFOR
        \ENDPROCEDURE

        \PROCEDURE{Individual Exploration}{$\mathcal{J}_t,\!\hat{\bm{\mu}}_t,\! \tau_t,\! t$} \label{proc:ie}
        \STATE {Divide $\mathcal{J}_t$ to subsets $\{\mathcal{S}_{1,t},\mathcal{S}_{2,t}, \ldots, \mathcal{S}_{{\ceil{\abs{\mathcal{J}_t}/N}},t}\}$ such that $\abs{\mathcal{S}_{i,t}} \le N$ and $\cup_i\mathcal{S}_{i,t}=\mathcal{J}_t$.} \label{proc:ie:divide}
        \FORALL{$\mathcal{S}_{i,t}$}
        \STATE {Individually assign $N$ plays to arms in $\mathcal{S}_{i,t}$ and observe their rewards $R_{k,t}$ for all $k\in\mathcal{S}_{i,t}$.} \label{proc:ie:ie}
        \STATE {$\hat{\mu}_{k,t}{\gets} \left(\hat{\mu}_{k,t}(\tau_t-1){+}R_{k,t}\right)/\tau_t$ for all $k$ in $\mathcal{S}_{i,t}.$}
        \ENDFOR
        \STATE{
            $\tau_t \gets \tau_t + 1$, $t\gets t+\ceil{\abs{\mathcal{J}_t}/N}.$
        }
        \ENDPROCEDURE

        \PROCEDURE{United Exploration}{$\mathcal{J}_t', \hat{\bm{\nu}}_t, \iota_t, t$} \label{proc:ue}
        \FORALL{$k\in\mathcal{J}_t'$}
        \STATE {Assign $N$ plays to arm $k$, observe reward $R_{k,t}$.\label{proc:ue:ue}}
        \STATE $\hat{\nu}_{k,t}\gets \left(\hat{\nu}_{k,t}(\iota_t-1) + R_{k,t}\right)/\iota_t.$
        \ENDFOR
        \STATE{$\iota_t \gets \iota_t + 1, t\gets t+\abs{\mathcal{J}_t'}.$}
        \ENDPROCEDURE

        \PROCEDURE{Exploitation}{$\mathcal{J}_t,\bm{m}_t^l,\hat{\bm{\mu}}_t,\!\tilde{L}_t,\!\sigma_t(\!\cdot\!),\tau_t, t$}\label{proc:exploitation}
        \STATE {Assign \(m_{\sigma_t(k), t}^l\) plays to arm \(\sigma_t(k)\) for \(k<\tilde{L}_t\) and \(N-\sum_{k=1}^{\tilde{L}_t-1}m_{\sigma_t(k),t}^l\) plays to arm \(\sigma_t(\tilde{L}_t)\).}
        \STATE Observe rewards $R_{\sigma_t(k), t}$ for all $k \le \tilde{L}_t.$
        \STATE {$\hat{\mu}_{\sigma_t(k),t} \gets\hat{\mu}_{\sigma_t(k),t}(\tau_t{-}1) {+} {R_{\sigma_t(k),t}}/{m_{\sigma_t(k), t}^l})/\tau_t\,\text{for}\,k {<} \tilde{L}_t$.}
        \STATE {$\hat{\mu}_{\sigma_t(\tilde{L}_t),t} \gets (\hat{\mu}_{\sigma_t(\tilde{L}_t),t}(\tau_t-1)  + {R_{\sigma_t(\tilde{L}_t),t}}/{(N-\sum_{k=1}^{\tilde{L}_t-1}m_{\sigma_t(k),t}^l)})/\tau_t$.}
        \STATE{
            $\tau_t \gets \tau_t + 1, t\gets t+1.$
        }
        \ENDPROCEDURE\label{alg:end_proc}
    \end{algorithmic}
\end{algorithm}

\section{Regret Analysis of \texttt{MP-SE-SA}}
\label{sec:se_analysis}

\subsection{Regret Result Overview}



We rigorously prove
that \texttt{MP-SE-SA} (Algorithm~\ref{alg:main}) has a logarithmic regret.
We first define several quantities in the regret bound.
We define $g_{i,j}\coloneqq {(\mu_1 - \mu_j)}/{(\mu_i - \mu_j)} = {\Delta_{1,j}}/{\Delta_{i,j}}$ for measuring \texttt{MP-MAB-SA}'s difficulty from the elimination algorithms' aspect.
Assuming that the suboptimal arm $j$ survives from eliminations,
the $g_{L,j}$ for $j>N$ represents a ratio between the cost of
mis-eliminating the best arm $1$ while keeping arm $j$
over the cost of mis-eliminating arm $L$ while keeping arm $j$.
The largest per time slot expected reward is
$
	\sum^{L-1}_{k=1} m_k \mu_k + (N-\sum_{k=1}^{L-1} m_k)\mu_L,
$
and the smallest per time reward is $\min_{k\in[K]} m_k \mu_k$,
which happens when all $N$ plays are assigned to an arm with the smallest full load reward mean.
So, the largest per time regret denoted by $h$ is
\(
h
\coloneqq
\sum^{L-1}\nolimits_{k=1} m_k \mu_k {+} (N-\sum\nolimits_{k=1}^{L-1}m_k)\mu_L - \min_{k\in[K]} m_k \mu_k.
\)
For convenience, we denote $w_k$ as the cost upper bound of one round of IE and one round of UE for arm $k$,
$  w_k\coloneqq f(\bm{a}^*) - m_k\mu_k + \mu_1.$


\begin{theorem}[Regret Upper Bound of \texttt{MP-SE-SA}]\label{thm:main_regret}
	When the horizon $T\ge \xi\max_{k\in [N]}\exp({1/(64m_k^2\mu_k^2)})$, Algorithm~\ref{alg:main}'s expected regret is upper bounded as follows,
	\begin{equation}\label{eq:main_regret}
		\begin{split}
			\ERT
			\le  \sum_{k=L+1}^K \frac{342\gamma^2 g_{L,k} m_k }{\Delta_{L,k}}\xoverline{\log}\left(\frac{T\Delta_{L,k}^2}{18\gamma^2}\right) + \frac{4(L-1)h}{\Delta_{L-1,L}^2}
			+ \sum_{k=1}^N \frac{49w_k m_k^2}{\mu_k^2}\xoverline{\log}\left( \frac{T}{\xi} \right)
			+ 2\xi Kh,
		\end{split}
	\end{equation}
	where $\gamma\ge 1,\xi \ge 0$ are two tunable parameters of Algorithm~\ref{alg:main}, $L$ is the number of arms in the optimal action, $N$ is the number of plays, and $K$ is the number of arms.
\end{theorem}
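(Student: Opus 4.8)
The plan is to run a clean-event / regret-decomposition argument tailored to the interleaved two-task schedule of \texttt{MP-SE-SA}, matching each of the four stated terms in Eq.(\ref{eq:main_regret}) to one cause.

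\textbf{Clean event and the $2\xi Kh$ term.} I would define the clean event $\mathcal C$ on which, for every arm $k$ and every round, $|\hat\mu_{k,t}-\mu_k|\le\tfrac12 U(\tau_t,T)$ (so that for any two arms $|(\hat\mu_{k,t}-\hat\mu_{j,t})-(\mu_k-\mu_j)|\le U(\tau_t,T)$) and, simultaneously, the capacity intervals of Lemma~\ref{lma:ufc_m} hold with $\delta=2\xi/T$. The mean part follows from a peeling argument over $\tau_t$ (the shape of $U$ is chosen precisely so the union-over-rounds bound is $O(1/T)$ per arm), the capacity part from Lemma~\ref{lma:ufc_m} and a union over the $\le K$ arms; hence $\P(\mathcal C^{c})\le 2\xi K/T+o(1/T)$, and bounding the regret on $\mathcal C^{c}$ by $hT$ gives the last term $2\xi Kh$. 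I would also record here that once an arm has accumulated the $\Theta(m_k^2\mu_k^{-2}\log(T/\xi))$ individual and united explorations needed to invoke Theorem~\ref{cor:m_sample_complexity}, the side condition $\phi(\tau_{k,t},\delta)+\phi(\iota_{k,t},\delta)<\hat\mu_{k,t}$ of Lemma~\ref{lma:ufc_m} is automatically met --- this is the role of the problem-dependent lower bound on $T$.

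\textbf{Structural invariants on $\mathcal C$.} By induction over iterations I would prove: (a) no optimal arm $k\le L$ is ever eliminated, because at most $L-1$ arms (namely $1,\dots,L-1$) can have empirical mean exceeding $\mu_L+\tfrac12U(\tau_t,T)$, so the $L$-th largest empirical mean in $\mathcal J_t$ is $\le\mu_L+\tfrac12U$, hence $\hat\mu_{\sigma_t(\tilde L_t),t}-\gamma U\le\mu_L-\tfrac12U\le\hat\mu_{k,t}$; (b) $\tilde L_t\ge L$ always, since $m^{l}_{k,t}\le m_k$ turns Eq.(\ref{eq:update_L_with_lower_bound}) into an overcount of $L$, whence $[L]\subseteq\mathcal J_t$ and $|\mathcal J_t|\ge\tilde L_t\ge L$ throughout; (c) only arms $1,\dots,N$ are ever united-explored (since UE requires $\tilde L_t=|\mathcal J_t|$, which forces at least $K-N$ eliminations first, and those remove the lowest-mean arms); and (d) once the top-$L$ capacity lower bounds are tight and the top-$L$ empirical ranking is set-correct, $\tilde L_t=L$ exactly (using $\sum_{j<L}m_j<N\le\sum_{j\le L}m_j$), after which elimination of a suboptimal arm $k$ is governed by the gap $\Delta_{L,k}$. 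Combining with Theorem~\ref{cor:m_sample_complexity}, the algorithm eventually reaches $\mathcal J_t=[L]$ with all top-$L$ capacities learned and plays $\bm a^*$ forever.

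\textbf{Round counts and conversion to regret.} Solving $(1+\gamma)U(\tau_t,T)\le\Delta_{L,k}$ (with $U(\tau,T)=2\sqrt{2\tau^{-1}\xoverline{\log}(T/\tau)}$) shows a suboptimal arm $k>L$ survives at most $\tau_k^{*}=O\big(\gamma^{2}\Delta_{L,k}^{-2}\,\xoverline{\log}(T\Delta_{L,k}^{2}/\gamma^{2})\big)$ individual-exploration iterations once $\tilde L_t=L$; each arm $k\le N$ needs at most $O\big(m_k^{2}\mu_k^{-2}\,\xoverline{\log}(T/\xi)\big)$ united-exploration rounds by Theorem~\ref{cor:m_sample_complexity} with $\delta=2\xi/T$ (and its UE count never exceeds its IE count); and the number of iterations before the top-$L$ ranking stabilizes is $O((L-1)\Delta_{L-1,L}^{-2})$, since optimal arm $j<L$ overtakes arm $L$ within $O(\Delta_{j,L}^{-2})\le O(\Delta_{L-1,L}^{-2})$ iterations. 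An IE iteration spans $\lceil|\mathcal J_t|/N\rceil$ slots, and charging the sub-allocation containing arm $k$ to arm $k$ (diverting one arm's worth of a top arm's capacity costs $O(\Delta_{1,k}m_k)$ per slot), the $\tau_k^{*}$-bound yields the first term $\sum_{k>L}\tfrac{342\gamma^{2}g_{L,k}m_k}{\Delta_{L,k}}\xoverline{\log}\big(\tfrac{T\Delta_{L,k}^{2}}{18\gamma^{2}}\big)$ with $g_{L,k}=\Delta_{1,k}/\Delta_{L,k}$; each UE round has regret $\le w_k=f(\bm a^{*})-m_k\mu_k+\mu_1$, giving the third term; the transient iterations before the ranking/top-$L$ capacities settle have regret $\le h$ each, giving $\tfrac{4(L-1)h}{\Delta_{L-1,L}^{2}}$; and $\mathcal C^{c}$ gives $2\xi Kh$. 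Summing proves the claim.

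\textbf{Main obstacle.} The delicate point is the interleaving: $\tilde L_t$ is driven by the slowly-improving capacity lower bounds while elimination is driven by the mean estimates, so one must show (i) the algorithm makes monotone progress and never stalls --- every iteration either eliminates an arm, tightens a capacity bound, or belongs to a provably finite transient --- and (ii) every IE iteration in which a suboptimal arm $k$ is still present while $\tilde L_t>L$ (where the elimination threshold is a worse arm and the natural gap to eliminate a near-boundary arm shrinks) is charged to the united-exploration budget or the ranking budget rather than to the $\Delta_{L,k}$ count, so the three geometric-type sums do not double count. The sharp constant in the first term also hinges on the refined per-sub-allocation charging (the $g_{L,k}m_k$ factor) instead of the crude ``regret $\le h$ per slot'' bound.
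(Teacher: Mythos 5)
Your high-level accounting of the four terms is aligned with the paper's, and your ``main obstacle'' paragraph correctly identifies the interleaving issue that the paper resolves by a \emph{virtual rearrangement}: time slots in which elimination is blocked by the over-elimination-avoidance criterion are virtually moved to the start of the run, after which the analysis reduces to the known-capacity algorithm \texttt{MP-SE-SA}-KC (Theorem~\ref{thm:mp_se_sa_known_upper}), whose bound supplies the first two terms, while the rearranged slots are exactly the UE/IE rounds counted by the sample complexity of Theorem~\ref{cor:m_sample_complexity} and give the third and fourth terms. Your proposal to ``charge such iterations to the united-exploration budget'' is the same idea in less precise form.

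However, there is a genuine gap in your concentration step. You define the clean event $\mathcal{C}$ as $\lvert\hat\mu_{k,t}-\mu_k\rvert\le\tfrac12 U(\tau_t,T)$ for \emph{every} arm and \emph{every} round, claim $\P(\mathcal{C}^c)\le 2\xi K/T+o(1/T)$, and then pay $hT$ on $\mathcal{C}^c$. This probability claim is false: by construction of $U(\tau,T)=2\sqrt{2\tau^{-1}\xoverline{\log}(T/\tau)}$, the deviation probability at exploration count $\tau$ is of order $(\tau/T)^{c}$ for a small constant $c$ (the martingale bound in Lemma~\ref{lma:martingale} gives $4\zeta/T$ only for the event restricted to $s\le\zeta$), so a union over all $\tau\le T$ does not yield $O(1/T)$ --- it yields a constant or worse, and multiplying by $hT$ would give linear regret. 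This is precisely why the paper's analysis of the elimination phase (Theorem~\ref{thm:mp_se_upper}, adapted in Theorem~\ref{thm:mp_se_sa_known_upper}) does not use a global clean event: it introduces arm-specific separators $\zeta_{L,k}$, decomposes the bad event by the first separator at which it occurs, bounds the probability of failure \emph{before} $\zeta_{L,k}$ by $O(\zeta_{L,k}/T)$, and pairs that with the worst-case regret $T\cdot g_{L,k}\Delta_{L,k}$ incurred \emph{after} $\zeta_{L,k}$, so each product is logarithmic. In the paper the $2\xi Kh$ term comes only from the capacity-interval failures (probability $2\xi/T$ per arm via Lemma~\ref{lma:ufc_m}); the mean-estimation failures are absorbed into the first two terms through this separator decomposition. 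Your proof cannot produce the first two terms of Eq.(\ref{eq:main_regret}) without replacing the global clean event by this (or an equivalent) peeling argument.
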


\begin{proof}[Proof Sketch of Theorem~\ref{thm:main_regret}]
	The detailed proof is in Appendix~\ref{appsub:auxillary_regret_upper_bound}-\ref{appsub:mp_se_usa_upper_bound}.
	One key idea in the proof is to \emph{virtually decouple} the suboptimal arm elimination and expected candidate size update, since their dependency invalids the separating technique for analyzing SE algorithm (Appendix~\ref{appsub:auxillary_regret_upper_bound}):
	the elimination only happens when $\tilde{L}_t < \abs{\mathcal{S}_t}$,
	and if $\tilde{L}_t$ is large, elimination may not be possible to proceed.
	When elimination cannot proceed, i.e., $\tilde{L}_t = \abs{\mathcal{S}_t} > L$,
	we consider a \emph{virtual} rearrangement of IE and UE rounds, that is, virtually move a number of IEs and UEs (from the future) to the beginning to accumulate observations in advance and thus reduce $\tilde{L}_t$ so that the elimination can proceed.
	Such rearrangement does not change the total regret.
	We apply Corollary~\ref{cor:m_sample_complexity}'s sample complexity result to bound the number of rearranged rounds, which leads to the last two terms in Eq.(\ref{eq:main_regret}). The first two terms corresponds to successively eliminating arms
	in rounds that are not rearranged.
\end{proof}

Theorem~\ref{thm:main_regret} states that the regret upper bound of Algorithm~\ref{alg:main}
has a dependency of $\log T$.
The upper bound in Eq.(\ref{eq:main_regret}) is \emph{problem dependent}
as the factor $g_{L,k}$, the capacity $m_k$, reward mean $\mu_k$, and reward gaps $\Delta_{L,k}$ all depend on the specific bandit environment.
Since these dependent parameters are in the very complex formula of the regret bound,
techniques for deriving problem independent bounds from problem dependent ones (e.g.,~\citep[Corollary 2.1]{perchet_multi-armed_2013}) are not applicable.
Deriving a {problem independent} bound for \texttt{MP-SE-SA} can be highly nontrivial.

Theorem~\ref{thm:main_regret}'s bound has the following asymptotical form.
\begin{corollary} Algorithm~\ref{alg:main}'s regret upper bound is
	\label{cor:asymptotical_bound}
	\begin{equation*}
		\ERT
		\le\O\left(\sum_{k=L+1}^K \frac{g_{L,k}m_k}{\Delta_{L,k}}\log T\right)
		+ \O\left(\sum_{k=1}^N \frac{w_k^2m_k^2}{\mu_k^2}\log T\right).
	\end{equation*}
\end{corollary}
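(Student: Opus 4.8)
The plan is to read the corollary off directly from the finite-time bound of Theorem~\ref{thm:main_regret} by letting $T\to\infty$ while treating every instance-dependent quantity as a fixed constant: the gaps $\Delta_{i,j}$, the capacities $m_k$, the means $\mu_k$, the ratios $g_{L,k}$, the per-slot regret $h$, the tunable parameters $\gamma\ge 1$ and $\xi>0$, and the integers $K,N,L$. Only $T$ is allowed to grow, and it enters Eq.(\ref{eq:main_regret}) solely through the two $\xoverline{\log}$ factors (together with the standing hypothesis $T\ge\xi\max_{k\in[N]}\exp(1/(64m_k^2\mu_k^2))$, whose right-hand side is itself $T$-independent).

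First I would simplify the two logarithmic factors. Once $T$ is large enough that $T\Delta_{L,k}^2/(18\gamma^2)\ge e$ for every $k>L$ and $T/\xi\ge e$ --- a threshold depending only on the instance --- we get $\xoverline{\log}(T\Delta_{L,k}^2/(18\gamma^2))=\log T+\log(\Delta_{L,k}^2/(18\gamma^2))=\O(\log T)$ and $\xoverline{\log}(T/\xi)=\log T-\log\xi=\O(\log T)$. Substituting into Eq.(\ref{eq:main_regret}), the first sum becomes $\O\big(\sum_{k=L+1}^K (g_{L,k}m_k/\Delta_{L,k})\log T\big)$ and the third sum becomes $\O\big(\sum_{k=1}^N (w_k m_k^2/\mu_k^2)\log T\big)$; the absorbed factors ($342\gamma^2$, $49$, and the additive log-offsets) depend only on the bandit environment, not on $T$. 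Then I would observe that the remaining additive terms $4(L-1)h/\Delta_{L-1,L}^2$ and $2\xi Kh$ are $T$-independent constants, hence $\O(1)$, which is dominated by $\log T$ and can be folded into either big-$\O$ term. Collecting everything yields the displayed bound. (The corollary writes $w_k^2$ where Theorem~\ref{thm:main_regret} carries $w_k$; under either reading the $T$-dependence is the stated $\log T$ and only the hidden constant changes.)

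There is no genuine obstacle here --- the corollary is a routine asymptotic reading of Theorem~\ref{thm:main_regret}. The one point that deserves explicit care is the bookkeeping that everything inside the $\O(\cdot)$ notation is a function of the instance parameters alone and does not covertly depend on $T$; in particular one should exhibit the ($T$-free) threshold beyond which $\xoverline{\log}(cT)=\log(cT)$, so that the phrase ``for $T$ large enough'' is rigorous.
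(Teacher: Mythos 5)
Your proposal is correct and follows essentially the same route as the paper, which states Corollary~\ref{cor:asymptotical_bound} as an immediate asymptotic reading of Theorem~\ref{thm:main_regret} without further argument: the $\xoverline{\log}$ factors become $\O(\log T)$, the instance-dependent constants are absorbed into the big-$\O$, and the $T$-independent additive terms are dominated. Your explicit bookkeeping of the $T$-free thresholds, and your observation that the corollary's $w_k^2$ versus the theorem's $w_k$ only changes the hidden constant, are both sound and if anything more careful than the paper's treatment.
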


The first term is due to the successive elimination framework.
The second term corresponds to the worst case's cost of learning top \(N\) arms' reward capacities.
We then compare both terms to the regret lower bound's two terms in Theorem~\ref{thm:regret_lower_bound}, which points potential gaps in the upper bound.
In the comparison of their first terms, the upper bound has an additional \(m_k\) factor and is tight up to a positive coefficient.
Their second terms are different in summation ranges, where the lower bound only requires to learn \(L\) optimal arms' capacity, while the upper bound needs to learn top \(N\) arms'.
This gap implies the possibility to avoid learning \(N-L\) suboptimal arms' capacity in a finer-grained algorithm, which 
is achieved by our \texttt{OrchExplore} algorithm in Section~\ref{sec:opt_algorithm}.

\subsection{Auxillary Regret Upper Bounds}\label{appsub:auxillary_regret_upper_bound}

As building blocks for analyzing \texttt{MP-SE-SA}, we first study SE in two simpler cases:
MP-MAB and \texttt{MP-MAB-SA} with known capacity (KC).
We name the former algorithm as MP-SE, the latter as \texttt{MP-SE-SA}-KC.

\subsubsection{MP-SE's Regret Upper Bound}
As MP-MAB assumes that all arm's reward capacities $m_k$ are $1$,
MP-SE is obtained by applying $\tilde{L}_t = N$
in \texttt{MP-SE-SA} (Algorithm~\ref{alg:mp_se}).

\begin{algorithm}[htb]
	\caption{Multiple-Play Successive Elimination (MP-SE)}
	\label{alg:mp_se}
	{\bfseries Input:} Arm set $[K]$, plays $N$, time horizon $T$, and parameters $\gamma\in [1,\infty)$.\\
	{\bfseries Initial:} $t,\tau_t \gets 1,\, \mathcal{S}_t\gets [K],\, \hat{\bm{\mu}}_t\gets\bm{0}\in \mathbb{R}^K.$
	\begin{algorithmic}[1]
		\WHILE{$t\le T$}
		\STATE{Sort $\{\hat{\mu}_{k,t}, k \in \mathcal{S}_t\}$ via a mapping $\sigma$, such that $\hat{\mu}_{\sigma_t(k),t}$ is the $k$th largest among them.}
		\IF[Use $N$ to replace $\tilde{L}_t$.]{$N < \abs{\mathcal{S}_t}$}
		\STATE{\textsc{Elimination}($\mathcal{S}_t, \hat{\bm{\mu}}_t, \sigma_t(\cdot),\gamma,T$).}
		\STATE{\textsc{Individual Exploration}($\mathcal{S}_t, \hat{\bm{\mu}}_t, \tau_t, t$).}
		\ELSIF{$N = \abs{\mathcal{S}_t}$}
		\STATE{\scshape Exploitation($\mathcal{S}_t,{\bm{m}}_t^l,\hat{\bm{\mu}}_t,\tilde{L}_t,\sigma_t(\cdot),\tau_t, t$).}
		\ENDIF
		\ENDWHILE
	\end{algorithmic}
\end{algorithm}


\begin{theorem}\label{thm:mp_se_upper}
	With the setting $m_k=1$ in Algorithm~\ref{alg:mp_se}, MP-SE(-SA)'s regret is upper bounded as follows,
	\begin{equation}\label{eq:mp_se_upper}
		\ERT
		\le \sum_{k=N+1}^K \frac{342\gamma^2 g_{N,k}}{\Delta_{N,k}}\xoverline{\log}\left(\frac{T\Delta_{N,k}^2}{18\gamma^2}\right) + \frac{2(N-1)h}{\Delta_{N-1,N}^2}.
	\end{equation}
	where $\gamma \ge 1$ is the algorithm's input constant parameter.
\end{theorem}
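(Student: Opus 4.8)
The plan is to analyze MP-SE via the standard successive-elimination decomposition, adapted to the multiple-play setting with $N$ survivor arms (instead of $1$). Recall that in MP-SE with $m_k=1$, the algorithm maintains a candidate set $\mathcal{S}_t$, repeatedly individually explores all arms in $\mathcal{S}_t$ (in batches of $N$), and eliminates any arm $k$ with $\hat{\mu}_{k,t}\le\hat{\mu}_{\sigma_t(N),t}-\gamma U(\tau_t,T)$ until $\abs{\mathcal{S}_t}=N$, after which it exploits the top $N$ arms forever. The regret has two sources: (i) the cost accrued while a suboptimal arm $k>N$ still survives in $\mathcal{S}_t$, and (ii) the cost of a bad event where some optimal arm $k\le N$ is wrongly eliminated (or the ordering among the top arms is badly corrupted), which can cause a linear-in-the-remaining-horizon loss but occurs with polynomially small probability.

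First I would set up the good event. Using the deviation function $U(\tau_t,T)=2\sqrt{2\tau_t^{-1}\xoverline{\log}(T/\tau_t)}$ together with a peeling/union bound over the (at most $T$) exploration rounds, I would show that with probability at least $1-O(1/T)$ (up to constants and a $\log$ factor absorbed by $\xoverline{\log}$), simultaneously for all arms $k$ and all rounds $t$, $\abs{\hat{\mu}_{k,t}-\mu_k}\le \tfrac12 U(\tau_t,T)$; call this event $\mathcal{C}$. On $\mathcal{C}$, no arm among the top $N$ is ever eliminated, because for such an arm $\hat{\mu}_{k,t}\ge\mu_k-\tfrac12U\ge\mu_N-\tfrac12U\ge\hat{\mu}_{\sigma_t(N),t}-U>\hat{\mu}_{\sigma_t(N),t}-\gamma U$ since $\gamma\ge1$. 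Hence $\mathcal{S}_t\supseteq[N]$ always on $\mathcal{C}$, and $\sigma_t(N)$ indexes a genuinely top-$N$ arm. On the complement $\mathcal{C}^c$, bound the regret crudely by $h\cdot T\cdot \P(\mathcal{C}^c)=O(h)$; with the standard doubling of the failure probability this contributes the $\tfrac{2(N-1)h}{\Delta_{N-1,N}^2}$-type term (the $\Delta_{N-1,N}^2$ denominator enters because the relevant bad event is an ordering mistake among the least-favored optimal arm and its neighbors, whose probability is controlled by $\exp(-c\tau\Delta_{N-1,N}^2)$, summed over $\tau$).

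Next, on $\mathcal{C}$, I would bound how long a suboptimal arm $k>N$ survives. Arm $k$ is eliminated once $\gamma U(\tau_t,T)<\hat{\mu}_{\sigma_t(N),t}-\hat{\mu}_{k,t}$. On $\mathcal{C}$, $\hat{\mu}_{\sigma_t(N),t}-\hat{\mu}_{k,t}\ge \mu_N-\mu_k-U(\tau_t,T)=\Delta_{N,k}-U(\tau_t,T)$, so it suffices that $(\gamma+1)U(\tau_t,T)<\Delta_{N,k}$, i.e. $\tau_t \ge c\,\gamma^2\Delta_{N,k}^{-2}\xoverline{\log}(T\Delta_{N,k}^2/\gamma^2)$ for an explicit constant (tracking constants gives the $18$ and $342$). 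Each round in which arm $k$ is still alive costs at most $\mu_1-\mu_k=\Delta_{1,k}$ per play it would otherwise not be getting, and with the batching of IE (each IE round advances $\tau_t$ by one but spends $\ceil{\abs{\mathcal{S}_t}/N}$ time slots, each of per-slot regret $\le \Delta_{1,\cdot}$ summed appropriately) the cumulative cost attributable to arm $k$ is at most $\Delta_{1,k}$ times the survival time, i.e. $O(\gamma^2 \Delta_{1,k}\Delta_{N,k}^{-2}\xoverline{\log}(T\Delta_{N,k}^2/\gamma^2))=O(\gamma^2 g_{N,k}\Delta_{N,k}^{-1}\xoverline{\log}(\cdots))$ using $g_{N,k}=\Delta_{1,k}/\Delta_{N,k}$. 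Summing over $k=N+1,\dots,K$ yields the first term of Eq.(\ref{eq:mp_se_upper}); adding the $\mathcal{C}^c$ contribution gives the second.

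The main obstacle, and the place that needs care rather than cleverness, is the batching/accounting bookkeeping: because IE processes $\ceil{\abs{\mathcal{S}_t}/N}$ subsets per increment of $\tau_t$, the per-$\tau_t$ regret is not simply one $\Delta$ but a sum over the arms currently in $\mathcal{S}_t$, and one must argue that the total regret charged to a given surviving suboptimal arm $k$ across all these sub-rounds is still only $O(\Delta_{1,k})$ per unit of $\tau_t$ — essentially charging each sub-round's regret to the worst suboptimal arm present and then re-summing. This is the adaptation of the single-play SE analysis of \citet{perchet_multi-armed_2013} and is routine but must be done carefully to land the stated constants; once it is in place, the rest (union bound for $\mathcal{C}$, elimination-time inequality, failure-event bound) is straightforward.
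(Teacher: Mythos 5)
Your overall template (good event on which eliminations happen on schedule, plus a low\mbox{-}probability bad event) is the right shape, and your identification of where $g_{N,k}=\Delta_{1,k}/\Delta_{N,k}$ enters and of the batching bookkeeping are both sound. But the central probabilistic claim — that the event $\mathcal{C}=\{\forall k,\forall \tau_t\le T:\ \abs{\hat{\mu}_{k,t}-\mu_k}\le \tfrac12 U(\tau_t,T)\}$ holds with probability $1-\O(1/T)$ — is false for this particular radius. Because $U(\tau,T)=2\sqrt{2\tau^{-1}\xoverline{\log}(T/\tau)}$ carries $\log(T/\tau)$ rather than $\log(T)$, Hoeffding gives a per-$\tau$ failure probability of order $(\tau/T)^4$, which is a constant for $\tau$ comparable to $T$; a union bound over all $\tau\le T$ is vacuous. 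The correct statement (the martingale maximal inequality of \citet{perchet_multi-armed_2013}, Lemma~\ref{lma:martingale} in the paper) is that uniform concentration up to sample size $\zeta$ fails with probability $\O(\zeta/T)$ — the failure probability scales with how far in time you need the bound. Consequently your crude accounting $h\cdot T\cdot \P(\mathcal{C}^c)=\O(h)$ does not go through: with the correct failure probability $\O(\zeta_{N,N+1}/T)$ you get $\O(h\,\zeta_{N,N+1})=\O\bigl(h\gamma^2\Delta_{N,N+1}^{-2}\xoverline{\log}(\cdot)\bigr)$, a logarithmic term with the worst-case per-round regret $h$ as its coefficient, which is strictly weaker than Eq.(\ref{eq:mp_se_upper}) and in particular is not the constant second term you claim it produces.

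This is exactly why the paper's proof introduces the layered separators $\zeta_{N,k}$ (one critical elimination time per suboptimal arm) and the nested events $\mathcal{A}_k,\mathcal{B}_k,\mathcal{C}_k$ with the telescoping decomposition $\mathcal{C}_{K+1}=\bigcup_k(\mathcal{C}_{k+1}\setminus\mathcal{C}_k)$: the concentration failure at layer $k$ has probability $\O(\zeta_{N,k}/T)$, but on that layer the surviving arms are only $\{1,\dots,k\}$, so the per-round regret is charged at $g_{N,k}\Delta_{N,k}$ rather than at $h$, and each product $T\cdot\O(\zeta_{N,k}/T)\cdot g_{N,k}\Delta_{N,k}=\O(\zeta_{N,k}g_{N,k}\Delta_{N,k})$ is of the same order as the good-event cost and is absorbed into the first sum of Eq.(\ref{eq:mp_se_upper}). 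The constant second term $\tfrac{2(N-1)h}{\Delta_{N-1,N}^2}$ comes from a separate, simpler argument (Step 3a of the paper): a per-sample-size Hoeffding bound on the event that some top-$(N-1)$ arm is empirically ranked below arm $N$, summed over sample sizes — not from the global bad event. To repair your proof you would need to replace the single good event $\mathcal{C}$ by this layered construction (or change the algorithm's radius to carry $\log T$ instead of $\log(T/\tau)$, which would prove a theorem about a different algorithm).
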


The detailed algorithm of MP-SE is in Algorithm~\ref{alg:mp_se}.
\begin{proof}[Proof of Theorem~\ref{thm:mp_se_upper}]
	We divide the proof into three steps.

	\textbf{Step 1: construct \(\{\zeta_{N,k}\}\)s as critical times of eliminating suboptimal arms. }
	With our definition of $g_{i,j}$ and specifying $i=N$ and $k \ge N+1$, we have
	\begin{equation*}
		g_{N,k} = \frac{\mu_1 - \mu_k}{\mu_N - \mu_{k}}.
	\end{equation*}
	As $\mu_N > \mu_{N+1} > \ldots > \mu_K$, we have $g_{N, N+1} > g_{N, N+2} > \ldots > g_{N, K} > 1$.

	For each suboptimal arm $k\ge N+1$, we choose a \emph{fixed} IE sample size separators $\zeta_{N,k}\in \mathbb{N}_+$ such that
	\begin{equation*}
		\Delta_{N,k} \ge \frac{3}{2}\gamma U(\zeta_{N,k}, T),
	\end{equation*}
	and denote $\zeta_{N,k}^*\in \mathbb{R}_+$ such that $\Delta_{N,k} = \frac{3}{2}\gamma U(\zeta_{N,k}^*, T)$.
	Comparing the following inequality's LHS and RHS:
	\[\frac{3}{2}\gamma U\left( \frac{18\gamma^2}{\Delta_{N,k}^2} \xoverline{\log}\left( \frac{T\Delta_{N,k}^2}{18\gamma^2}\right), T \right) = \Delta_{N,k}\sqrt{\left( \xoverline{\log}\frac{T\Delta_{N,k}^2}{18\gamma^2} - \xoverline{\log}\xoverline{\log} \frac{T\Delta_{N,k}^2}{18\gamma^2}\right) \left/  \xoverline{\log}\frac{T\Delta_{N,k}^2}{18\gamma^2} \right.} \le \Delta_{N,k} = \frac{3}{2}\gamma U(\zeta_{N,k}^*, T),\]
	where \(U(\tau, T) =2\sqrt{({2}/{\tau})\xoverline{\log}\left( {T}/{\tau} \right)}\) is decreasing with respect to \(\tau\) and \(\xoverline{\log}(x)=\max\{\log x, 1\}\),
	we have $\zeta_{N,k}^* \le 18\gamma^2 / \Delta_{N,k}^2 \times \xoverline{\log}(T\Delta_{N,k}^2/18\gamma^2)$. Then, choosing $\zeta_{N,k} = \ceil{\zeta_{N,k}^*} < \zeta_{N,k}^* + 1$ yields
	\begin{equation}\label{eq:tau_condition}
		\zeta_{N,k} \le \frac{19\gamma^2}{\Delta_{N,k}^2}\xoverline{\log}\left(\frac{T\Delta_{N,k}^2}{18\gamma^2}\right).
	\end{equation}

	As $\Delta_{N,N+1} \le \Delta_{N,N+2} \le \ldots \le \Delta_{N,K}$ and the function $U(\tau_t, T)$ is decreasing to $\tau_t$, w.o.l.g.
	we have $\zeta_{N,N+1}\ge \zeta_{N,N+2}\ge \ldots \ge \zeta_{N,K}$. For convenience, denote $\Delta_{N,K+1} = 0, \zeta_{N,K+1} = 1$.

	\textbf{Step 2. decompose the elimination process to good events and bad events. }
	To analyze \textsc{Elimination}, we separate elimination's sample space into two mutually exclusive and exhausted events: good events and bad events.

	\textbf{Good Events:} each suboptimal arms $k \in \{N+1, N+2, \ldots, K\}$ are eliminated in or before $\zeta_{N,k}$.

	The good events mean that the elimination of all suboptimal arms proceeds properly.
	The cost of good events contributes to regret is at most $\sum_{k=N+1}^K \zeta_{N,k} g_{N,k} \Delta_{N,k}$,
	where $g_{N,k}\Delta_{N,k}$ is the cost of individually exploring the suboptimal arm $k$ once.

	\textbf{Bad Events:} either some suboptimal arm $k\in\{N+1, N+2,\ldots, K\}$ are \emph{not} eliminated in or before $\zeta_{N,k}$,
	or some top arms $k\in\{1,2,\ldots, N\}$ are falsely eliminated.

	\textbf{Step 3. bound the cost of bad events. }

	\textbf{Step 3a. bound the cost of underestimating the some of top \(N-1\) arms' reward means. }
	To tackle the bad events, we first rule out the possibility that some of the top $N-1$ arms are {excessively} underestimated, that is, there exists some top arms $k < N$, whose reward empirical mean estimate $\hat{\mu}_{k,t}(s)$ is less than the $N^{th}$ arm's estimate $\hat{\mu}_{N,t}(s)$ where $s$ represents the number of observations supporting the empirical mean estimator.
	The probability of such event is in fact very small, and it can be expressed as,
	\begin{equation*}
		\begin{split}
			\mathbb{P}(\{\exists k\in\{1, 2, \ldots, N-1\}: \hat{\mu}_{k,t}(s) < \hat{\mu}_{N,t}(s)\})
			\le & (N-1)\mathbb{P}(\hat{\mu}_{N-1,t}(s) < \hat{\mu}_{N,t}(s))\\
			= & (N-1)\mathbb{P}((\hat{\mu}_{N,t}(s) {-} \mu_{N}) - (\hat{\mu}_{N-1,t}(s) {-} \mu_{N-1})> \Delta_{N-1,N})\\
			\le & (N-1) e^{-\frac{s \Delta_{N-1,N}^2}{2}},
		\end{split}
	\end{equation*}
	where the last inequality is from the Hoeffding's inequality.
	Thus, the potential cost to regret is at most
	\begin{equation*}
		\begin{split}
			\sum_{k=1}^{N-1} \Delta_{k, N} \cdot \sum_{s=1}^T (N-1)e^{-\frac{s \Delta_{N-1,N}^2}{2}}
			\le (N-1)\sum_{k=1}^{N-1}\Delta_{k,N} \cdot \int_{s=0}^{\infty} e^{-\frac{s \Delta_{N-1,N}^2}{2}}ds
			\le \frac{2(N-1)\sum_{k=1}^{N-1}\Delta_{k,N}}{\Delta_{N-1,N}^2}.
		\end{split}
	\end{equation*}
	The advantage of ruling out the possibility of excessively underestimating the top $N-1$ arms
	is to make sure that the calibrated arm for elimination (i.e. the $\sigma(N)$ one) can only be arm $k\ge N$,
	so as to make the elimination conservative.

	\textbf{Step 3b. decompose the bad events. }
	Now, we are ready to tackle the bad events. We separate the bad events into sub-periods by $\{\zeta_{N,k}\}_{k\ge N+1}$, i.e. when the candidate set $\mathcal{S}_t$'s IE sample size $\tau_t$ is in $(1, \zeta_{N,k}], (\zeta_{N,k}, \zeta_{N,K-1}],\ldots, (\zeta_{N,N+2}, \zeta_{N,N+1}]$. Specifically, we define two sequences of events for $k\in\{N+1, N+2, \ldots, K\}$:
	\begin{equation*}
		\begin{split}
			\mathcal{A}_k\coloneqq& \left\{\text{all top arms in } \{1,2,\ldots, N\}
			\text{ have not been eliminated before }\zeta_{N,k}\right\}\\
			=& \left\{\text{Arm } N \text{ has not been eliminated before }\zeta_{N,k}\right\},\\
			\mathcal{B}_k\coloneqq& \left\{\text{every arm } i\text{ in }\{k, k+1,\ldots, K\}
			\text{ has been eliminated before }\zeta_{N,k}\right\},
		\end{split}
	\end{equation*}
	where event $\mathcal{A}_k$'s equivalence holds for arm $N$ would be falsely eliminated at first among all $N$ top arms.

	Next, we construct bad events based on \(\mathcal{A}_k\) and \(\mathcal{B}_k\), and bound their probabilities respectively.
	As $\zeta_{N,N+1}\ge \zeta_{N,N+2}\ge \ldots \ge \zeta_{N,K}$, we have
	\begin{equation*}
		\begin{split}
			&\mathcal{A}_{K}\supset \mathcal{A}_{K-1} \supset \dots \supset \mathcal{A}_{N+2}\supset \mathcal{A}_{N+1},\\
			&\mathcal{B}_{K}\supset \mathcal{B}_{K-1} \supset \dots \supset \mathcal{B}_{N+2}\supset \mathcal{B}_{N+1}.\\
		\end{split}
	\end{equation*}
	Let
	$\mathcal{C}_k = \mathcal{A}_k \cap\mathcal{B}_k$
	and denote the whole bad events as
	$\mathcal{C}_{K+1}$. Then we can divide $\mathcal{C}_{K+1}$
	as
	$(\mathcal{C}_{K+1}\setminus\mathcal{C}_{K})\cup (\mathcal{C}_K\setminus\mathcal{C}_{K-1})\cup \ldots \cup (\mathcal{C}_{N+2}\setminus\mathcal{C}_{N+1})\cup\mathcal{C}_{N+1}$.
	Notice that the cost contributing to regret after $\zeta_{N,k}$ on $\mathcal{C}_{k}$ is at most $Tg_{N,k-1}\Delta_{N,k-1}$.
	Thus, the total cost contribute to regret from the bad event is
	\begin{equation*}
		T\sum_{k=N+1}^K g_{N,k}\Delta_{N,k}\mathbb{P}(\mathcal{C}_{k+1}\setminus\mathcal{C}_k).
	\end{equation*}

	Applying the relations between events $\mathcal{A}_k, \mathcal{B}_k, \mathcal{C}_k$, we have
	\begin{equation*}
		\begin{split}
			\mathcal{C}_{k+1}\setminus\mathcal{C}_k \Leftrightarrow
			\left(\left(\mathcal{A}_{k+1}\setminus\mathcal{A}_k\right)\cap \mathcal{B}_{k+1}\right)
			\cup
			\left(\left(\mathcal{B}_{k+1}\setminus \mathcal{B}_k\right) \cap \mathcal{A}_{k+1}\right),
		\end{split}
	\end{equation*}
	which leads to
	\begin{equation}\label{eq:split_C}
		\begin{split}
			&\quad \sum_{k=N+1}^K g_{N,k}\Delta_{N,k}\mathbb{P}(\mathcal{C}_{k+1}\setminus\mathcal{C}_k)
			\\
			& \le   \sum_{k=N+1}^K g_{N,k}\Delta_{N,k}\mathbb{P}\left(\left(\mathcal{A}_{k+1}\setminus\mathcal{A}_k\right)\cap \mathcal{B}_{k+1}\right)
			+ \sum_{k=N+1}^K g_{N,k}\Delta_{N,k}\mathbb{P}\left(\left(\mathcal{B}_{k+1}\setminus \mathcal{B}_k\right) \cap \mathcal{A}_{k+1}\right).
		\end{split}
	\end{equation}
	Note that $\left(\mathcal{B}_{k+1}\setminus \mathcal{B}_k\right) \cap \mathcal{A}_{k+1}$ and $\left(\mathcal{A}_{k+1}\setminus\mathcal{A}_k\right)\cap \mathcal{B}_{k+1}$ are the bad events. We will bound their probabilities respectively.

	\textbf{Step 3c. bound the second term of Eq.(\ref{eq:split_C})'s RHS. }
	Notice that the event $\left(\mathcal{B}_{k+1}\setminus \mathcal{B}_k\right) \cap \mathcal{A}_{k+1}$ implies that arm $k$ is not eliminated in or before $\zeta_{N,k}$ while all top arms are in the candidate arm set $\mathcal{S}_t$. Thus, we have
	\begin{equation*}
		\begin{split}
			&\quad\mathbb{P}\left(\left(\mathcal{B}_{k+1}\setminus \mathcal{B}_k\right) \cap \mathcal{A}_{k+1}\right)\\
			&\le   \mathbb{P}(\hat{\mu}_{k,t}(\zeta_{N,k}) > \hat{\mu}_{N,t}(\zeta_{N,k}) - \gamma U(\zeta_{N,k}, T)) \\
			&\le \mathbb{P}\left((\hat{\mu}_{k,t}(\zeta_{N,k})-\mu_k) - (\hat{\mu}_{N,t}(\zeta_{N,k}) - \mu_{N}) > \frac{1}{2}\gamma U(\zeta_{N,k}, T)\right) \\
			&\le \frac{\zeta_{N,k}}{T},
		\end{split}
	\end{equation*}
	where the second equation is from $\Delta_{N,k} \ge \frac{3}{2}\gamma U((\zeta_{N,k}), T)$ and the third is from Hoeffding's inequality and $U((\zeta_{N,k}), T)$'s formula.
	Then, the second term of Eq.(\ref{eq:split_C})'s RHS is upper bounded as follows
	\begin{equation*}
		\begin{split}
			\sum_{k=N+1}^K g_{N,k}\Delta_{N,k}\mathbb{P}\left(\left(\mathcal{B}_{k+1}\setminus \mathcal{B}_k\right) \cap \mathcal{A}_{k+1}\right)
			\le  \frac{1}{T}\sum_{k=N+1}^K \zeta_{N,k} g_{N,k}\Delta_{N,k}.
		\end{split}
	\end{equation*}

	\textbf{Step 3d. bound the first term of Eq.(\ref{eq:split_C})'s RHS. }
	Event $\left(\mathcal{A}_{k+1}\setminus\mathcal{A}_k\right)\cap \mathcal{B}_{k+1}$ implies that some top arms in $\{1,2,\ldots, N\}$ are falsely eliminated between $\zeta_{N,k+1}+1$ and $\zeta_{N,k}$ while suboptimal arms $\{k+1, k+2, \ldots, K\}$ are all properly eliminated.
	\begin{equation*}
		\begin{split}
			& \mathbb{P}\left(\left(\mathcal{A}_{k+1}\setminus\mathcal{A}_k\right)\cap \mathcal{B}_{k+1}\right) \\
			\le & \mathbb{P}(\exists (j,s), j\in\{N+1, N+2, \ldots, k\}, \zeta_{N,k+1}+1 \le s \le \zeta_{N,k}: \hat{\mu}_{N,t}(s) < \hat{\mu}_{j,t}(s) - \gamma U(s, T))\\
			\le & \sum_{j=N+1}^k \mathbb{P}(\exists \zeta_{N,k+1}+1\le s \le \zeta_{N,k}: \hat{\mu}_{i,t}(s) < \hat{\mu}_{j,t}(s) - \gamma U(s, T))\\
			\le &\sum_{j=N+1}^k \mathbb{P}(\exists \zeta_{N,k+1}+1\le s \le \zeta_{N,k}: (\hat{\mu}_{j,t}(s) - \mu_j) - (\hat{\mu}_{i,t}(s) -\mu_i) \ge \gamma U(s, T))\\
			= & \sum_{j=N+1}^k\left(\Phi(\zeta_{N,k}) - \Phi(\zeta_{N,k+1})\right),
		\end{split}
	\end{equation*}
	where we denote $\Phi(\zeta) \coloneqq \mathbb{P}(\exists s \le \zeta: (\hat{\mu}_{j,t}(s) - \mu_j) - (\hat{\mu}_{i,t}(s) -\mu_i) \ge \gamma U(s, T))$ for any $1\le i \le N < j \le k$. Next, we apply the following Lemma~\ref{lma:martingale} to bound the function $\Phi(\zeta)$.

	\begin{lemma}[{\citep[Lemma A.1]{perchet_multi-armed_2013}}]\label{lma:martingale}
		Let $Z_t$ be a martingale difference sequence with $a\le Z_t \le b$, then for every $S>0$ and every integer $T\ge 1$, \begin{equation*}
			\mathbb{P}\left(\exists t\le T: \frac{1}{t}\sum_{i=1}^t Z_i \ge \sqrt{\frac{2(b-a)^2}{t}\log \left(\frac{4}{\delta}\frac{T}{t}\right)}\right)\le \delta.
		\end{equation*}
	\end{lemma}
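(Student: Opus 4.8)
The plan is to derive this maximal concentration inequality by a \emph{dyadic peeling} argument fed with a maximal version of the Azuma--Hoeffding inequality. Write $S_t \coloneqq \sum_{i=1}^t Z_i$ and $\psi(t)\coloneqq\sqrt{2(b-a)^2\, t\log(4T/(\delta t))}$; multiplying the inner inequality through by $t$ shows that the event to be controlled is exactly $\bigcup_{t=1}^T\{S_t\ge\psi(t)\}$. (I note in passing that the ``$S>0$'' in the statement is a typo for ``$\delta>0$''.)

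First I would establish the \emph{maximal} Azuma--Hoeffding bound: for every integer $n\ge 1$ and every $x>0$, $\P(\exists\, t\le n:\ S_t\ge x)\le \exp\!\big(-2x^2/((b-a)^2 n)\big)$. This is the standard exponential-supermartingale argument: for fixed $\lambda>0$ the process $N_t\coloneqq\exp(\lambda S_t-\lambda^2(b-a)^2 t/8)$ is a non-negative supermartingale with $\E[N_0]=1$ (Hoeffding's lemma applied conditionally to each $Z_i$, using $a\le Z_i\le b$ and $\E[Z_i\mid\mathcal{F}_{i-1}]=0$), and since its multiplicative correction is nondecreasing in $t$, the event $\{\exists t\le n:S_t\ge x\}$ is contained in $\{\max_{t\le n} N_t\ge e^{\lambda x-\lambda^2(b-a)^2 n/8}\}$; Doob's maximal inequality for non-negative supermartingales then bounds its probability by $e^{-\lambda x+\lambda^2(b-a)^2 n/8}$, and taking $\lambda=4x/((b-a)^2 n)$ gives the stated form.

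Next comes the peeling. I would split $\{1,\dots,T\}$ into the dyadic blocks $I_j\coloneqq\{t:\ 2^{j-1}\le t<2^j\}$ for $j=1,\dots,J$ with $J\coloneqq\lfloor\log_2 T\rfloor+1$, so that $\bigcup_j I_j\supseteq\{1,\dots,T\}$ and $2^J\le 2T$. On each $I_j$ the threshold is at least its left-endpoint value: $t\mapsto t\log(4T/(\delta t))$ is nondecreasing (its derivative $\log(4T/(\delta t))-1$ is nonnegative as long as $t\le 4T/(e\delta)$, which covers $[1,T]$ for the small $\delta$ relevant here; a slight non-monotonicity near $t=T$ would only cost a harmless constant), hence $\psi(t)\ge\psi(2^{j-1})\eqqcolon c_j$ for all $t\in I_j$ and $\{\exists t\in I_j:S_t\ge\psi(t)\}\subseteq\{\exists t\le 2^j:S_t\ge c_j\}$. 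Applying the maximal bound with $n=2^j$, $x=c_j$ and using $c_j^2=2(b-a)^2 2^{j-1}\log(4T/(\delta 2^{j-1}))$, the exponent collapses to exactly $2\log(4T/(\delta 2^{j-1}))$, which yields $\P(\exists t\in I_j:S_t\ge\psi(t))\le(\delta 2^{j-1}/(4T))^2$.

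Finally a union bound over $j$ and a geometric sum finish it: $\sum_{j=1}^{J}(\delta 2^{j-1}/(4T))^2=\frac{\delta^2}{16T^2}\cdot\frac{4^J-1}{3}\le\frac{\delta^2}{16T^2}\cdot\frac{(2T)^2}{3}=\frac{\delta^2}{12}\le\delta$, the last step using $\delta\le 12$ (always true in the regime of interest). The only genuinely fiddly part is the bookkeeping in the peeling step --- lining up the slice endpoints, tracking the factor-$2$ slack in $2^J\le 2T$, and keeping the constants inside $\psi$ so that the per-block failure probabilities sum back to $\delta$; everything else is the routine exponential-supermartingale / Doob maximal-inequality machinery together with Hoeffding's lemma.
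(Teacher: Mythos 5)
Your proof is correct: the maximal Azuma--Hoeffding bound via the exponential supermartingale and Doob's inequality is sound, the dyadic peeling aligns the thresholds properly, and the constants ($c_j^2$ collapsing the exponent to $2\log(4T/(\delta 2^{j-1}))$, the geometric sum bounded by $\delta^2/12$) all check out. Note, however, that the paper does not prove this statement at all --- it is imported verbatim as Lemma A.1 of \citet{perchet_multi-armed_2013} and used as a black box; your peeling-plus-maximal-inequality argument is essentially the standard derivation given in that reference, so you have reconstructed the cited proof rather than diverged from anything in this paper.
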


	Apply the formula replacement $t \gets s, T\gets \zeta, (b-a) \gets 2, \delta \gets 4\zeta_{N,k}/T$ in Lemma~\ref{lma:martingale}, we have $\Phi(\zeta) \le 4\zeta/T$ and thus
	\begin{equation*}
		\mathbb{P}\left(\left(\mathcal{A}_{k+1}\setminus\mathcal{A}_k\right)\cap \mathcal{B}_{k+1}\right)
		\le \frac{4}{T}\sum_{j=N+1}^k (\zeta_{N,k} - \zeta_{N,k+1}).
	\end{equation*}
	Then, the first term of Eq.(\ref{eq:split_C})'s RHS is bounded as follows
	\begin{equation*}
		\begin{split}
			&\quad\sum_{k=N+1}^K g_{N,k}\Delta_{N,k}\mathbb{P}\left(\left(\mathcal{A}_{k+1}\setminus\mathcal{A}_k\right)\cap \mathcal{B}_{k+1}\right)\\
			&\le \frac{4}{T}\sum_{k=N+1}^{K}\sum_{j=N+1}^k g_{N,k}\Delta_{N,k}(\zeta_{N,k} - \zeta_{N,k+1}) \\
			& = \frac{4}{T} \Bigg( \sum_{j=N+1}^{K}\sum_{k=j}^K \zeta_{N,k+1}(g_{N,k+1}\Delta_{N,k+1}-g_{N,k}\Delta_{N,k}) +     \sum_{j=N+1}^{K} \zeta_{N,j} g_{N,j}\Delta_{N,j}\Bigg)\\
			&=  \frac{4}{T} \left( \sum_{j=N+1}^{K}\sum_{k=j}^K \zeta_{N,k+1}(\Delta_{N,k+1}-\Delta_{N,k}) +
			\sum_{j=N+1}^{K} \zeta_{N,j} g_{N,j}\Delta_{N,j}\right).
		\end{split}
	\end{equation*}

	Summing up all above individual contributions to the expected regret, we have
	\begin{equation}\label{eq:elimination_upper_bound}
		\begin{split}
			\ERT \le 4 \sum_{j=N+1}^{K}\sum_{k=j}^K \zeta_{N,k+1}(\Delta_{N,k+1}-\Delta_{N,k}) + 6\sum_{k=N+1}^K \zeta_{N,k} g_{N,k}\Delta_{N,k} + \frac{2(N-1)\sum_{k=1}^{N-1}\Delta_{k,N}}{\Delta_{N-1,N}^2}.
		\end{split}
	\end{equation}

	Then, we substitute Eq.(\ref{eq:tau_condition}) into the Eq.(\ref{eq:elimination_upper_bound})'s first term inner summation $\sum_{k=j}^K \zeta_{N,k+1}(\Delta_{N,k+1}-\Delta_{N,k})$ and get
	\begin{equation*}
		\begin{split}
			\sum_{k=j}^K \zeta_{N,k+1}(\Delta_{N,k+1}-\Delta_{N,k})
			\le&  \sum_{k=j}^K \frac{19\gamma^2}{\Delta_{N,k+1}^2}\xoverline{\log}\left(\frac{T\Delta_{N,k+1}^2}{18\gamma^2}\right)(\Delta_{N,k+1}-\Delta_{N,k})\\
			= & 19\gamma^2 \sum_{k=j}^K \xoverline{\log}\left(\frac{T\Delta_{N,k+1}^2}{18\gamma^2}\right)\frac{\Delta_{N,k+1}-\Delta_{N,k}}{\Delta_{N,k+1}^2}\\
			\le & 19\gamma^2 \int_{\Delta_{N,j}}^{\Delta_{N,k}} \xoverline{\log}\left(\frac{Tx^2}{18\gamma^2}\right)\frac{1}{x^2}dx\\
			\le & \frac{19\gamma^2}{\Delta_{N,j}}\left(\xoverline{\log}\left(\frac{T\Delta_{N,j}^2}{18\gamma^2}\right)+2\right),
		\end{split}
	\end{equation*}
	and then substitute Eq.(\ref{eq:tau_condition}) into the Eq.(\ref{eq:elimination_upper_bound})'s second term as follows
	\begin{equation*}
		6\sum_{k=N+1}^K \zeta_{N,k} g_{N,k}\Delta_{N,k} \le
		\sum_{k=N+1}^K \frac{114\gamma^2 g_{k}}{\Delta_{N,k}}\xoverline{\log}\left(\frac{T\Delta_{N,k}^2}{18\gamma^2}\right).
	\end{equation*}

	Then, $\ERT$ is upper bounded as
	\begin{equation*}
		\begin{split}
			\ERT
			\le & \sum_{j=N+1}^{K} \frac{76\gamma^2}{\Delta_{N,j}}\left(\xoverline{\log}\left(\frac{T\Delta_{N,j}^2}{18\gamma^2}\right)+2\right) + \frac{2(N-1)\sum_{k=1}^{N-1}\Delta_{k,N}}{\Delta_{N-1,N}^2} + \sum_{k=N+1}^K \frac{114\gamma^2 g_{N,k}}{\Delta_{N,k}}\xoverline{\log}\left(\frac{T\Delta_{N,k}^2}{18\gamma^2}\right) \\
			\le& \sum_{k=N+1}^K \frac{342\gamma^2 g_{N,k}}{\Delta_{N,k}}\xoverline{\log}\left(\frac{T\Delta_{N,k}^2}{18\gamma^2}\right) + \frac{2(N-1)\sum_{k=1}^{N-1}\Delta_{k,N}}{\Delta_{N-1,N}^2}\\
			\le& \sum_{k=N+1}^K \frac{342\gamma^2 g_{N,k}}{\Delta_{N,k}}\xoverline{\log}\left(\frac{T\Delta_{N,k}^2}{18\gamma^2}\right) + \frac{2(N-1)h}{\Delta_{N-1,N}^2}.
		\end{split}
	\end{equation*}
\end{proof}

\subsubsection{\texttt{MP-SE-SA}-KC's Regret Upper Bound}
With known capacity (KC), one still needs to estimate $\tilde{L}_t$
as per capacity reward means are unknown.
\texttt{MP-SE-SA}-KC is obtained by replacing ${\bm{m}}_{k,t}^l,{\bm{m}}_{k,t}^u$
with exact ${\bm{m}}$ for updating $\tilde{L}_t$
(see Line~\ref{alg:line:kc_update_L} in Algorithm~\ref{alg:mp_se_sa_kc}).


\begin{theorem}\label{thm:mp_se_sa_known_upper}
	With known capacity $m_k \ge 1$ in Algorithm~\ref{alg:mp_se_sa_kc},
	the \texttt{MP-SE-SA}-KC has the regret upper bound,
	\begin{equation}\label{eq:mp_se_kas_upper}
		\begin{split}
			\ERT
			\le \sum_{k=L+1}^K \frac{342\gamma^2 m_k g_{L,k}}{\Delta_{L,k}}\xoverline{\log}\left(\frac{T\Delta_{L,k}^2}{18\gamma^2}\right)
			+  \frac{4(L-1)h}{\Delta_{L-1,L}^2},
		\end{split}
	\end{equation}
	where $L$ is the smallest number of top arms that can cover all $N$ plays in Eq.(\ref{eq:critical_top_arms}), $h$ is the highest instantaneous regret per time slot.
\end{theorem}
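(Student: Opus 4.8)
The plan is to run essentially the same argument as the proof of Theorem~\ref{thm:mp_se_upper} (MP-SE's bound), with the \emph{fixed} constant $N$ there replaced by the random reserved-set size $\tilde L_t$, which in the known-capacity (KC) version of the algorithm is computed from the true $\bm m$ via \eqref{eq:update_L_with_lower_bound} and will be shown to equal the true optimal-set size $L$ outside a rare event. \textbf{Step 1 (critical elimination times).} For each suboptimal arm $k\in\{L+1,\dots,K\}$ I would pick an integer IE-sample threshold $\zeta_{L,k}$ with $\Delta_{L,k}\ge \tfrac{3}{2}\gamma U(\zeta_{L,k},T)$; solving for $\zeta_{L,k}$ exactly as in Step~1 of Theorem~\ref{thm:mp_se_upper}'s proof (using that $U(\cdot,T)$ is decreasing and $\xoverline{\log}$ is as defined there) yields $\zeta_{L,k}\le \tfrac{19\gamma^2}{\Delta_{L,k}^2}\xoverline{\log}\!\big(\tfrac{T\Delta_{L,k}^2}{18\gamma^2}\big)$, and the thresholds are ordered $\zeta_{L,L+1}\ge\cdots\ge\zeta_{L,K}$ because $\Delta_{L,k}$ is increasing in $k$.

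\textbf{Step 2 ($\tilde L_t=L$ outside a rare event).} The new ingredient relative to MP-SE is that the arm the algorithm calibrates elimination against, $\sigma_t(\tilde L_t)$, is not fixed: $\tilde L_t$ is read off the empirical order $\sigma_t(\cdot)$. I would first observe that once $\bm m$ is known, $\tilde L_t=L$ holds \emph{whenever} (a) arm $L$ is ranked below all of $1,\dots,L-1$ inside $\mathcal{J}_t$ and (b) no suboptimal arm is ranked above arm $L$ — because then the first $L-1$ entries of $\sigma_t$ are exactly $\{1,\dots,L-1\}$ (in any order) and, by the definition of $L$, $\sum_{k<L}m_k<N\le\sum_{k\le L}m_k$. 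Event (b) is ensured, for arm $k$ after step $\zeta_{L,k}$, by the Step~1 construction together with a Hoeffding bound of probability $\le \zeta_{L,k}/T$ (this feeds into the bad events of Step~3). Event (a) I would guarantee by ruling out, via Hoeffding over the number $s$ of IE samples, that any $\hat\mu_{k,t}(s)<\hat\mu_{L,t}(s)$ for $k<L$; the cost charged here is $\le \tfrac{2(L-1)\sum_{k<L}\Delta_{k,L}}{\Delta_{L-1,L}^2}\le \tfrac{4(L-1)h}{\Delta_{L-1,L}^2}$ — the second term of the claimed bound, with the factor $4$ (rather than $2$) absorbing the extra ranking control needed to pin down $\tilde L_t$.

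\textbf{Step 3 (elimination analysis, conditioned on $\tilde L_t=L$).} On this event the elimination of the suboptimal arms $k>L$ against arm $L$ is structurally identical to MP-SE eliminating $k>N$ against arm $N$, only with gaps $\Delta_{L,k}$ in place of $\Delta_{N,k}$. I would reuse Steps~2 and 3b–3d of Theorem~\ref{thm:mp_se_upper}'s proof verbatim: split into ``good events'' (each suboptimal $k$ eliminated by $\zeta_{L,k}$) and ``bad events''; bound $\mathbb{P}((\mathcal{B}_{k+1}\setminus\mathcal{B}_k)\cap\mathcal{A}_{k+1})\le \zeta_{L,k}/T$ by Hoeffding and $\mathbb{P}((\mathcal{A}_{k+1}\setminus\mathcal{A}_k)\cap\mathcal{B}_{k+1})$ by Lemma~\ref{lma:martingale}; and telescope. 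The only change in the arithmetic is the per-round charge: in MP-SE one IE round with suboptimal arm $k$ present costs $\Delta_{1,k}=g_{N,k}\Delta_{N,k}$, whereas in the capacitated setting such a round can commit up to $m_k$ plays to arm $k$ (displacing up to $m_k$ plays from optimal arms), so the charge becomes $m_k\Delta_{1,k}=m_k g_{L,k}\Delta_{L,k}$. Plugging the Step~1 bound on $\zeta_{L,k}$ and re-running the same integral estimate $\sum_k \xoverline{\log}(\cdot)(\Delta_{L,k+1}-\Delta_{L,k})/\Delta_{L,k+1}^2\le \tfrac{1}{\Delta_{L,j}}\big(\xoverline{\log}(\cdot)+2\big)$ then produces the first term $\sum_{k=L+1}^K \tfrac{342\gamma^2 m_k g_{L,k}}{\Delta_{L,k}}\xoverline{\log}\!\big(\tfrac{T\Delta_{L,k}^2}{18\gamma^2}\big)$, completing the bound of Theorem~\ref{thm:mp_se_sa_known_upper}.

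The main obstacle is Step~2: unlike vanilla successive elimination, the threshold arm $\sigma_t(\tilde L_t)$ depends on the current survivors and their empirical order, so a single transiently over-estimated suboptimal arm with large capacity could shrink $\tilde L_t$ and trigger the elimination of genuinely optimal arms; confining all such over/under-estimations to $O(\log T)$ (resp.\ $O(1)$) rounds — so that the elimination truly calibrates against arm $L$ — is the crux. In the KC case the absence of united exploration makes $\tilde L_t$ depend only on the ranking, which keeps the bookkeeping lighter than in the full \texttt{MP-SE-SA} proof (where the ``virtual rearrangement'' of IE/UE rounds is additionally needed), but the dependence of $\tilde L_t$ on the random empirical order is still the delicate point to control.
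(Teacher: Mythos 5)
Your proposal is correct and follows essentially the same route as the paper: rerun the MP-SE elimination analysis with $N$ replaced by $L$ and the per-round charge inflated by the capacity factor $m_k$ (giving the first term), and separately control, via Hoeffding, the event that arm $L$ is misranked against arms $1,\dots,L-1$ (equivalently, that $\tilde L_t\neq L$ or that arm $L$ is not identified as the partially-utilized least-favored arm), absorbing that cost into $\tfrac{4(L-1)h}{\Delta_{L-1,L}^2}$. The only cosmetic difference is bookkeeping: the paper weights the misranking rounds by $m_k\Delta_{k,L}$ and adds an explicit arm-$L$ differentiation term before bounding by $h$, whereas you fold both into the factor $4$ and the definition of $h$ directly, which yields the same final bound.
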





Notice that in Theorem~\ref{thm:mp_se_upper} for MP-SE and Theorem~\ref{thm:mp_se_sa_known_upper} for \texttt{MP-SE-SA}-KC, each arm's reward capacity $m_k$ is known. Thus, exploration rounds of MP-SE and \texttt{MP-SE-SA}-KC only involve individual exploration (IE).
Therefore, united exploration (UE) is only required in the \texttt{MP-SE-SA} without knowing the value of the reward capacity (Theorem~\ref{thm:main_regret}).
The detailed algorithm of \texttt{MP-SE-SA}-KC is in Algorithm~\ref{alg:mp_se_sa_kc}.

\begin{algorithm}[htb]
	\caption{\texttt{MP-SE-SA}-KC}
	\label{alg:mp_se_sa_kc}
	{\bfseries Input:} Arm set $[K]$, plays $N$, time horizon $T$, sharing capacity $\bm{m}$ and parameters $\gamma\in [1,\infty)$.\\
	{\bfseries Initial:} $t,\tau_t\gets 1,\, \mathcal{S}_t\gets [K],\, \hat{\bm{\mu}}_t\gets\bm{0}\in \mathbb{R}^K,\,  \tilde{L}_t\gets N.$
	\begin{algorithmic}[1]
		\WHILE{$t\le T$}
		\STATE{Sort $\{\hat{\mu}_{k,t}, k \in \mathcal{S}_t\}$ via a mapping $\sigma$, such that $\hat{\mu}_{\sigma_t(k),t}$ is the $k$th largest among them.}
		\STATE{$\tilde{L}_t\gets \argmin_n\left\{n:\sum_{k=1}^n {m}_{\sigma_t(k),t}\ge N\right\}.$}\label{alg:line:kc_update_L}

		\IF{$\tilde{L}_t < \abs{\mathcal{S}_t}$}
		\STATE{\textsc{Elimination}($\mathcal{S}_t, \hat{\bm{\mu}}_t, \sigma_t(\cdot),\gamma,T$).}
		\STATE{\scshape Individual Exploration($\mathcal{S}_t, \hat{\bm{\mu}}_t, \tau_t, t$).}
		\ELSIF{$\tilde{L}_t = \abs{\mathcal{S}_t}$}
		\STATE{\scshape Exploitation($\mathcal{S}_t,{\bm{m}}_t^l,\hat{\bm{\mu}}_t,\tilde{L}_t,\sigma_t(\cdot),\tau_t, t$).}
		\ENDIF
		\ENDWHILE
	\end{algorithmic}
\end{algorithm}

\begin{proof}[Proof of Theorem~\ref{thm:mp_se_sa_known_upper}]
	The elimination part of \texttt{MP-SE-SA}-KC is different from MP-SE in two aspects,
	\begin{enumerate}
		\item \texttt{MP-SE-SA}-KC only keeps top $L$ arms, so all $N$ symbols in MP-SE should be replaced with $L$.
		\item The cost contributing to regret after $\zeta_{L,k+1}$ on the event $\mathcal{C}_{k+1}$ is now $m_k$ time the cost of MP-SE, i.e., $m_{k}\cdot g_{L,k}\Delta_{L,k}T$.
	\end{enumerate}
	Thus, the cost of elimination is
	\begin{equation*}
		\sum_{k=L+1}^K \frac{342\gamma^2 m_k g_{L,k}}{\Delta_{L,k}}\xoverline{\log}\left(\frac{T\Delta_{L,k}^2}{18\gamma^2}\right) + \frac{2(L-1)\sum_{k=1}^{L-1}m_k\Delta_{k,L}}{\Delta_{L-1,L}^2}.
	\end{equation*}
	where $m_k$ is the additional factor in the first term, which corresponds to the second different aspect.

	Notice that when top $L$ arms' total reward capacities $\sum_{k\le L} m_k$ is \emph{strict} greater than $N$ and $\Delta_{L-1,L} > 0$, the number of plays assigned to arm $L$ in the optimal action is less then $m_L$ (i.e., not fully utilize the $L^{\text{th}}$ arm's capacity). Thus, we need to differentiate the $L^{\text{th}}$ arm. Or otherwise, the failure of not fully utilizing the other top arms would introduce additional costs.
	For any fixed sample size $s$, the failure probability is
	\begin{equation*}
		\begin{split}
			\mathbb{P}(\{\exists k\in\{1, 2, \ldots, L-1\}: \hat{\mu}_{k,t}(s) < \hat{\mu}_{L,t}(s)\})
			\le & (L-1)\mathbb{P}(\hat{\mu}_{L-1,t}(s) < \hat{\mu}_{L,t}(s)\}) \\
			= & (L-1)\mathbb{P}((\hat{\mu}_{L,t} - \mu_{L}) - (\hat{\mu}_{L-1,t}-\mu_{L-1})> \Delta_{L-1,L})\\
			\le & (L-1)e^{-\frac{\tau \Delta_{L-1,L}^2}{2}}.
		\end{split}
	\end{equation*}
	Then, the total cost of such event is at most
	\begin{equation*}
		\begin{split}
			\sum_{\tau = 1}^T (L-1)e^{-\frac{\tau \Delta_{L,L-1}^2}{2}}\cdot m_{L-1}\Delta_{L-1, L}
			\le & (L-1)m_{L-1}\Delta_{L-1, L}\int_{\tau = 1}^{\infty} e^{-\frac{\tau \Delta_{L,L-1}^2}{2}}d\tau \\
			\le &(L-1)m_{L-1}\Delta_{L-1, L} \cdot \frac{2}{\Delta_{L-1, L}^2}\\
			= & \frac{2(L-1)m_{L-1}}{\Delta_{L-1,L}}.
		\end{split}
	\end{equation*}

	Thus the regret of \texttt{MP-SE-SA}-KC is upper bounded as
	\begin{equation*}
		\begin{split}
			\ERT
			\le & \sum_{k=L+1}^K \frac{342\gamma^2 m_k g_{L,k}}{\Delta_{L,k}}\xoverline{\log}\left(\frac{T\Delta_{L,k}^2}{18\gamma^2}\right) + \frac{2(L-1)\left(\sum_{k=1}^{L-1}m_k\Delta_{k,L} + m_{L-1}\Delta_{L-1,L}\right)}{\Delta_{L-1,L}^2}.
			\\
			\le & \sum_{k=L+1}^K \frac{342\gamma^2 m_k g_{L,k}}{\Delta_{L,k}}\xoverline{\log}\left(\frac{T\Delta_{L,k}^2}{18\gamma^2}\right)
			+  \frac{4(L-1)h}{\Delta_{L-1,L}^2}.
		\end{split}
	\end{equation*}

\end{proof}

\subsection{\texttt{MP-SE-SA} Regret Upper Bound}\label{appsub:mp_se_usa_upper_bound}
As Algorithm~\ref{alg:main} shows, in \texttt{MP-SE-SA}, the elimination of a suboptimal arm not only relies on the elimination criterion but also the over elimination avoidance criterion, i.e., $\tilde{L}_t \le \abs{\mathcal{S}_t}$.
Thus, one critical caveat in analyzing the algorithm is that even when we are able to discern a suboptimal arm via the elimination condition, we may not be able to execute the elimination.
Because the over elimination avoidance criterion prevents this to occur, i.e., the estimate of expected candidate set size $\tilde{L}_t$ may be inaccurate, i.e., $\tilde{L}_t = \abs{\mathcal{S}_t} > L$.
This observation implies that the proof plot in Theorem~\ref{thm:mp_se_upper} should be further refined in Theorem~\ref{thm:main_regret}.
\begin{proof}[Proof of Theorem~\ref{thm:main_regret}]
	We first assume that all suboptimal arm's eliminations happen smoothly, that is, whenever we can discern a suboptimal arm via the elimination condition, we can eliminate it and the over elimination avoidance criterion does not prevent us, i.e., $\tilde{L}_t <\abs{\mathcal{S}_t}$.

	The condition $T > \xi\max_{k\in [N]}\exp({1/(64m_k^2\mu_k^2)})$ corresponds to sample complexity's maximal operation in Corollary~\ref{cor:m_sample_complexity}, that is, \(\frac{49m_k^2}{\mu_k^2}\log\frac{T}{\xi} > \frac{1}{4\mu_k^4}\) for all arms $k\le N$.

	Then, the whole learning procedure is the same as \texttt{MP-SE-SA}-KC, except that we need to assign some time slots to perform UE for estimating reward capacity (specifically, those $\hat{\nu}_{k,t}$).  Notice that the number of UE rounds $\iota_t$ is less than the number of IE rounds $\tau_t$ (including the exploitation rounds). From Corollary~\ref{cor:m_sample_complexity}'s sample complexity result, for each arm, $\frac{49m_k^2}{\mu_k^2}\log\frac{T}{\xi}$ rounds of UE and IE would provide an accurate estimate of reward capacity with probability of at least $1-2\xi/T$.

	Thus, the additional cost under this assumption is at most    \begin{equation*}
		\sum_{k=1}^N  w_k \frac{49m_k^2}{\mu_k^2}\log\frac{T}{\xi}  + \frac{2K\xi}{T}\cdot hT \le
		\sum_{k=1}^N \frac{49m_k^2 w_k}{\mu_k^2}\log\frac{T}{\xi} + 2\xi Kh,
	\end{equation*}
	where $w_k \coloneqq  f(\bm{a}^*) - m_k\mu_k + \mu_1$ stands for the highest compound cost of applying IE and UE for an arm $k \le N$.

	Next, we relax the assumption that all eliminations happen smoothly. In that case, when the estimate of reward mean is accurate enough for eliminating some suboptimal arms, the over elimination avoidance criterion may put off the elimination until the expected candidate set $\tilde{L}_t$ is less than $\abs{\mathcal{S}_t}$, i.e., $\tilde{L}_t < \abs{\mathcal{S}_t}$.

	The additional periods caused by the elimination's impediment is for accumulating IE and UE observations to improve the estimate accuracy of reward capacity. Notice that Corollary~\ref{cor:m_sample_complexity} shows that at most $\frac{49m_k^2}{\mu_k^2}\log\frac{2T}{\xi}$ rounds of UE and IE would provide a good estimate of reward capacity. Thus, the total cost of such put-offs is still less than $\sum_{k=1}^N \frac{49m_k^2 h_k}{\mu_k^2}\log\frac{T}{\xi} + 2\xi Kh_k$.

	To make the separators proof technique of Theorem~\ref{thm:mp_se_upper} applicable, we consider a \textit{virtual rearrangement} of those additional time slots caused by those delayed elimination.
	That is, we virtually replace them to the start of Algorithm~\ref{alg:main} to accumulate observations in advance.
	After those rearrangement explorations (say totally $Y$ time slots), all elimination can proceed smoothly. The only difference from its known capacity counterpart (\texttt{MP-SE-SA}-KC) is that these time indexes now become $Y+t$. The corresponding regret after those rearrangement rounds is upper bounded as Theorem~\ref{thm:mp_se_sa_known_upper}'s Eq.(\ref{eq:mp_se_kas_upper}).


	Finally, summing up the $Y$ time slots of shifted explorations and the remaining rounds
	concludes the regret upper bound as follows.
	\begin{equation*}
		\begin{split}
			\ERT \le \sum_{k=L+1}^K \frac{342\gamma^2 m_k g_{L,k}}{\Delta_{L,k}}\xoverline{\log}\left(\frac{T\Delta_{L,k}^2}{18\gamma^2}\right)+ \sum_{k=1}^N \frac{49m_k^2 w_k}{\mu_k^2}\log\frac{T}{\xi} + 2\xi Kh
			+ \frac{4(L-1)h}{\Delta_{L-1,L}^2}.
		\end{split}
	\end{equation*}
\end{proof}

\section{\texttt{ETC-UCB} Algorithm and Its Regret Upper Bound}\label{app:etc_ucb}

\subsection{\texttt{ETC-UCB} Algorithm}\label{appsub:etc_ucb_alg}
We present the \texttt{ETC-UCB} algorithm in Algorithm~\ref{alg:etc_ucb}.
Its procedures are presented in Algorithm~\ref{alg:procedures}.
The ETC (explore-then-commit) phase is from Line~\ref{line:exploration_start} to Line~\ref{line:exploration_end} and
the UCB (upper confidence bound) phase is from Line~\ref{line:ucb_start} to Line~\ref{line:ucb_end}.
In each exploration round of the ETC phase, the algorithm implements IE (individual exploration) and UE (united exploration) once for each arm $k$ in the whole arm set $[K]$.
In its UCB rounds, the algorithm chooses actions according to each arm's UCB index.

\begin{algorithm}[htb]
	\caption{\texttt{ETC-UCB}}
	\label{alg:etc_ucb}
	{\bfseries Input:} Arm set $[K]$, plays $N$, time horizon $T$, and parameter
	$\xi\in (0,\infty).$\\
	{\bfseries Initialization:} $t, \tau_t , \iota_t \gets 1, \hat{\bm{\mu}}_t, \hat{\bm{\nu}}_t\gets \bm{0}\in \mathbb{R}^K, \bm{m}_t^l,{\bm{m}}_t,\bm{n}_t\gets \bm{1}\in\mathbb{N}^K, \bm{m}_t^u\gets(N,\dots, N).$
	\begin{algorithmic}[1]
		\WHILE[{ETC phase}]{$\mathcal{S}\neq\emptyset$\label{line:exploration_start}}
		\STATE{$\mathcal{S}\gets\{k\in[K]:m_{k,t}^l\neq m_{k,t}^u\}.$}
		\STATE{\textsc{Individual Exploration}($\mathcal{S}, \hat{\bm{\mu}}_t, \tau_t, t$).}
		\STATE{\textsc{United Exploration}($\mathcal{S}, \hat{\bm{\nu}}, {\bm{m}}_{t}^l, {\bm{m}}_{t}^u,\xi, T,\iota_t, t$).}
		\ENDWHILE \label{line:exploration_end}
		\STATE{$m_{k,t} \gets \round{\hat{\nu}_{k,t} / \hat{\mu}_{k,t}}$ for all $k$ in $[K]$.}
		\STATE{$n_{k,t}\gets \tau_t$ for all $k$ in $[K]$.}
		\WHILE[UCB phase]{$t\le T$\label{line:ucb_start}}
		\STATE{$\text{UCB}_{k,t} \gets \hat{\mu}_{k,t} + \sqrt{\frac{2\log t}{n_{k,t}}}$ for all $k$ in $[K]$.
		}
		\STATE{Sort $\{\text{UCB}_{k,t}, k \in \mathcal{S}\}$ via a descending ordering $\sigma$, such that $\text{UCB}_{\sigma(k),t}$ is the $k$th largest.}
		\STATE{$\hat{L}_t\gets \argmin_n\left\{n:\sum_{k=1}^n m_{\sigma(k),t}\ge N\right\}.$}
		\STATE{\textsc{Exploitation}($\mathcal{S},{\bm{m}}_t^l,\text{\bf UCB}_{t},\hat{L}_t,\sigma(\cdot), \bm{n}_t, t$).}
		\ENDWHILE\label{line:ucb_end}
	\end{algorithmic}
\end{algorithm}

\subsection{Regret Upper Bound of \texttt{ETC-UCB}}\label{appsub:etc_ucb}
\begin{theorem}\label{thm:etc_ucb_bound}
	The \emph{\texttt{ETC-UCB}} in Algorithm~\ref{alg:etc_ucb} has the regret upper bound,
	\begin{equation}\label{eq:etc_ucb_upper}
		\begin{split}
			\ERT \le \sum_{k=L+1}^K\frac{8\Delta_{1,k} m_k\log T}{\Delta_{L,k}^2} + \frac{8\Delta_{1,L} m_L\log T}{\Delta_{L-1,L}^2}  + \sum_{k=1}^K \frac{49m_k^2 w_k}{\mu_k^2}\log{T}
			+ 6KN.
		\end{split}
	\end{equation}
	where $L$ is the smallest number of top arms that can cover all $N$ plays in Eq.(\ref{eq:critical_top_arms}).
\end{theorem}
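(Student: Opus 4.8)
The plan is to split the horizon at the end of the ETC phase, work throughout on the high-probability event $\mathcal{G}$ that every arm's capacity is estimated correctly (so that $m_{k,t}=m_k$ for all $k$ on and after the first UCB round), and bound the two phases separately. First I would fix the capacity estimator's confidence level at $\delta=2/T$; for $T$ large enough that this $\delta$ meets the hypothesis of Theorem~\ref{cor:m_sample_complexity}, that theorem says that once arm $k$ has accumulated $\ceil{(49m_k^2/\mu_k^2)\log T}$ rounds each of individual exploration (IE) and united exploration (UE) its lower and upper capacity bounds coincide at $m_k$ with probability at least $1-2/T$. Since each iteration of the ETC loop performs one IE round and one UE round on every arm whose bounds are not yet tight, arm $k$ leaves the exploration set after at most $(49m_k^2/\mu_k^2)\log T$ iterations; a union bound over the $K$ arms gives $\mathbb{P}(\mathcal{G})\ge 1-2K/T$, and on the complement the per-slot regret is at most $N\mu_1$, so the failure contribution to $\ERT$ is $O(KN)$, absorbed into the $6KN$ slack.

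Next I would bound the ETC-phase regret on $\mathcal{G}$. Charge to arm $k$, for each of the at most $(49m_k^2/\mu_k^2)\log T$ iterations it stays in the exploration set, the UE round devoted to it (instantaneous regret $f(\bm{a}^*)-m_k\mu_k$) together with its single pull inside an IE slot (instantaneous regret at most $\mu_1$, since one play is diverted from the best arm while the surplus plays can be parked on it); this is exactly $w_k=f(\bm{a}^*)-m_k\mu_k+\mu_1$ per (arm, iteration). Summing over $k$ yields the ETC-phase cost $\sum_{k=1}^K (49m_k^2/\mu_k^2)w_k\log T$, the third term of Eq.(\ref{eq:etc_ucb_upper}); the rounding overhead of the IE partitioning and the final partial iteration add a further $O(KN)$, again folded into $6KN$.

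For the UCB phase I would run the standard optimism argument, now with capacities known to be correct on $\mathcal{G}$. A slot's exploited action is optimal whenever the $\mathrm{UCB}$-sorted order places arms $1,\dots,L-1$ above every suboptimal arm and above arm $L$, and arm $L$ above every suboptimal arm; the complementary bad events are (i) some suboptimal $j>L$ has $\mathrm{UCB}_{j,t}\ge\mathrm{UCB}_{L,t}$, letting $j$ enter the exploited set, and (ii) $\mathrm{UCB}_{L,t}>\mathrm{UCB}_{L-1,t}$, corrupting the fill-cutoff $\hat L_t$ or the within-set allocation. By the usual Hoeffding/optimistic-index bookkeeping, event (i) for a fixed $j$ can persist only while $n_{j,t}\le 8\log T/\Delta_{L,j}^2$, and in each such slot arm $j$ receives at most $m_j$ plays, each costing at most $\mu_1-\mu_j$, for a total contribution $8\Delta_{1,j}m_j\log T/\Delta_{L,j}^2$; event (ii) is governed by the gap $\Delta_{L-1,L}$ and similarly contributes $8\Delta_{1,L}m_L\log T/\Delta_{L-1,L}^2$. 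The $O(1)$-per-arm terms from the events $\{\mathrm{UCB}_{k,t}<\mu_k\}$ for good arms sum to $O(KN)$ and are absorbed into $6KN$; adding the three phase bounds and the slack gives Eq.(\ref{eq:etc_ucb_upper}).

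The step I expect to be the main obstacle is the per-slot regret accounting in the UCB phase: one must show, semi-bandit style, that whenever the exploited action is suboptimal the loss is attributable to a specific over-played suboptimal arm $j$ and is at most $m_j\Delta_{1,j}$, while the count of such slots for each $j$ is controlled by $\Delta_{L,j}$ rather than by the smaller gap to the very best arm; and the within-optimal-set mis-ordering producing the $\Delta_{L-1,L}$ term needs a separate, slightly delicate treatment because it is not caused by pulling a suboptimal arm at all but by the capacity-filling cutoff $\hat L_t$ being computed from an imperfect ranking of arms $L-1$ and $L$.
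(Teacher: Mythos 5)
Your proposal follows essentially the same route as the paper's own proof: the ETC-phase cost is charged at $w_k$ per (arm, iteration) via the sample complexity bound of Theorem~\ref{cor:m_sample_complexity} with $\delta=2/T$, and the UCB-phase cost is split into the standard $n_{j,t}\le 8\log T/\Delta_{L,j}^2$ count for suboptimal arms (at $m_j\Delta_{1,j}$ per slot) plus the separate $\Delta_{L-1,L}$-gap term for mis-ranking arm $L$ against arm $L-1$, with the confidence-interval failure events and ETC failure probability absorbed into the $6KN$ constant. The details you flag as delicate (semi-bandit loss attribution and the $\hat L_t$ cutoff) are handled in the paper exactly as you anticipate, so no gap.
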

\begin{proof}[Proof of Theorem~\ref{thm:etc_ucb_bound}]


	The regret analysis contains two parts of the ETC phase and the UCB phase.
	The ETC phase (in Line~\ref{line:exploration_start}-\ref{line:exploration_end}) repeatedly applies IE and UE to accumulate observations so as to accurately estimate reward capacities. We apply the sample complexity result in Theorem~\ref{cor:m_sample_complexity} to bound the number of IEs and UEs (let \(\delta\gets 2/T\)). Thus the total cost in the ETC phase is 
	upper bounded as follows
	\begin{equation}\label{eq:etc_ucb_etcphase}
		\begin{split}
			\sum_{k=1}^K  w_k \frac{49m_k^2}{\mu_k^2}\log{T}  + \frac{2K}{T} NT
			\le
			\sum_{k=1}^K \frac{49m_k^2 w_k}{\mu_k^2}\log{T} + 2 KN.
		\end{split}
	\end{equation}

	Next,  with known capacities, we prove the regret cost in the UCB phase (in Line~\ref{line:ucb_start}-\ref{line:ucb_end}). 
	We first assume that for all arm \(k\) and time slots \(t\) in the UCB phase, 
	their ``per-load'' reward mean \({\mu}_{k}\) is \textit{always} inside the UCB index's corresponding
	the confidence interval \((\hat{\mu}_{k,t} -\sqrt{2\log t/n_{k,t}}, \hat{\mu}_{k,t} + \sqrt{2\log t/ n_{k,t}})\).
	With this assumption, we show that the number of times that a suboptimal arm \(k\) is played is at most \(\frac{8\log T}{\Delta_{L,k}^2}\). Because when \(n_{k,t} > \frac{8\log T}{\Delta_{L,k}^2}\), we have 
	\[
		\sqrt{\frac {2\log t}{n_{k,t}}} < \frac{\Delta_{L,k}}{2}. 
	\]
	
	If this suboptimal arm \(k\) is pulled when \(n_{k,t} > \frac{8\log T}{\Delta_{L,k}^2}\), it UCB index should be greater than the least favored arm \(L\)'s UCB index.
	However, this is impossible: \[
		\hat{\mu}_{k,t} + \sqrt{\frac{2\log t}{n_{k,t}}} \le \mu_k + 2\sqrt{\frac{2\log t}{n_{k,t}}} \le \mu_k + \Delta_{L,k}
		\le \mu_L \le \hat{\mu}_{L,t} + \sqrt{\frac{2\log t}{n_{L,t}}}.
	\]

	So, for these suboptimal arms, the total cost is upper bounded by \[
		\sum_{k=L+1}^K m_k\Delta_{1,k}\frac{8\log T}{\Delta_{L,k}^2} = \sum_{k=L+1}^K\frac{8\Delta_{1,k} m_k\log T}{\Delta_{L,k}^2},
	\]
	where the per play cost \(m_k\Delta_{1,k}\) considers the worst case that the best arm \(1\) is missed.

	Especially, when the number of times of pulling the least favored arm \(L\) is greater than \(\frac{8\log T}{\Delta_{L-1,L}^2}\),
	the algorithm (if chooses arm \(L\)) can identify it as the least favored arm and only assign \(\bar{m}_L\) number of plays to it. So, the additional cost caused by arm \(L\) is upper bounded as \(
		\frac{8\Delta_{1,L} m_L\log T}{\Delta_{L-1,L}^2}.
	\)

	We then prove that the expected total number of times that an arm's ``per-load'' reward mean is outside the confidence interval is finite:
	\[
		\begin{split}
			\E\left\{ \sum_{k\in [K]}\sum_{t\le T} \1{\mu_k \not\in \left(\hat{\mu}_{k,t} -\sqrt{\frac{2\log t}{n_{k,t}}}, \hat{\mu}_{k,t} + \sqrt{\frac{2\log t}{n_{k,t}}}\right)} \right\}
			\le 2K \sum_{t\le T} t^{-2} \le 4K,
		\end{split}
	\]
	where the first inequality holds for applying the Hoeffding's inequality as follows \[
		\P\left( \mu_k \not\in \left(\hat{\mu}_{k,t} -\sqrt{\frac{2\log t}{n_{k,t}}}, \hat{\mu}_{k,t} + \sqrt{\frac{2\log t}{n_{k,t}}}\right)\right) 
		\le  2 t^{-2}.
	\]

	We sum up the above costs in the UCB phase as follows \begin{equation}\label{eq:etc_ucb_ucbphase}
		\sum_{k=L+1}^K\frac{8\Delta_{1,k} m_k\log T}{\Delta_{L,k}^2} + \frac{8\Delta_{1,L} m_k\log T}{\Delta_{L-1,L}^2} + 4KN.
	\end{equation}
	



	Finally, from Eq.(\ref{eq:etc_ucb_etcphase}) and Eq.(\ref{eq:etc_ucb_ucbphase}), we obtain \texttt{ETC-UCB}'s regret upper bound as follows
	\begin{equation*}
		\begin{split}
			\ERT \le \sum_{k=L+1}^K\frac{8\Delta_{1,k} m_k\log T}{\Delta_{L,k}^2} + \frac{8\Delta_{1,L} m_k\log T}{\Delta_{L-1,L}^2}  + \sum_{k=1}^K \frac{49m_k^2 w_k}{\mu_k^2}\log{T}
			 + 6KN
		\end{split}
	\end{equation*}
	which confirms our statement in Appendix~\ref{sec:se_algorithm}'s Design Overview.
\end{proof}

\section{Addition Evaluation}\label{app:addition-simulation}

\subsection{Real World Application in 5G \& 4G Base Station Selection}\label{app:real_world_simulation}

In this section, we consider a real-world 5G \& 4G base station selection application
and show how our algorithms can be applied to it.
Since 2019, 5G base stations started to serve consumers and will coexist with
4G base stations for a long time.
5G and 4G base stations' performance were measured in~\citet{narayanan2020first}.
They shown 5G station's throughput (THR) is about \(8\) times higher than 4G stations',
and 5G station's round-trip time (RTT) latency is 4 times shorter than 4G stations'.
From \citet{narayanan2020first}'s results, we consider
a real-world scenario which contains two 5G base stations (underlined) and eighteen 4G base stations (in total \(K=20\))
and eighteen smartphones (\(N=18\)).
Their parameters are in Table~\ref{tab:5g-4g}.
Each base station is regarded as one arm,
and each smartphone phone is represented as a play.
Base stations' RTT latencies' reciprocals are mapped to arms' ``per-load'' Bernoulli reward means.
A station's throughput (THR) is rounded to their closed integer as the arm's finite reward capacity.

\begin{table}[htp]
	\centering
	\caption{The 5G \& 4G Base Station Selection Environment}
	\label{tab:5g-4g}
	\begin{tabular}{|c||cccccccccc|}
		\hline
		RTT (100ms)   & \underline{{1.2}} & \underline{{1.1}} & 4.2 & 4.9 & 4.5 & 3.4 & 5.0 & 4.2 & 5.1 & 3.9 \\ \hline
		THR (100Mbps) & \underline{{8.2}} & \underline{{8.1}} & 1.2 & 1.2 & 1.4 & 1.1 & 1.3 & 1.2 & 1.1 & 1.4 \\
		\hline
		RTT (100ms)   & 4.8               & 5.7               & 3.7 & 4.7 & 3.2 & 5.1 & 4.4 & 5.1 & 4.9 & 4.1 \\ \hline
		THR (100Mbps) & 1.0               & 1.1               & 1.2 & 1.0 & 1.3 & 1.2 & 1.0 & 1.1 & 1.3 & 1.2 \\
		\hline
	\end{tabular}
\end{table}

We apply our three algorithms \texttt{OrchExplore}, \texttt{MP-SE-SA} (\(\gamma=0.1\)), and \texttt{ETC-UCB} to the scenario.
Their performance is in Figure~\ref{fig:5g-4g-selection}.
Other implicitly-learning-capacity algorithms
--- regard
each \(N\)-play allocation (action)
as an independent arm ---
is infeasible in this scenario.
Because the total number of these combinatorial actions is greater than \(10^9\)!
Figure~\ref{fig:5g-4g-selection} shows all of our three algorithms achieve the sub-linear regret performance.
From the total throughput aspect, the \texttt{OrchExplore} algorithm outperforms \texttt{MP-SE-SA} in a moderate degree,
while both are much better than the \texttt{ETC-UCB} two-phase algorithm.

\begin{figure}[htb]
	\centering
	\subfloat[Regret\label{subfig:5g-4g-regret}]{\includegraphics[width=0.25\columnwidth]{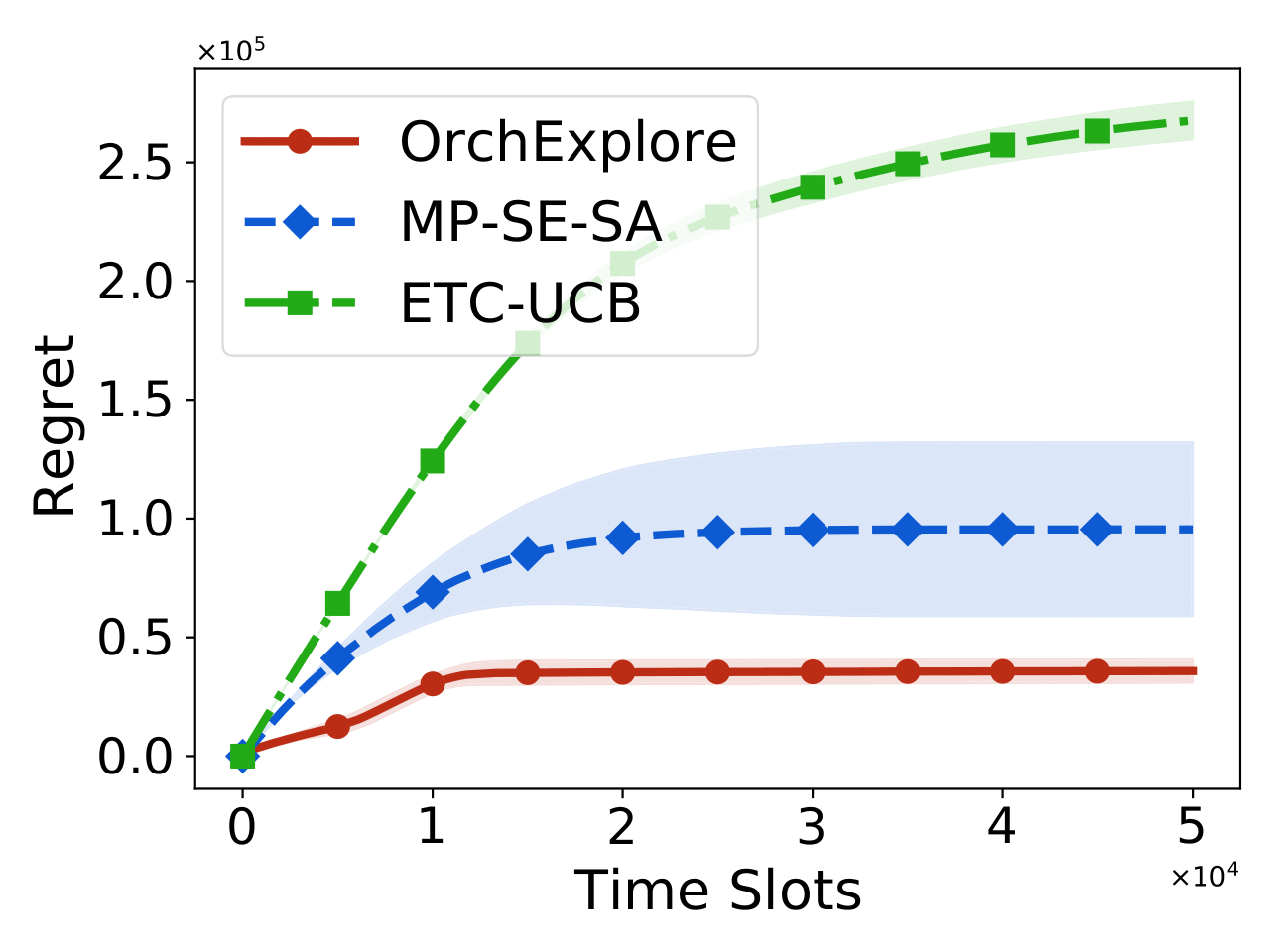}}
	\subfloat[Total Throughput\label{subfig:5g-4g-reward}]{\includegraphics[width=0.25\columnwidth]{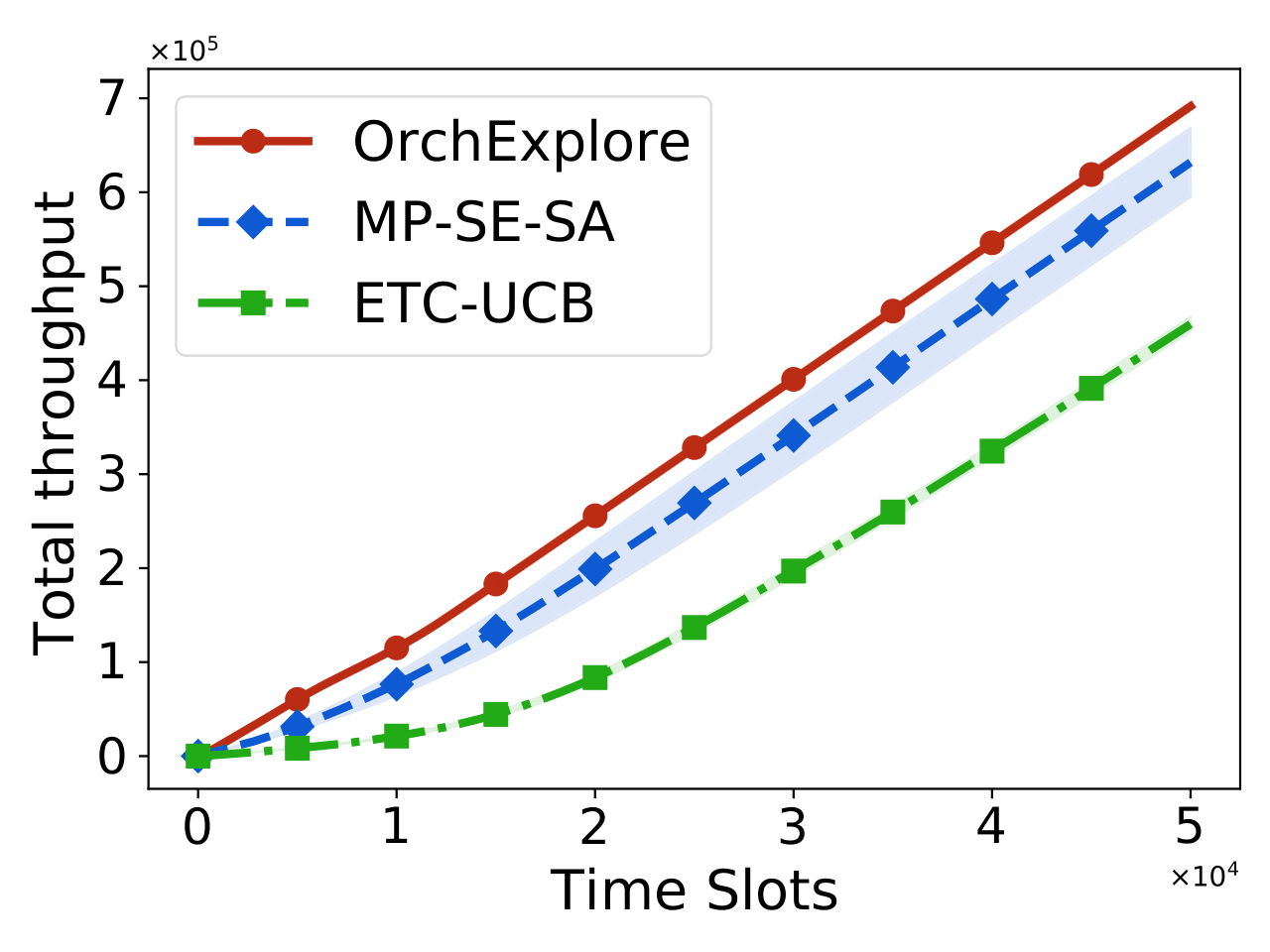}}
	\caption{The 5G \& 4G Base Station Selection}
	\label{fig:5g-4g-selection}
\end{figure}

\subsection{In Gaussian Distributions with \(1/2\) Variance}\label{appsub:guassian-simulation}
In Figure~\ref{fig:eval-Gaussian}, we present the simulation results of Gaussian ``per-load'' reward case
under the same parameters as Section~\ref{sec:simulation}.
It is a complement of Section~\ref{sec:simulation}'s Bernoulli ``per-load'' reward evaluations.
The Gaussian reward causes larger variance than the Bernoulli case.
Their average regret performance is similar.
That validates Section~\ref{sec:simulation}'s evaluation insights.
\begin{figure}[htb]
	\centering
	\subfloat[\texttt{OrchExplore} \textit{vs.} \texttt{MP-SE-SA} \textit{vs.} \texttt{ETC-UCB}\label{subfig:fine_grained_update-Gaussian}]{\includegraphics[width=0.25\columnwidth]{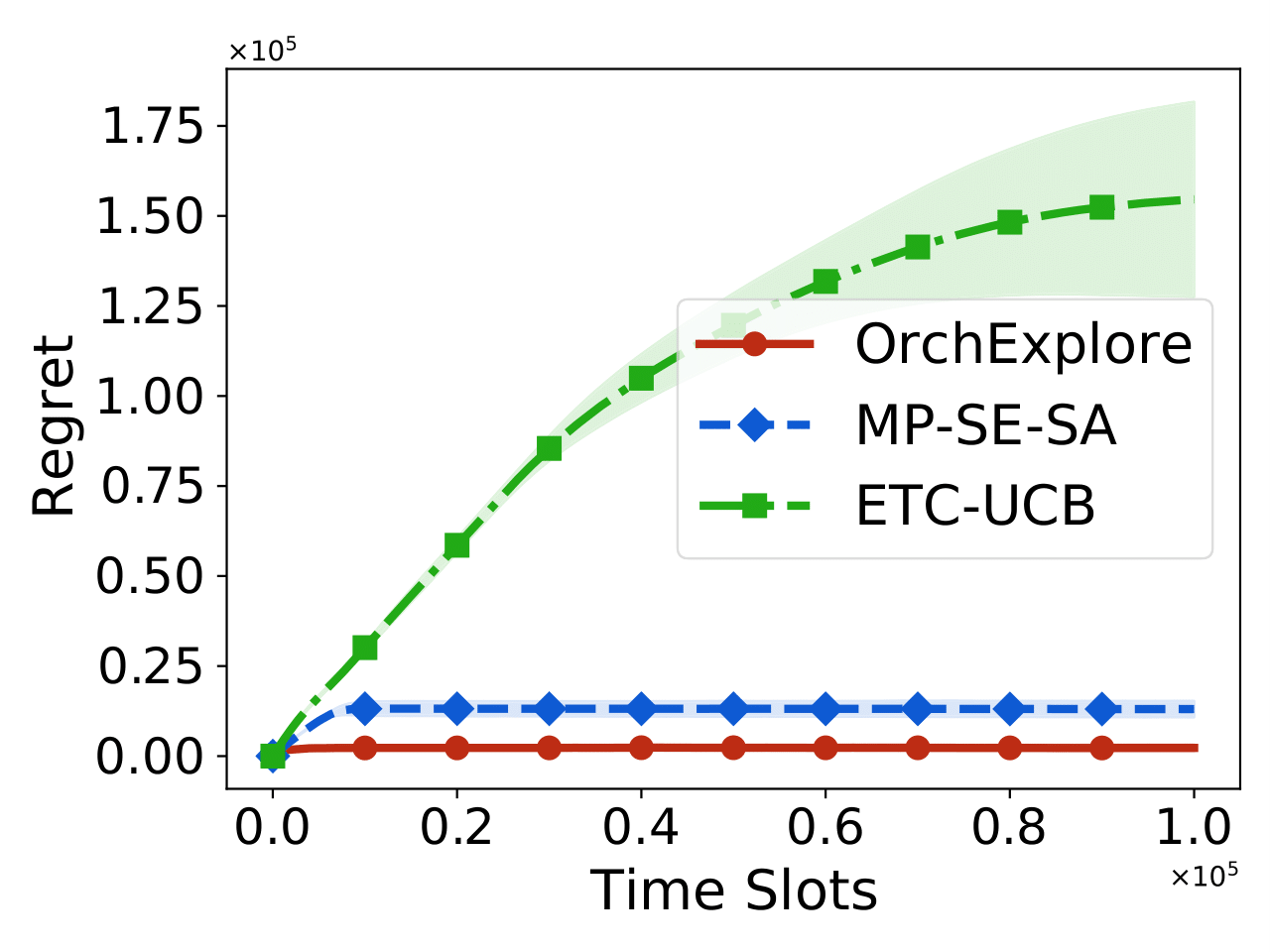}}
	\subfloat[Improvement of UCI\label{subfig:hfd_vs_uci-Gaussian}]{\includegraphics[width=0.25\columnwidth]{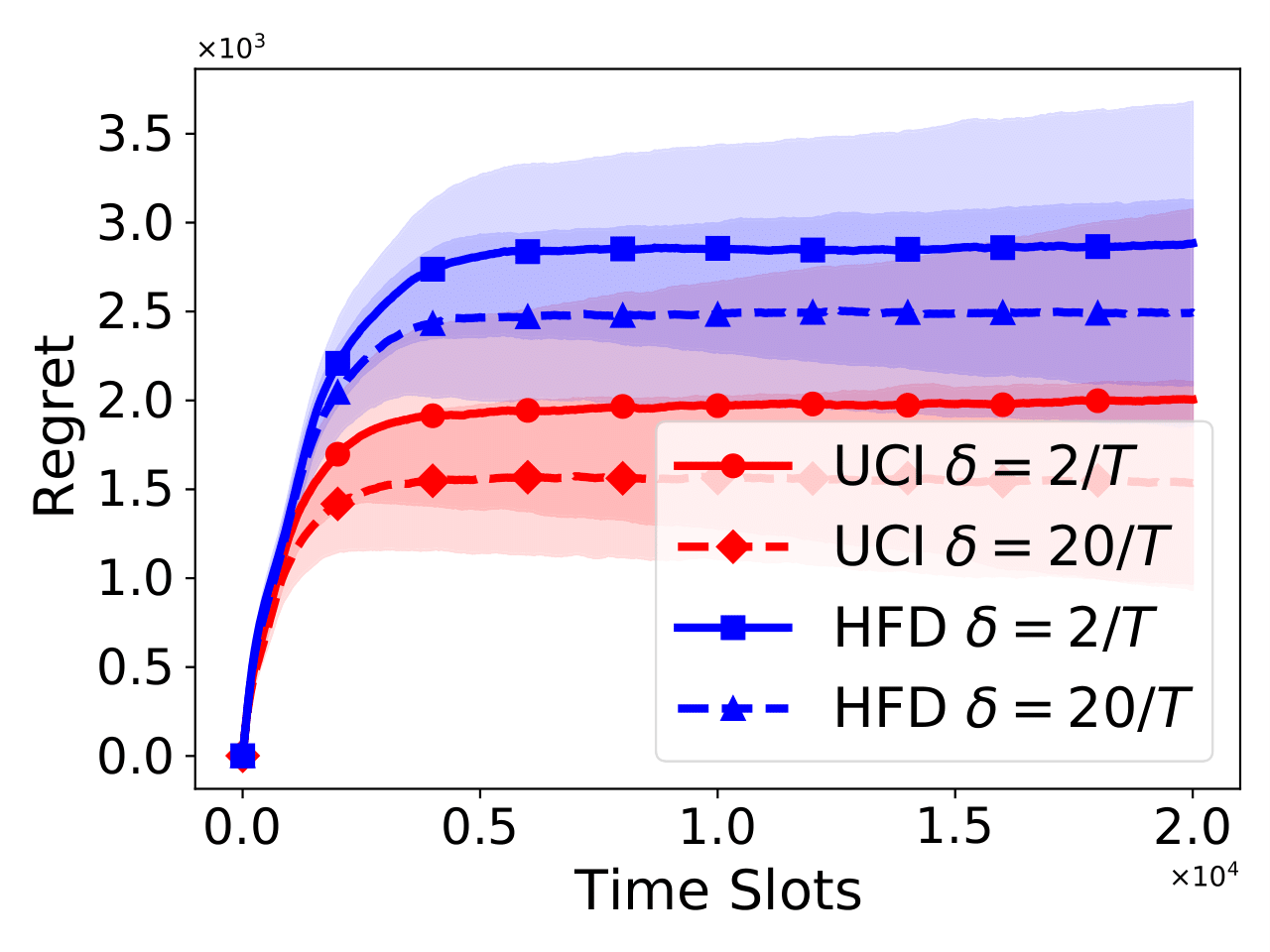}}
	\subfloat[Price of learning $m_k$\label{subfig:know_or_not-Gaussian}]{\includegraphics[width=0.25\columnwidth]{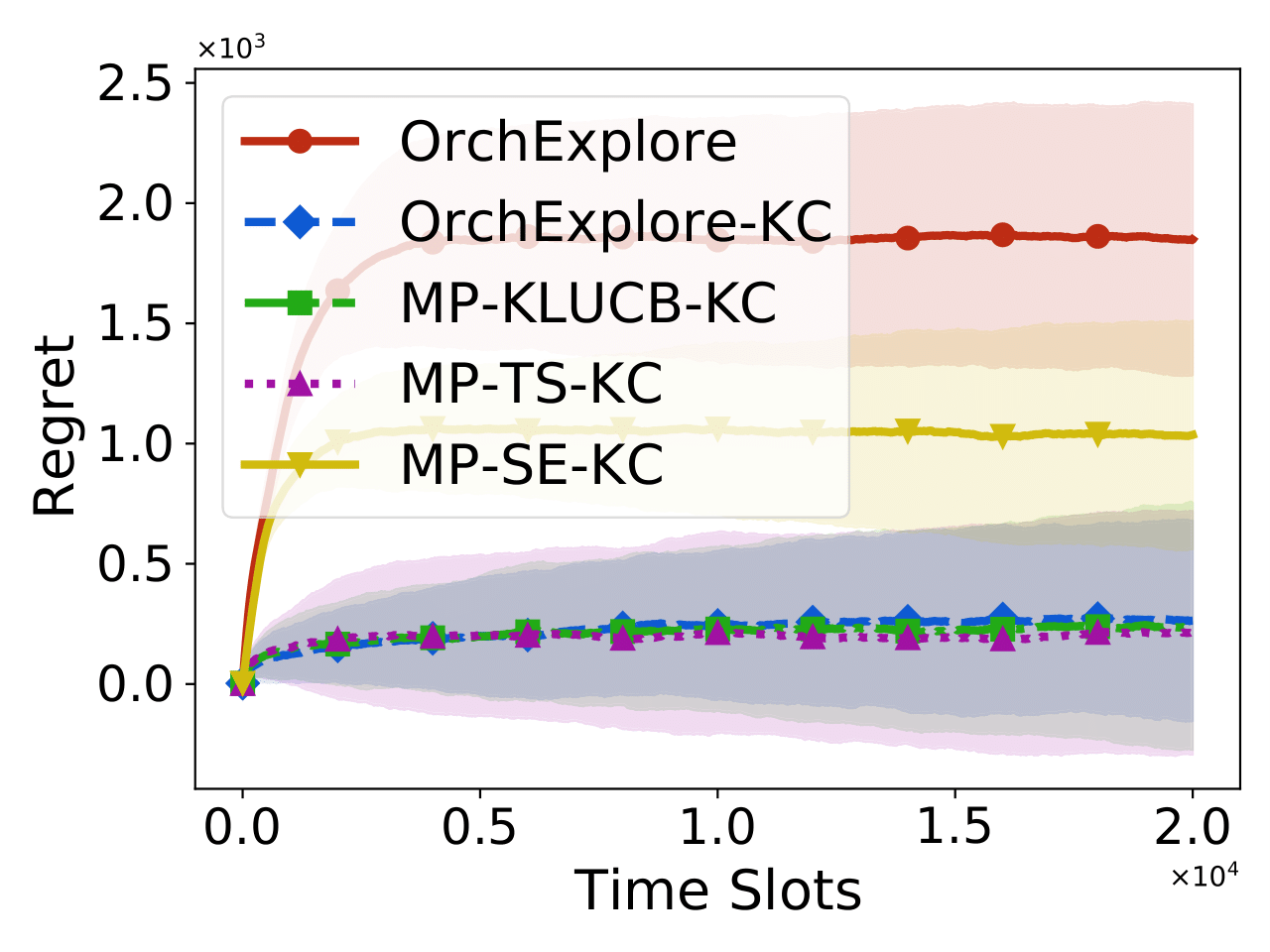}}
	\subfloat[Implicitly learn $m_k$\label{subfig:learn_or_not-Gaussian}]{\includegraphics[width=0.25\columnwidth]{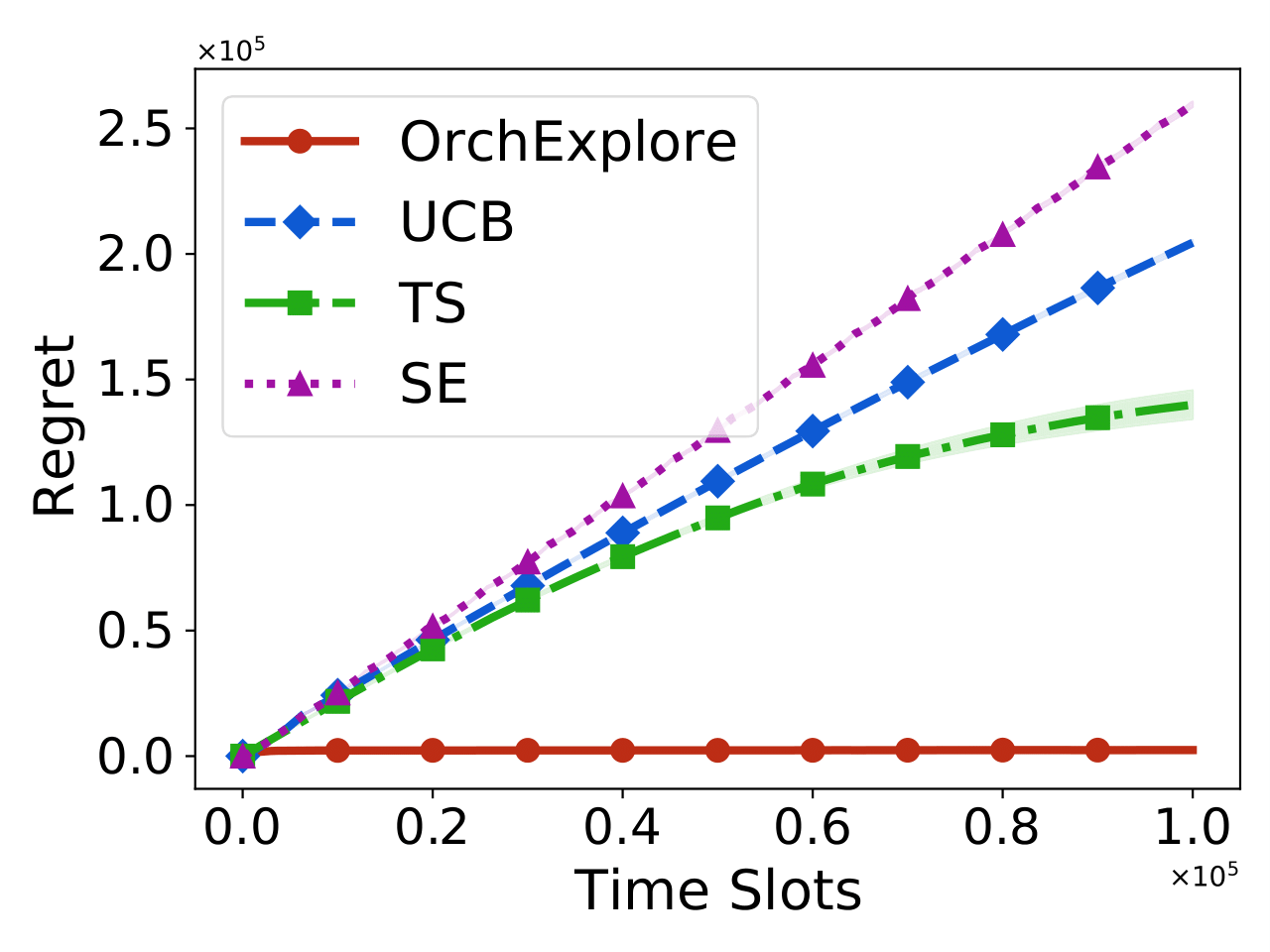}}
	\caption{Evaluation under Gaussian Distributions with \(1/2\) Variance}
	\label{fig:eval-Gaussian}
\end{figure}

\section{Hoeffding's Inequality Based Confidence Interval Design}\label{sec:hfd_ci}


If replacing the uniform concentration inequality in Lemma~\ref{lma:uci_basic} with Hoeffding's inequality,
we can obtain the following three results. Each of them corresponds to our UCI results. There are two key differences:
(1)the $\phi(x,\delta)$ function of UCI is replaced by $\rho(x, \delta)$ defined in Lemma~\ref{lma:ici_hfd};
(2) Lemma~\ref{lma:ici_hfd} is an instantaneous confidence interval only holding for one single pair of $(\tau_{k,t},\iota_{k,t})$.
Their proofs are almost the same as Appendix~\ref{app:capacity_proof}'s.

\begin{lemma}\label{lma:ici_hfd}
	Denote the function
	$
		\rho(x,\delta)\triangleq \sqrt{\log (2/\delta)/ 2x},
	$
	When $\rho(\tau_{k,t},\delta) + \rho(\iota_{k,t},\delta) < \mu_k$, the event
	\begin{equation*}
		\left\{m_k \in
		\left[\frac{\hat{\nu}_{k,t}}{\hat{\mu}_{k,t} + \rho(\tau_{k,t},\delta) + \rho(\iota_{k,t},\delta)},
			\frac{\hat{\nu}_{k,t}}{\hat{\mu}_{k,t} - \rho(\tau_{k,t},\delta) - \rho(\iota_{k,t},\delta)}\right] \right\}
	\end{equation*}
	holds with probability of at least $1-\delta$.
\end{lemma}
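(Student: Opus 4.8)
The plan is to replay the proof of Lemma~\ref{lma:ufc_m} given in Appendix~\ref{appsub:uniform-confidence-interval}, with the uniform-in-time concentration bound (Lemma~\ref{lma:uci_basic}) swapped for the ordinary Hoeffding inequality applied at the realized sample counts $\tau_{k,t}$ and $\iota_{k,t}$. Since $X_k$ is supported in $[0,1]$, Hoeffding's inequality applied to the $\tau_{k,t}$ i.i.d.\ ``per-load'' samples underlying $\hat\mu_{k,t}$ yields $\P(|\hat\mu_{k,t}-\mu_k|\ge\rho(\tau_{k,t},\delta))\le\delta/2$, where $\rho(x,\delta)=\sqrt{\log(2/\delta)/(2x)}$; likewise, writing the ``full-load'' reward as $m_k$ times a $[0,1]$ variable and estimating $\mu_k$ by $\hat\nu_{k,t}/m_k$, Hoeffding gives $\P(|\hat\nu_{k,t}-m_k\mu_k|\ge m_k\rho(\iota_{k,t},\delta))\le\delta/2$ (the numerical constants are arranged exactly as in Appendix~\ref{appsub:uniform-confidence-interval}, splitting the budget $\delta$ into $\delta/2+\delta/2$). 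Because Hoeffding is invoked only at the single pair $(\tau_{k,t},\iota_{k,t})$ rather than uniformly over all sample counts, the resulting statement is an instantaneous confidence interval, not a time-uniform one --- this is the only structural difference from Lemma~\ref{lma:ufc_m}, and it is also why the $(1+1/x)$ factor and the $\log\sqrt{x+1}$ term disappear from the deviation function.

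Next I would carry out the same algebraic manipulation as in the UCI proof. On the intersection of the two good events, $|\hat\nu_{k,t}-m_k\mu_k|\le m_k\rho(\iota_{k,t},\delta)$ rearranges to $m_k(\mu_k-\rho(\iota_{k,t},\delta))\le\hat\nu_{k,t}\le m_k(\mu_k+\rho(\iota_{k,t},\delta))$, hence $m_k\in[\hat\nu_{k,t}/(\mu_k+\rho(\iota_{k,t},\delta)),\,\hat\nu_{k,t}/(\mu_k-\rho(\iota_{k,t},\delta))]$ as soon as the denominators are positive; substituting $\mu_k\le\hat\mu_{k,t}+\rho(\tau_{k,t},\delta)$ into the left endpoint and $\mu_k\ge\hat\mu_{k,t}-\rho(\tau_{k,t},\delta)$ into the right endpoint collapses this to the claimed interval $[\hat\nu_{k,t}/(\hat\mu_{k,t}+\rho(\tau_{k,t},\delta)+\rho(\iota_{k,t},\delta)),\,\hat\nu_{k,t}/(\hat\mu_{k,t}-\rho(\tau_{k,t},\delta)-\rho(\iota_{k,t},\delta))]$. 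A union bound over the two failure events, each of probability at most $\delta/2$, then gives the advertised confidence $1-\delta$.

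The only point needing attention --- and hence what I would flag as the main (and rather mild) obstacle --- is the positivity of the denominators $\mu_k-\rho(\iota_{k,t},\delta)$ and $\hat\mu_{k,t}-\rho(\tau_{k,t},\delta)-\rho(\iota_{k,t},\delta)$, which is exactly what makes the rearrangement and the monotone substitution legitimate. This is precisely the role of the hypothesis $\rho(\tau_{k,t},\delta)+\rho(\iota_{k,t},\delta)<\mu_k$ (used together with the concentration event, which forces $\hat\mu_{k,t}$ to be close to $\mu_k$), and it parallels the footnoted side condition in Lemma~\ref{lma:ufc_m}. Everything else is a verbatim transcription of Appendix~\ref{appsub:uniform-confidence-interval} with $\phi$ replaced throughout by $\rho$, which is why this lemma's proof is ``almost the same'' as the one already given in Appendix~\ref{app:capacity_proof}.
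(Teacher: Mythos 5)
Your proposal is correct and follows exactly the route the paper intends: the paper gives no separate proof of Lemma~\ref{lma:ici_hfd}, stating only that the argument is ``almost the same'' as the UCI proof in Appendix~\ref{app:capacity_proof}, which is precisely your replay of that derivation with the time-uniform bound of Lemma~\ref{lma:uci_basic} replaced by a single-instance Hoeffding bound and $\phi$ replaced by $\rho$. Your observations about the loss of uniformity and the positivity of the denominators match the paper's own remarks, so there is nothing further to add.
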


\begin{lemma}
	For any arm $k$, if
	\[
		\ceil{\frac{\hat{\nu}_{k,t}}{\hat{\mu}_{k,t} + \rho(\tau_{k,t},\delta) + \rho(\iota_{k,t},\delta)}} =
		\floor{\frac{\hat{\nu}_{k,t}}{\hat{\mu}_{k,t} - \rho(\tau_{k,t},\delta) - \rho(\iota_{k,t},\delta)}},
	\]
	then
	the probability of correctly estimating $m_k$ is at least $1-\delta$, i.e.,
	$
		\mathbb{P}({\hat{m}_{k,t}} = m_k) \ge 1 - \delta.
	$
\end{lemma}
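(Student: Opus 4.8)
The plan is to mirror the proof of Lemma~\ref{lma:learning_m_criterion}, replacing the uniform confidence bound of Lemma~\ref{lma:ufc_m} by the instantaneous Hoeffding-based bound of Lemma~\ref{lma:ici_hfd}. First I would invoke Lemma~\ref{lma:ici_hfd}: on an event $\mathcal{G}$ of probability at least $1-\delta$, the true capacity satisfies
\[
m_k \in \left[\frac{\hat{\nu}_{k,t}}{\hat{\mu}_{k,t} + \rho(\tau_{k,t},\delta) + \rho(\iota_{k,t},\delta)},\ \frac{\hat{\nu}_{k,t}}{\hat{\mu}_{k,t} - \rho(\tau_{k,t},\delta) - \rho(\iota_{k,t},\delta)}\right].
\]
Note the hypothesis that $\ceil{\cdot}$ equals $\floor{\cdot}$ as a genuine equality of integers already forces $\hat{\mu}_{k,t} - \rho(\tau_{k,t},\delta) - \rho(\iota_{k,t},\delta) > 0$ (otherwise the right endpoint would not be a finite number whose floor is an integer), so the side condition $\rho(\tau_{k,t},\delta)+\rho(\iota_{k,t},\delta) < \hat{\mu}_{k,t}$ needed to apply Lemma~\ref{lma:ici_hfd} is met and the interval is well defined.

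Second, I would use integrality. Since $m_k \in \mathbb{N}_+$, membership in the interval above gives $m_k \ge \ceil{\hat{\nu}_{k,t}/(\hat{\mu}_{k,t} + \rho(\tau_{k,t},\delta) + \rho(\iota_{k,t},\delta))}$ and simultaneously $m_k \le \floor{\hat{\nu}_{k,t}/(\hat{\mu}_{k,t} - \rho(\tau_{k,t},\delta) - \rho(\iota_{k,t},\delta))}$. Under the lemma's hypothesis these two integer bounds coincide, so on $\mathcal{G}$ the value $m_k$ is pinned to that single common integer. Since the estimator $\hat{m}_{k,t}$ is defined exactly as in Lemma~\ref{lma:learning_m_criterion} to be that integer — the ceiling of the lower confidence bound — we get $\hat{m}_{k,t} = m_k$ on $\mathcal{G}$, and hence $\mathbb{P}(\hat{m}_{k,t} = m_k) \ge \mathbb{P}(\mathcal{G}) \ge 1-\delta$.

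The only real subtlety — and the step I would treat most carefully — is the bookkeeping around the confidence level and the quantifier. Lemma~\ref{lma:ici_hfd} already folds a union bound over the ``per-load'' and ``full-load'' estimators into its single $1-\delta$ guarantee (via a $\delta/2$ split, exactly as in Appendix~\ref{appsub:uniform-confidence-interval}), so no further union bound is needed here; and, unlike the uniform version, the guarantee holds only for the fixed pair $(\tau_{k,t},\iota_{k,t})$ at the given time $t$, so I must be careful \emph{not} to quantify over all $t$ in the conclusion. Everything else is the same routine interval rearrangement already carried out for the UCI case, so the proof is short once these points are settled.
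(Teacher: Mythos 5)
Your proposal is correct and takes essentially the same route the paper intends: the paper gives no separate proof of this lemma, saying only that the Hoeffding-based results' "proofs are almost the same as Appendix~\ref{app:capacity_proof}'s," i.e., invoke Lemma~\ref{lma:ici_hfd} and use integrality of $m_k$ to pin it to the unique integer in the (now instantaneous, not uniform) confidence interval. One small caveat: Lemma~\ref{lma:ici_hfd} as stated conditions on $\rho(\tau_{k,t},\delta)+\rho(\iota_{k,t},\delta)<\mu_k$ (the \emph{true} mean), so your remark that the ceiling-equals-floor hypothesis forces $\hat{\mu}_{k,t}-\rho(\tau_{k,t},\delta)-\rho(\iota_{k,t},\delta)>0$ does not literally discharge that side condition — but this empirical-versus-true-mean mismatch is inherited from the paper's own statements and does not change the argument.
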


\begin{corollary}
	For any arm $k$, if $\tau_{k,t}$ and $\iota_{k,t}$ satisfy
	\[
		\tau_{k,t}, \iota_{k,t} \geq
		(49m_k^2/2\mu_k^2)\log (2/\delta),
	\]
	then it hold that
	$
		\mathbb{P}({\hat{m}_{k,t}} = m_k) \ge 1 - \delta.
	$
\end{corollary}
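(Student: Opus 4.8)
The plan is to follow the proof of Theorem~\ref{cor:m_sample_complexity} in Appendix~\ref{appsub:proof-sample-complexity-upper-bound} essentially line for line, with the uniform-deviation function $\phi(\cdot,\delta)$ replaced throughout by the Hoeffding function $\rho(x,\delta)=\sqrt{\log(2/\delta)/2x}$. First I would fix the counts $\tau_{k,t},\iota_{k,t}$ and invoke Lemma~\ref{lma:ici_hfd}, which on an event of probability at least $1-\delta$ places $m_k$ in the interval $[\hat\nu_{k,t}/(\hat\mu_{k,t}+\rho(\tau_{k,t},\delta)+\rho(\iota_{k,t},\delta)),\ \hat\nu_{k,t}/(\hat\mu_{k,t}-\rho(\tau_{k,t},\delta)-\rho(\iota_{k,t},\delta))]$ (its precondition $\rho(\tau_{k,t},\delta)+\rho(\iota_{k,t},\delta)<\mu_k$ is checked below). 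By the subsequent lemma of Appendix~\ref{sec:hfd_ci}, the estimate $\hat m_{k,t}$ is then correct with probability at least $1-\delta$ as soon as the ceiling of the lower endpoint equals the floor of the upper endpoint; since $m_k$ is a positive integer, it therefore suffices to show that the stated lower bounds on $\tau_{k,t}$ and $\iota_{k,t}$ force the interval width to be at most $1$.

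Next I would reproduce the algebraic reduction. Imposing width $\le 1$, clearing denominators, and collecting terms gives
\[
(\rho(\tau_{k,t},\delta)+\rho(\iota_{k,t},\delta))^2 + 2\hat\nu_{k,t}\,(\rho(\tau_{k,t},\delta)+\rho(\iota_{k,t},\delta)) - \hat\mu_{k,t}^2 \le 0 .
\]
To turn this into a sufficient condition free of the realized empirical means, I would substitute the worst-case confidence bounds $\hat\nu_{k,t}\le m_k(\mu_k+\rho(\iota_{k,t},\delta))$ and $\hat\mu_{k,t}\ge \mu_k-\rho(\tau_{k,t},\delta)$ valid on the event above; exactly as in the $\phi$-version the resulting expression factors as
\[
\bigl((2m_k+2)\rho(\tau_{k,t},\delta)+(2m_k+1)\rho(\iota_{k,t},\delta)-\mu_k\bigr)\,\bigl(\mu_k+\rho(\iota_{k,t},\delta)\bigr)\le 0 ,
\]
and since the second factor is strictly positive this is equivalent to $(2m_k+2)\rho(\tau_{k,t},\delta)+(2m_k+1)\rho(\iota_{k,t},\delta)\le\mu_k$.

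Finally, because $4m_k+3\le 7m_k$ for every integer $m_k\ge 1$, it is enough to require $\rho(\tau_{k,t},\delta)\le\mu_k/(7m_k)$ and $\rho(\iota_{k,t},\delta)\le\mu_k/(7m_k)$ (this also yields $\rho(\tau_{k,t},\delta)+\rho(\iota_{k,t},\delta)\le 2\mu_k/7<\mu_k$, the precondition of Lemma~\ref{lma:ici_hfd}). For the Hoeffding function this inverts in closed form: $\sqrt{\log(2/\delta)/2x}\le\mu_k/(7m_k)$ is equivalent to $x\ge (49m_k^2/2\mu_k^2)\log(2/\delta)$, which is precisely the hypothesis of the corollary. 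Hence both deviation terms are small enough, the interval has width at most $1$, and $\mathbb{P}(\hat m_{k,t}=m_k)\ge 1-\delta$, as claimed.

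I do not anticipate a genuine obstacle; this is a routine transcription of the $\phi$-based argument. The two points worth a word of care are: (i) using the \emph{largest} admissible $\hat\nu_{k,t}$ and the \emph{smallest} admissible $\hat\mu_{k,t}$ in the substitution, so that the derived inequality is a bona fide sufficient condition and the factoring goes through with a positive trailing factor; and (ii) observing that, unlike Theorem~\ref{cor:m_sample_complexity}, no auxiliary restriction $\delta\le 2\exp(-\cdots)$ is needed here, because $\rho$---lacking the extra $(1+1/x)$ and $\sqrt{x+1}$ factors carried by $\phi$---inverts exactly; the price is that the guarantee is only instantaneous, i.e.\ tied to the single realized pair $(\tau_{k,t},\iota_{k,t})$ rather than being uniform over the horizon.
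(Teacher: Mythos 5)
Your proposal is correct and is essentially the paper's own argument: the paper gives no separate proof for this corollary, stating only that the proofs in Appendix~\ref{sec:hfd_ci} are ``almost the same'' as those of Appendix~\ref{app:capacity_proof}, and your line-for-line transcription of the proof of Theorem~\ref{cor:m_sample_complexity} with $\phi$ replaced by $\rho$ (including the factorization into $\bigl((2m_k+2)\rho(\tau_{k,t},\delta)+(2m_k+1)\rho(\iota_{k,t},\delta)-\mu_k\bigr)\bigl(\mu_k+\rho(\iota_{k,t},\delta)\bigr)$ and the bound $4m_k+3\le 7m_k$) is exactly what is intended. Your side remarks are also accurate, in particular that the exact invertibility of $\rho$ removes the auxiliary restriction on $\delta$ at the cost of the guarantee holding only for the single realized pair $(\tau_{k,t},\iota_{k,t})$.
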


We note that the above sample complexity upper bound only guarantees for one pair of $(\tau_{k,t},\iota_{k,t})$ while Lemma~\ref{cor:m_sample_complexity}'s is for \emph{all} pairs of $(\tau_{k,t},\iota_{k,t})$.
When comparing them, we need to convert Lemma~\ref{lma:ici_hfd} to uniform version, that is, replacing $\rho(x, \delta)$
with $\rho(x, \delta/T)$. In Figure~\ref{fig:width_comparison}, we compare function $\phi(t,T^{-1})$ and function $\rho(t, T^{-2})$'s decreasing rate. That implies our UCI has a sharper concentration.

\begin{figure}[htb]
	\centering
	\includegraphics[width=0.35\textwidth]{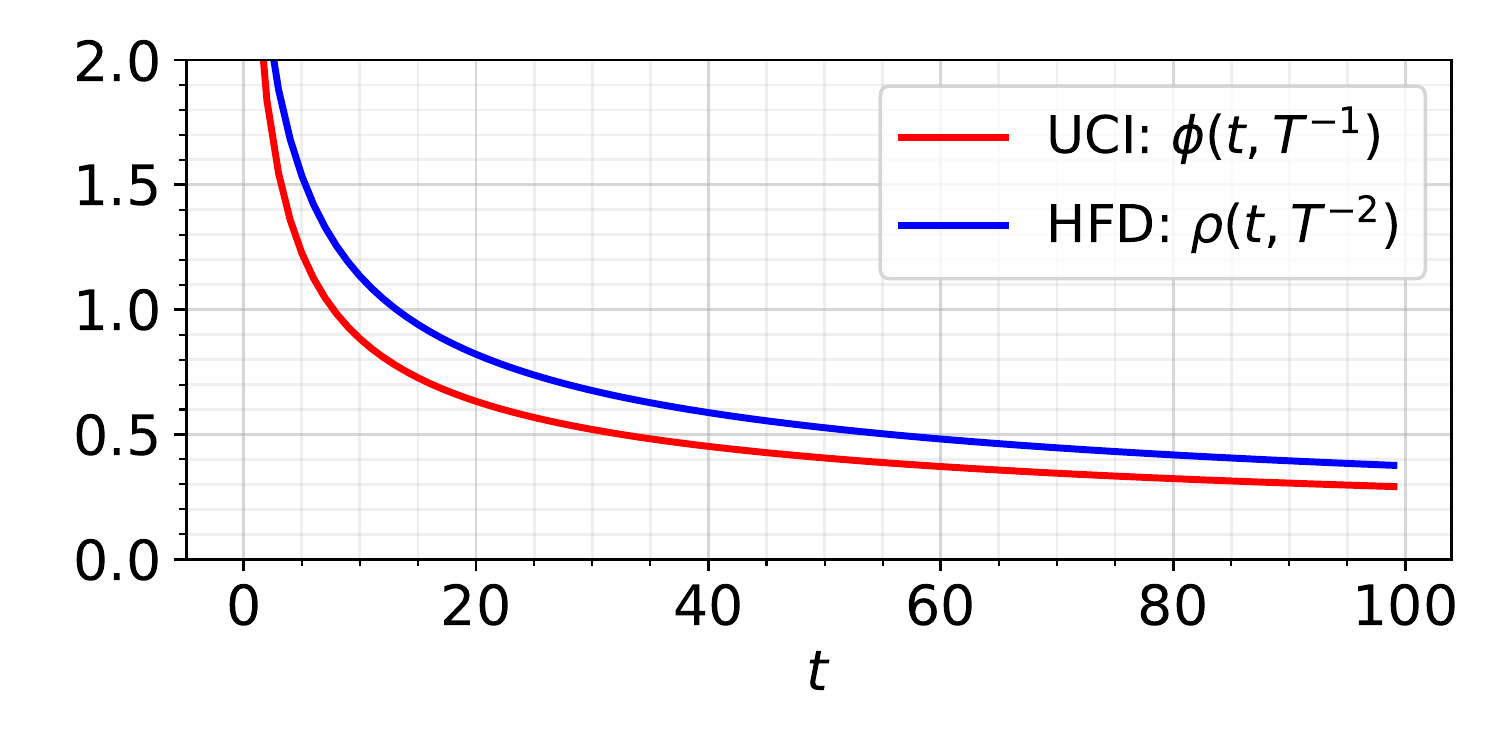}
	\caption{Our UCI's $\phi(t,T^{-1})$ v.s. the one based on Hoeffding's inequality (HFD)'s $\rho(t,T^{-2})$ ($T=10^{6}$).}
	\label{fig:width_comparison}
\end{figure}

\end{document}